\newcommand{\sem}[1]{\llbracket #1 \rrbracket}
\definecolor{navy}{RGB}{10,36,99}
\newtcolorbox{promptbox}[1][]{
  enhanced, breakable, colback=white, colframe=black!15,
  boxrule=0.6pt, arc=2mm, left=6pt, right=6pt, top=6pt, bottom=6pt,
  title={\textbf{LLM Prompt}}, fonttitle=\bfseries, #1
}
\newcommand{\linelabel}[1]{%
  \begingroup
  \edef\@currentlabel{\arabic{ALC@line}}%
  \label{#1}%
  \endgroup
}
\newcommand{\E}{\mathbb{E}}
\newcommand{\err}{\mathrm{err}}
\newcommand{\VSTAT}{\mathrm{VSTAT}}
\newcommand{\SDA}{\mathrm{SDA}}
\newcommand{\Unif}{\mathrm{Unif}}
\newcommand{\tr}{\mathrm{tr}}
\newcommand{\val}{\mathrm{val}}
\newcommand{\pv}{LLM-PV}
\newcommand{\alln}{\textbf{100}\rlap{\textbf{$^{\dagger}$}}}
\theoremstyle{plain}
\newtheorem{theorem}{Theorem}[section]
\newtheorem{lemma}[theorem]{Lemma}
\theoremstyle{definition}
\newtheorem{definition}[theorem]{Definition}
\theoremstyle{remark}
\newtheorem{remark}[theorem]{Remark}
\icmltitlerunning{LLM Priors for ERM over Programs}
  \let\matrix\pgfmatrix
\begin{document}

\twocolumn[
  \icmltitle{LLM Priors for ERM over Programs}

  \icmlsetsymbol{equal}{*}

  \begin{icmlauthorlist}
    \icmlauthor{Shivam Singhal}{equal,tamu}
    \icmlauthor{Priyadarsi Mishra}{equal,tamu}
    \icmlauthor{Eran Malach}{harvard}
    \icmlauthor{Tomer Galanti}{tamu}
  \end{icmlauthorlist}

  \icmlaffiliation{tamu}{Department of Computer Science \& Engineering, Texas A\&M University, TX, USA}
\icmlaffiliation{harvard}{Kempner Institute, Harvard University, MA, USA}

  \icmlcorrespondingauthor{Tomer Galanti}{galanti@tamu.edu}

  % You may provide any keywords that you find helpful for describing your
  % paper; these are used to populate the "keywords" metadata in the PDF but
  % will not be shown in the document
  \icmlkeywords{Machine Learning, ICML}

  \vskip 0.3in
]

% this must go after the closing bracket ] following \twocolumn[ ...

% This command actually creates the footnote in the first column listing the
% affiliations and the copyright notice. The command takes one argument, which
% is text to display at the start of the footnote. The \icmlEqualContribution
% command is standard text for equal contribution. Remove it (just {}) if you
% do not need this facility.

% Use ONE of the following lines. DO NOT remove the command.
% If you have no special notice, KEEP empty braces:
\printAffiliationsAndNotice{}  % no special notice (required even if empty)
% Or, if applicable, use the standard equal contribution text:
% \printAffiliationsAndNotice{\icmlEqualContribution}

\begin{abstract}
We study program-learning methods that are efficient in both samples and computation. Classical learning theory suggests that when the target admits a short program description, for example a short piece of ``Python code'', it can be learned from few examples by ERM over the program class. However, this approach relies on enumerating candidate programs, which is typically exponential in the description length; gradient-based training avoids this explicit search but, for some families of short programs, can require exponentially many samples to succeed. We propose \textsc{LLM-PV}, a propose-and-verify recipe that enables ERM-style selection over a discrete program class without exhaustive enumeration: a pretrained LLM induces a proposal distribution over candidate programs, each proposal is executed and scored on a held-out validation set, and the best program is selected, with no gradient updates or validation feedback used to adapt the sampling distribution. Across algorithmic tasks including parity variants, pattern matching, and primality testing, \textsc{LLM-PV} often recovers the exact underlying rule from a small labeled set and generalizes far beyond the training sequence lengths, while SGD-trained transformers, fine-tuning, in-context learning, and classical ML baselines can fit the training data yet fail to generalize reliably. Together, these results suggest that pretrained LLM priors can serve as effective search biases for ERM, narrowing the gap between statistical and computational efficiency.
\end{abstract}

\section{Introduction}

\begin{figure}[t]
\centering

\begin{adjustbox}{max width=\linewidth,center}
\begin{tikzpicture}[
  font=\normalsize,
  >=Latex,
  coltitle/.style={font=\bfseries, align=center},
  tag/.style={draw, rounded corners=3pt, inner sep=3pt, font=\scriptsize,
              text width=2.05cm, align=center, minimum height=10mm},
  tagAnaly/.style={tag, fill=blue!4,   draw=blue!70!black},
  tagCand/.style={tag, fill=cyan!10,   draw=cyan!60!black},
  tagRefn/.style={tag, fill=orange!12, draw=orange!70!black},
  tagHypo/.style={tag, fill=purple!10, draw=purple!60!black},
  tagVeri/.style={tag, fill=teal!10,   draw=teal!70!black},
  stepnode/.style={circle, draw, thick, fill=black!5, minimum size=6.8mm, inner sep=0pt},
  rule/.style={font=\scriptsize, draw, rounded corners=6pt, fill=gray!08, align=left,
               text width=4.9cm, inner sep=4pt, minimum height=10mm},
  why/.style={font=\scriptsize, draw, rounded corners=6pt, fill=blue!06, align=left,
              text width=4.9cm, inner sep=4pt, minimum height=10mm},
  link/.style={-Latex, line width=0.9pt},
  flow/.style={-Latex, dashed, gray!70, line width=0.7pt}
]
\matrix (M) [
  row sep={13.5mm, between origins},
  column sep=5.0mm,
  row 1/.style={nodes={yshift=-3mm}},
]{
  \node[coltitle] {Step};  &
  \node[coltitle] {Type};  &
  \node[coltitle] {Candidate function}; &
  \node[coltitle] {Rationale}; \\

  % \node[stepnode] (s1) {1}; &
  % \node[tagAnaly] (t1) {Analysis}; &
  % \node[rule] (r1) {{\bf Inspect \texttt{data}}\\ Confirm: $N=100$ and $\sum_i y_i=50$.}; &
  % \node[why] (w1) {Labels are balanced; start with simple baselines.}; \\

  \node[stepnode] (s1) {1}; &
  \node (t1) {%
    \tikz[baseline=(topAB.base)]{
      \node[tagCand, minimum height=5.4mm, inner sep=2.6pt] (topAB) {Candidate A};
      \node[tagCand, below=1.6pt of topAB, minimum height=5.4mm, inner sep=2.6pt] (botAB) {Candidate B};
    }%
  }; &
  \node[rule] (r1) {{\bf Parity rule} {\color{blue} (Train acc: 50\%)}\\
                    {\bf Digit-sum parity} {\color{blue} (Train acc: 49\%)}}; &
  \node[why] (w1) {No signal; try digit-sum thresholds.}; \\

  \node[stepnode] (s2) {2}; &
  \node[tagRefn] (t2) {Refinement A}; &
  \node[rule] (r2) {{\bf Digit-sum sweep}\\
                    Best over $t$: $\hat y(x)=\mathbf{1}\{\sum_i x_{i}\ge t\}$\\
                    {\color{blue} (Train acc: 54\%)}}; &
  \node[why] (w2) {Still weak; move to position-specific tests.}; \\

  \node[stepnode] (s3) {3}; &
  \node[tagCand] (t3) {Candidate C}; &
  \node[rule] (r3) {{\bf Single-position equality}\\
                    Best over $(i,v)$: $\hat y(x)=\mathbf{1}\{x_{i}=v\}$ \\ {\color{blue} (Train acc: 60\%)}.}; &
  \node[why] (w3) {Generalize to per-position thresholds.}; \\

  \node[stepnode] (s4) {4}; &
  \node[tagCand] (t4) {Candidate D}; &
  \node[rule] (r4) {{\bf Single-position threshold}\\
                    Best over $t$: $\hat y(x)=\mathbf{1}\{x_{0}\ge t\}$ \\{\color{blue} (Train acc: 67\%)}.}; &
  \node[why] (w4) {Try an XOR fit over digit-derived bits.}; \\

  \node[stepnode] (s5) {5}; &
  \node[tagAnaly] (t5) {Analysis}; &
  \node[rule] (r5) {{\bf $\mathbb{F}_2$ fit on digit parities}\\
                    Solve $Aw=y  \pmod2$; inconsistent.}; &
  \node[why] (w5) {Try coarser summaries (half-sums).}; \\

  \node[stepnode] (s6) {6}; &
  \node[tagCand] (t6) {Candidate E}; &
\node[rule] (r6) {{\bf Half-sum features}\\
Let $S_1=\sum_{i=1}^{50}x_i$, $S_2=\sum_{i=51}^{100}x_i$.\\
$\hat y=\mathbf{1}\{S_1>S_2\}$ {\color{blue}(Train acc: 45\%)}\\
$\hat y=\mathbf{1}\{(S_1-S_2)\ \text{odd}\}$ {\color{blue}(Train acc: 49\%)}.}; &

  \node[why] (w6) {Weak signal; pivot to modular structure.}; \\

  \node[stepnode] (s7) {7}; &
  \node[tagCand] (t7) {Candidate F}; &
  \node[rule] (r7) {{\bf Mod-$3$ test} \\ $\hat y(x)=\mathbf{1}\{x\bmod 3\neq 0\}$ {\color{blue} (Train acc: 75\%)}}; &
  \node[why] (w7) {Modular signal; look for a sharper modulus-based rule.}; \\

  \node[stepnode] (s8) {8}; &
  \node[tagAnaly] (t8) {Analysis}; &
  \node[rule] (r8) {{\bf Residue classifier}\\
                    Best $m\in[2,300]$ majority by $x\bmod m$ {\color{blue} (Train acc: 98\%)}.}; &
  \node[why] (w8) {Points to a sieve-like decision rule.}; \\

  \node[stepnode] (s9) {9}; &
  \node[tagRefn] (t9) {Refinement B}; &
  \node[rule] (r9) {{\bf Divisibility probe} \\ $\hat y(x)=\mathbf{1}\{\neg(3|x)\wedge\neg(7|x)\}$ \\ {\color{blue} (Train acc: 84\%)}}; &
  \node[why] (w9) {Test more divisibility rules.}; \\

  \node[stepnode] (s10) {10}; &
  \node (t10) {%
    \tikz[baseline=(topHV.base)]{
      \node[tagHypo, minimum height=5.4mm, inner sep=2.6pt] (topHV) {Hypothesis};
      \node[tagVeri, below=1.6pt of topHV, minimum height=5.4mm, inner sep=2.6pt] (botHV) {Verification};
    }%
  }; &
  \node[rule] (r10) {{\bf Miller--Rabin primality test} {\color{blue} (Train acc: 100\%)}}; &
  \node[why] (w10) {Hypothesis confirmed.}; \\
};

\foreach \i in {1,...,10}{
  \draw[link] (w\i.west) -- (r\i.east);
}
\foreach \i [evaluate=\i as \j using int(\i+1)] in {1,...,9}{
  \draw[flow] (s\i.south) -- (s\j.north);
}
\end{tikzpicture}
\end{adjustbox}

\caption{\textbf{Reasoning trace for learning primality.}
The search starts from simple digit-level heuristics and progressively shifts toward modular structure: after parity and digit-sum baselines fail, residue-class predictors achieve high training accuracy, suggesting a sieve-like mechanism. A direct Miller--Rabin primality test then verifies the hypothesis and matches the labels perfectly.}
\label{fig:trace_prime}
\end{figure}

At its core, supervised learning asks for an algorithmic recipe for recovering structure from data:
given labeled examples $(x,y)$ from an unknown rule, produce a predictor that generalizes to unseen inputs.
A central lesson of classical learning theory is that \emph{simplicity} enables sample efficiency.
In particular, for a finite hypothesis class $\mathcal{H}$, empirical risk minimization (ERM) needs only
$\tilde O(\log|\mathcal{H}|)$ examples to generalize~\citep{10.1145/1968.1972,Vapnik1998}.
This immediately suggests a compelling perspective on \emph{program learning}:
if the target rule can be implemented by a short program of length $L$ over a token alphabet $\Sigma$,
then the effective hypothesis class size is at most $|\Sigma|^L$, and hence $\tilde O(L\log|\Sigma|)$ labeled examples suffice.

The obstacle is computation.
A direct implementation of ERM is length-first program enumeration: search all programs of length at most $L$ and return the first that fits the data.
But the number of candidates grows exponentially, $|\mathcal{L}_{\le L}|=\sum_{\ell=1}^{L}|\Sigma|^\ell=\Theta(|\Sigma|^L)$,
so even when verification is fast, the overall runtime scales as $\Theta(m|\Sigma|^L)$.
Thus, the classical route to sample-efficient learning of short programs is computationally infeasible for even moderate $L$
(e.g., $L{=}20$ and $|\Sigma|{=}10$ already yields $10^{20}$ candidates).

\textbf{Modern deep learning flips this trade-off.\enspace}
Instead of searching explicitly over programs, one typically trains high-capacity predictors by stochastic gradient descent (SGD)~\citep{robbins1951stochastic,Bottou2010LargeScale}.
This is computationally attractive: the optimization cost is often nearly linear in the data, and large models can fit complex training sets.
Yet computational tractability does not imply statistical efficiency on structured rule families.
In the statistical query (SQ) view~\citep{10.1145/293347.293351}, broad classes of gradient-based procedures can require exponentially many samples
on high--SQ-dimension targets such as parity- and cryptographic-like functions, despite the fact that these targets admit succinct program descriptions.
In other words, the difficulty is not that the rule is long or unstructured, but that the learning algorithm accesses it through a bottlenecked interface
(gradients, finite precision, local updates) that is poorly matched to discrete program structure.

\begin{tcolorbox}[colback=blue!5!white, colframe=blue!30!black, arc=1pt]
\centering
\textbf{{\em Can we recover the sample efficiency of ERM over short programs without paying the full cost of exhaustive enumeration?}}
\end{tcolorbox}

{\bf Contributions.\enspace}
We focus on a small-data regime where the target has a short discrete description, so finite-class ERM enjoys strong classical generalization guarantees, yet the two standard computational routes are unsatisfying: length-first enumeration is exponential in program length, while gradient-based training can be sample-inefficient on structured, SQ-hard families.
Our goal is not to introduce new PAC/SQ machinery or new constraint solvers. Instead, we make ERM over discrete program classes more computationally accessible by using pretrained LLMs only as a proposal mechanism for search, while keeping selection grounded in executable verification and held-out error. Concretely, we ask whether a pretrained LLM can serve as a useful \emph{search prior} that reduces the need for exhaustive enumeration without changing the learning objective.
Our contributions are:

\begin{itemize}[leftmargin=10pt, itemsep=0.35ex, topsep=0.25ex, parsep=0pt, partopsep=0pt]

\item \textbf{A clean compute--sample tension for short programs.\enspace}
Using classic SQ/PAC tools, we revisit a standard realizable family (planted $k$-parity) to formalize a regime that is central to small-data program learning.
Finite-class ERM over length-$L$ programs achieves the usual sample bound $m=\tilde O(L\log|\Sigma|)$ (Eq.~\ref{eq:pac}), but its runtime is exponential in $L$.
In contrast, finite-precision mini-batch \emph{coordinate} SGD may be computationally convenient per step, yet requires many fresh examples to reach nontrivial error on this family, namely $q=TB=\Omega(\sqrt{\binom{n}{k}}/2^b)$ (Prop.~\ref{prop:sgd-parity-lb-restated}). Together, these results isolate a concrete small-data setting where both naive enumeration and gradient-based training break down, for complementary reasons.

\item \textbf{ERM-style selection with an LLM proposal prior.\enspace} Inspired by classic work on \emph{programming by example} (PBE) and the growing literature on LLM-guided program synthesis, we introduce LLM-PV. LLM-PV is a propose-and-verify procedure that cleanly separates roles: (i) a pretrained LLM induces a data-dependent proposal distribution over candidate programs (or edits), (ii) an external verifier compiles and executes candidates, and (iii) we select by held-out validation error (Alg.~\ref{alg:llm-ktry}). This makes the role of pretrained reasoning precise: it serves as a \emph{search bias} that concentrates trials on plausible hypotheses, while the learning objective and selection rule remain similar to ERM. %: ERM-style model selection over a program class.

\item \textbf{Sample-efficient rule recovery and length generalization across algorithmic tasks.\enspace}
Across algorithmic tasks including parity variants, pattern matching, palindromes, Dyck-2, primality variants and pseudo-random functions, \textsc{LLM-PV} often recovers the underlying rule from 200 labeled examples
(Tabs.~\ref{tab:all_methods_n100} and~\ref{tab:pv_vs_all}),
and frequently outputs compact, input-length-invariant programs that generalize far beyond the training length
(Fig.~\ref{fig:length_generalize_ablation}). In the same settings, classic ML methods (e.g., SVM, XGBoost), SGD-trained transformers and standard adaptation baselines (fine-tuning and in-context learning) commonly fit the training set but fail to generalize reliably as input length or dimension grows
(Tab.~\ref{tab:pv_vs_all}, Fig.~\ref{fig:size_ablation}).

\item \textbf{Auditability of both the learned hypothesis and the search process.\enspace} The output is executable, human-readable code, and the full learning trajectory is inspectable:
we log the sequence of proposed programs/edits, execution outcomes, and verification diagnostics, yielding an auditable propose--verify trace (Fig.~\ref{fig:trace_prime} and Fig.~\ref{fig:trace_suite} in App.~\ref{app:traces}). This enables direct debugging of failure modes and mechanistic validation of successes, beyond reporting final accuracy.

\end{itemize}

%{\bf Theory vs.\ ERM \enspace} Our lower bounds show when SGD is inherently sample-inefficient on high–SQ-dimension classes. In contrast, classical finite-class ERM over short programs enjoys sample complexity on the order of $L\log|\Sigma|/\epsilon$ in the realizable case (Occam/MDL). LLM-PV is designed to retain ERM’s statistical guarantees by selecting among \emph{discrete programs}, while using LLM guidance to make the search \emph{computationally} tractable.

%\section{Related Work}

%{\color{red}VC theory, grokking, sample inefficiency papers on SGD (Jason Lee, Eran Malach, etc').}

\section{Related Work}

{\bf PAC learning, Occam's razor and short programs.\enspace}
We follow the the classical generalization theory, where finite-class ERM has sample complexity \(O(\log|\mathcal H|)\)~\citep{10.1145/1968.1972,vapnik1971uniform,Vapnik1998}.
The ``short program'' view instantiates Occam/MDL: a hypothesis encodable in \(L\) symbols over alphabet \(\Sigma\) admits bounds of order \(O(L\log|\Sigma|)\), up to confidence terms~\citep{blumer1987occams,86996,720554,10.5555/534258,10.1145/279943.279989}. The length-first search (Alg.~\ref{alg:program-enum}) realizes this ERM guarantee but incurs exponential time in description length, reflecting the classic universal-search trade-off~\citep{levin1973universal,solomonoff1964formal1,solomonoff1964formal2}.

{\bf Statistical query (SQ) learning and hardness of learning.\enspace} The SQ framework and its refinements~\citep{10.1145/293347.293351,10.1145/195058.195147,pmlr-v65-feldman17c,reyzin2020statisticalqueriesstatisticalalgorithms} yield lower bounds for many concept classes. Parity and related families have large SQ dimension under the uniform distribution, so any SQ learner needs exponentially many (tolerant) queries to achieve nontrivial correlation~\citep{10.1145/195058.195147,10.1145/3046674,klivans2007unconditional,klivans_et_al:LIPIcs.APPROX-RANDOM.2014.793,giapitzakis2025statisticalquerycomplexitylearning}. Intuitively, mini-batch SGD is itself approximately an SQ algorithm: each update averages a bounded statistic over samples~\citep{10.1145/3046674,doi:10.1137/16M1078471,NEURIPS2021_cc225865,barak2022hidden}. Hence, SQ lower bounds transfer directly to SGD, making its iteration complexity grow with the SQ dimension—exponentially for parities and pseudorandom families under the uniform distribution. Our analysis formalizes this connection, showing how SQ hardness induces exponential sample requirements for gradient-based methods.

{\bf Gradient-based training on algorithmic reasoning.\enspace} This SQ perspective aligns with extensive empirical evidence: even when a neural family can represent the target compactly, SGD-trained networks often fail on parity-like and compositional algorithmic tasks without strong inductive bias or large data~\citep{10.5555/3305890.3305998,pmlr-v80-safran18a,10.5555/3495724.3497433,barak2022hidden}. More broadly, work on neural \emph{trainability} separates representational power from optimization and sample efficiency~\citep{NEURIPS2019_5481b2f3,10.5555/3294996.3295003}, and phenomena such as ``grokking''—delayed generalization after long training—highlight a gap between statistical optima and what SGD finds in practice~\citep{PowerEtAl2022Grokking}. These observations motivate alternatives that retain finite-class guarantees while improving practical search efficiency.

{\bf LLM-guided search and iterative improvement.\enspace} Program synthesis and discrete search over programs has been studied for decades, and it has long been viewed as a difficult problem because the hypothesis space is combinatorial and brittle to small specification changes.
The LLM era introduced a new primitive: strong \emph{language-conditioned priors} over discrete objects, which can be used as proposals in a search loop. A prominent thread uses iterative propose--critique--revise driven by natural-language feedback (e.g., Self-Refine, Reflexion), and related approaches treat feedback as an optimization signal, including so-called textual gradients
\citep{10.5555/3666122.3668141,shinn2023reflexion,yuksekgonul2024textgradautomaticdifferentiationtext}.
In parallel, evolutionary and neuro-symbolic frameworks use LLMs to propose edits or modular building blocks that are refined via mutation and selection
\citep{pourcel2025selfimproving,novikov2025alphaevolvecodingagentscientific,AlphaEvolve2025,bhansali2024legolanguagemodelbuilding}.

{\bf Programming by examples and LLM-assisted program synthesis.\enspace} A complementary tradition studies \emph{programming by examples} (PBE) and inductive program synthesis, where the input is a finite set of input--output examples or tests and the goal is to output a program consistent with them
\citep{10.1145/1925844.1926423,10.1145/2666356.2594333,10.1007/978-3-642-31424-7_44,6679385,gulwani2017program}.
This literature emphasizes search over discrete program spaces (often within a DSL), including enumerative synthesis, version-space algebras, and CEGIS-style loops
\citep{10.1023/A:1025671410623,10.1145/1168919.1168907,polikarpova2016program,solarlezama2013program},
often complemented by ranking, inductive bias, or user interaction to resolve ambiguity
\citep{10.1145/1925844.1926423,gulwani2017program,6679385}.

Recent work revisits PBE in the LLM era. One early paradigm treats the LLM itself as the ``program'' via \emph{in-context learning} (ICL): demonstrations, often with chain-of-thought, induce task procedures without parameter updates \citep{NEURIPS2020_1457c0d6,min-etal-2022-rethinking,wei2022chain}, and this phenomenon has been analyzed theoretically \citep{pmlr-v202-von-oswald23a,aky,shen2024do,dewynter2025incontextlearninglearning}. Empirically, however, pretrained LLMs can be brittle when asked to synthesize \emph{explicit} programs from examples alone, with performance depending strongly on distributional match and fine-tuning \citep{NEURIPS2024_4eff61b7}. Other work proposes LLM-guided synthesis pipelines that retain execution-based verification while improving robustness via decomposition and compositional search \citep{khan2025llmguidedcompositionalprogramsynthesis}. Complementary benchmarks study \emph{interactive} settings, where an agent can query a hidden target function and refine solutions using feedback or counterexamples \citep{wei2025codearc}. More broadly, LLMs have been combined with program analysis and synthesis components to improve the reliability of generated code in practical settings \citep{10.1145/3510003.3510203}.

{\bf Our lens.\enspace} We adopt the propose--verify view from program synthesis and PBE and repurpose it for \emph{learning from i.i.d.\ data} with an explicit generalization objective. We treat the hypothesis space as a discrete class of executable programs and use a pretrained LLM only as an inductive-bias mechanism: a proposal distribution over candidate programs. Each proposal is verified by execution on the observed samples, and the final program is chosen by validation-based model selection, so the criterion is held-out error rather than mere consistency with a finite specification. In this sense, we inherit the verification discipline of synthesis, but replace interactive counterexamples and hand-crafted templates with distributional learning and ERM-style selection from $S$.

\section{Theoretical Analysis}

\subsection{Problem Setup} 

We study \emph{inductive program synthesis} (``program learning''): the target is a binary function \(y:\mathcal X \to \{\pm 1\}\) implemented by a short program in a fixed language, and the learner receives i.i.d.\ examples \(S=\{(x_i,y(x_i))\}_{i=1}^m\) with \(x_i\overset{\mathrm{i.i.d.}}{\sim}D\).
Throughout, we assume the \emph{realizable} setting, i.e., \(y\in \mathcal{L}\), where \(\mathcal{L}\) is the class of total functions computed by programs in the language (formalized below).

{\bf Language and semantics.\enspace} Fix a finite alphabet \(\Sigma\) and a programming language \(\mathcal L\subseteq \Sigma^\ast\).
Each string \(u\in\mathcal L\) has semantics \(\llbracket u\rrbracket:\mathcal X\rightharpoonup\{\pm 1\}\), a (possibly partial) function that may fail to compile or fail to halt.
We write \(\llbracket u\rrbracket(x)=\bot\) when \(u\) does not produce an output on \(x\).
Let $\mathcal C \;:=\; \bigl\{\, f:\mathcal X\to\{\pm 1\}\ :\ \exists\,u\in\mathcal L~~ \text{s.t. }~~ \llbracket u\rrbracket \text{ is total and } \llbracket u\rrbracket=f \,\bigr\}$. We denote the length of \(u\) by \(|u|\) (in symbols over \(\Sigma\)) and write \(\mathcal L_\ell:=\{u\in\mathcal L:\ |u|=\ell\}\). A program is considered total if it defines an output for every input—i.e., it never fails to compile and halts on all $x \in \mathcal{X}$, returning a  $\pm 1$ label.

{\bf Objective.\enspace} For a hypothesis \(h:\mathcal X\to\{\pm 1\}\), define population error
\(\mathrm{err}_{D}(h):=\Pr_{x\sim D}[h(x)\neq y(x)]\)
and empirical error
\(\mathrm{err}_{S}(h):=\tfrac{1}{m}\sum_{i=1}^m \mathbf 1\{h(x_i)\neq y_i\}\).
The goal is to output a program \(u\in\mathcal L\) whose total semantics \(\llbracket u\rrbracket\) attains small \(\mathrm{err}_D\).

{\bf Computational model.\enspace} When executing a candidate program \(u\) on input \(x\), we allow a time budget \(T\in\mathbb N\) per call; if \(u\) fails to compile or does not halt within time \(T\), we treat the outcome as \(\bot\) and reject \(u\) as a hypothesis.
This makes search procedures well-defined even when \(\llbracket u\rrbracket\) is partial.

{\bf Short-program regime.\enspace}
We will frequently analyze the \emph{short-program} subclass $\mathcal H_\ell := \bigl\{\llbracket u\rrbracket\ :\ u\in\mathcal L_\ell,\ \llbracket u\rrbracket \text{ total}\bigr\}$, where $\mathcal H = \bigcup_{\ell\ge 1}\mathcal H_\ell$ and compare (i) explicit search over \(\mathcal H_\ell\) (finite-class ERM) to (ii) gradient-based learners \(h_\theta\) drawn from a proxy hypothesis family \(\{h_\theta:\theta\in\Theta\}\).

\subsection{Program Enumeration}

One approach to program learning is \emph{program enumeration}. Given a language $\mathcal{L}$ with token alphabet $\Sigma$, enumerate candidate programs in a canonical order (e.g., by length and then lexicographically), and return the first program that is consistent with the training sample (Alg.~\ref{alg:program-enum} in App.~\ref{app:proof_pac}).

Despite its simplicity, enumeration is well known to be highly sample efficient. By standard PAC bounds for finite hypothesis classes~\citep{10.1145/1968.1972} (see also Cor.~2.3 of~\citep{10.5555/2621980}), if the target function can be implemented by some length-$L$ program in $\mathcal{L}$, then with probability at least $1-\delta$ over the selection of $S$, the returned program $h$ satisfies
\begin{equation}\label{eq:pac}
\err_D(h)\;\le\;m^{-1}[L\log|\Sigma|+\log(2L^2/\delta)].
\end{equation}
Thus, the required sample size scales with the program length of the target function. For completeness, we prove this inequality in App.~\ref{app:proof_pac}.

The main limitation is computational. In the worst case, length-first enumeration must examine essentially all programs up to length $L$, which is exponential in $L$: $\Omega(|\Sigma|^{L})$. For example, with an ASCII-sized alphabet $|\Sigma|=128$ and $L=10$, this corresponds to $128^{10}\approx 1.2\times 10^{21}$ candidate strings, making brute-force search infeasible in practice.

\subsection{Gradient-Based Optimization}

A alternative method is to search for a predictor in a large parametric class (e.g., neural networks) using gradient-based optimization. To reason about the information such methods obtain from data, we use the statistical-oracles viewpoint: many iterative optimization procedures can be modeled as algorithms that access the data distribution only through estimates of expectations of bounded functions and closely related sample-based variants~\citep{10.1145/293347.293351,doi:10.1137/16M1078471,10.1145/3046674,reyzin2020statisticalqueriesstatisticalalgorithms}.

{\bf Finite-precision SGD.\enspace} There are two standard ways to connect SGD to statistical-oracles: either inject independent noise so each update becomes a tolerant expectation query, or focus on finite-precision coordinate mini-batch SGD where each per-example coordinate contribution simulates a $1$-STAT$(b)$ query and $q=TB$ such queries can be simulated via $\VSTAT$ by the standard reduction \citep[Thm.~B.4]{doi:10.1137/16M1078471}. We therefore model mini-batch coordinate SGD as an algorithm that only accesses fresh data through \emph{finite-precision} (per-example) gradient information. At iteration $t$, the algorithm selects parameters $\theta_t$ and a coordinate $j_t\in[d]$, draws a fresh mini-batch $z_{t,1},\dots,z_{t,B}\sim D$, and updates using the $b$-bit quantized mini-batch coordinate gradient $\theta_{t+1}=\theta_t-\eta_t\,\widehat g^{(b)}_{t,j_t}\,e_{j_t}$, $\widehat g^{(b)}_{t,j_t}
~:=~
\frac{1}{B}\sum_{i=1}^B Q_b\left(\partial_{j_t}\ell(\theta_t,z_{t,i})\right)$,
% \begin{small}
% \[
% ,~~
% ,
% \]
% \end{small}
where $Q_b$ is a $b$-bit quantizer and $\eta_t$ is the step size. The only interaction with fresh samples is via these $b$-bit per-example gradient values. 

{\bf Planted $k$-parity.\enspace} A canonical SQ-hard family is parity under the uniform distribution, and closely related planted parity testing problems inherit the same ``low-correlation'' structure used in SQ lower bounds \citep{10.1145/792538.792543,10.1145/293347.293351,reyzin2020statisticalqueriesstatisticalalgorithms,10.1145/3046674}. Fix $n\ge 1$ and $k\in\{1,\dots,n\}$, let $\mathcal X=\{0,1\}^n$, and let $\mathcal S_k$ be the set of
$k$-sparse vectors in $\{0,1\}^n$. For each $s\in\mathcal S_k$, define the target $f_s(x)=(-1)^{\langle s,x\rangle}$ and the realizable distribution
$D_s$ on $\mathcal X\times\{\pm1\}$ by $x\sim\Unif(\mathcal X)$ and $y=f_s(x)$.
Let $N:=|\mathcal S_k|=\binom{n}{k}$ be the number of possible targets. 

The following is a technical application of using the two-step reduction using \citep[Thm.~B.4]{doi:10.1137/16M1078471} and the classic results from SQ learning \citep{reyzin2020statisticalqueriesstatisticalalgorithms}. For completeness, we provide a full formal analysis in App.~\ref{sec:sgd-1statb}.

\begin{restatable}{proposition}{sgdlowerbound}
\label{prop:sgd-parity-lb-restated}
Consider any (possibly adaptive, randomized) procedure that runs $T$ iterations of
$b$-bit mini-batch \emph{coordinate} SGD with batch size $B$ on the above family,
meaning that on fresh examples $(x,y)\sim D_s$ its only access to the data is through
$b$-bit quantized per-example coordinate gradients as in the update displayed above.
Let the procedure output a hypothesis $h:\mathcal X\to\{\pm1\}$, and write $q:=TB$
for the total number of fresh examples used.

If for every $s\in\mathcal S_k$ the procedure achieves nontrivial population error $\err_{D_s}(h)\le \frac14$ with probability at least $\beta \geq 5/6$, then necessarily $q = \Omega\left(\frac{\sqrt{N}}{2^b}\right)$.
\end{restatable}

Program enumeration (Alg.~\ref{alg:program-enum}) and finite-precision coordinate SGD fail for planted $k$-parity for complementary reasons. Let $N=\binom{n}{k}$ be the number of $k$-parities. Enumerating programs of length $\le L$ costs $\Theta(m,|\Sigma|^{L})$. A $k$-parity has a description of length $L=\Theta(\log N)$ (ASCII/Python), and Eq.~\ref{eq:pac} gives $m=\tilde O(L)=\tilde O(\log N)$ for nontrivial error, so the resulting work is $\Theta(|\Sigma|^{\Theta(\log N)}\log N)=N^{\Theta(1)}\log N$, i.e., polynomial in $N$. By contrast, finite-precision coordinate SGD runs in $\Theta(m\cdot \mathrm{Cost}\nabla)$ time on $m$ fresh examples, but Prop.~\ref{prop:sgd-parity-lb-restated} needs $m=\Omega(\sqrt{N}/2^{b})$ to reach population error $\le 1/4$, yielding total work $\Omega((\sqrt{N}/2^{b})\cdot \mathrm{Cost}\nabla)$. In short: enumeration is expensive in \emph{search} (about $N^{\Theta(1)}$ candidates), while SGD is expensive in \emph{data} (about $\sqrt{N}$ fresh examples), and both become infeasible as $N=\binom{n}{k}$ grows (for constant $k$, $N=\Theta(n^k)$ so costs scale as $n^{\Theta(k)}$ vs.\ $n^{k/2}$, up to $\mathrm{Cost}_\nabla$).

\section{New Lens}

\begin{figure}[t]
\centering
\begin{adjustbox}{width=\linewidth}
\begin{tikzpicture}[
  font=\small,
  block/.style={draw, rounded corners=6pt, fill=navy!12, align=center, minimum height=10mm, text width=3.6cm, inner sep=4pt},
  accept/.style={draw, rounded corners=6pt, fill=green!12, align=center, minimum height=10mm, text width=3.6cm, inner sep=4pt},
  stop/.style={draw, rounded corners=6pt, fill=red!12, align=center, minimum height=10mm, text width=3.6cm, inner sep=4pt},
  line/.style={-Latex, thick},
  flow/.style={-Latex, dashed, navy!70, line width=0.9pt}
]

% --- Nodes ---
\node[block]  (train)  at (0,0)    {Training set\\$S_{\mathrm{tr}}$};
\node[block]  (prompt) at (4.6,0)  {Build prompt\\$\Pi(S_{\mathrm{tr}})$};
\node[block]  (llm)    at (9.2,0)  {Sample candidate\\programs from LLM\\(temperature $\tau$)};
\node[block]  (compile)at (13.8,0) {Compile \& filter:\\discard invalid or\\duplicate programs};
\node[block]  (val)    at (18.4,0) {Evaluate on $S_{\mathrm{val}}$:\\compute $\err_{S_\mathrm{val}}(h)$};
\node[accept] (select) at (23.0,0) {Update best:\\keep $h^\star$ with\\lowest validation error};
\node[stop]   (stop)   at (9.2,-2.1) {Stop:\\after $k$ rounds};

% --- Forward flow ---
\draw[line] (train.east) -- (prompt.west);
\draw[line] (prompt.east) -- (llm.west);
\draw[line] (llm.east) -- (compile.west);
\draw[line] (compile.east) -- (val.west);
\draw[line] (val.east) -- (select.west);

% --- Loop back (no threshold / no early stop) ---
\draw[flow]
  (select.north) to[out=75, in=85, looseness=0.25]
  node[midway, above, yshift=1mm]{next round}
  (llm.north);

% --- Iteration budget exit ---
\draw[flow] (llm.south) -- node[midway, right, xshift=2mm]{after $k$ rounds} (stop.north);

\end{tikzpicture}
\end{adjustbox}
\vspace{-0.5cm}
\caption{{\bf An illustration of LLM-PV.} A prompt built from $S_{\mathrm{tr}}$ conditions the LLM to sample candidate programs. We compile and filter candidates, evaluate validation error on $S_{\mathrm{val}}$, and maintain the best-so-far hypothesis $h^\star$. The procedure runs for $k$ rounds and returns the program with the lowest validation error among all candidates evaluated.}
\label{fig:experiment-flow-validation}
\end{figure}

\begin{algorithm}[t]
\caption{LLM-PV: $k$-try LLM propose-and-verify with validation}
\label{alg:llm-ktry}
\begin{algorithmic}[1]
\REQUIRE train/validation sets $S_{\mathrm{tr}},S_{\mathrm{val}}$; trials $k$; prompt builder $\Pi$; temperature $\tau$
\ENSURE Best program $u^\star$ with $h^\star=\sem{u^\star}$
\STATE \linelabel{alg:init} Initialize $\err^\star \leftarrow 1$, $(u^\star,h^\star) \leftarrow (\bot,\bot)$, $\mathcal U \leftarrow \emptyset$
\FOR{$t=1,\dots,k$}
  \STATE \linelabel{alg:query} Sample a candidate program $u$ from LLM$(\Pi(S_{\mathrm{tr}});\tau)$
  \IF{$u\in\mathcal U$} \STATE \textbf{continue} \ENDIF
  \STATE $\mathcal U\leftarrow\mathcal U\cup\{u\}$; compile and set $h\leftarrow\sem{u}$
  \IF{$h$ undefined on some $x\in S_{\mathrm{tr}}\cup S_{\mathrm{val}}$} \STATE \textbf{continue} \ENDIF
  \STATE \linelabel{alg:errors} Compute $\err_{S_\mathrm{val}}(h)$
  \IF{$\err_{S_\mathrm{val}}(h)<\err^\star$}
    \STATE \linelabel{alg:update} $(\err^\star,u^\star,h^\star)\leftarrow(\err_{S_\mathrm{val}}(h),u,h)$
  \ENDIF
\ENDFOR
\STATE \linelabel{alg:return} \textbf{return} $(u^\star,h^\star)$
\end{algorithmic}
\end{algorithm}

A natural alternative to brute-force program enumeration is \emph{sampling with verification}: draw candidates and return the one with smallest held-out error. But \emph{uniform} sampling is hopeless: if $u\sim \mathrm{Unif}(\mathcal L_L)$ and the target is $u^\star$, then $\Pr[u=u^\star]=|\Sigma|^{-L}$, so the expected trials are $|\Sigma|^{L}$, essentially matching enumeration. Thus the bottleneck is not verification, but the \emph{proposal distribution}.

This motivates a \emph{proposer} that, given $S_{\mathrm{tr}}$, induces a data-dependent distribution $q(\cdot\mid S_{\mathrm{tr}})$ placing non-negligible mass on programs consistent with the observed pairs. Given such a proposal, the rest is standard: compile/execute candidates, discard invalid programs, and select by validation error.

A strong pretrained LLM provides a natural proposal bias: prompted with $S_{\mathrm{tr}}$, it generates code concentrated on recognizable templates and concise rules suggested by the examples, rather than spreading mass nearly uniformly over the vast program space. In this view, the LLM is used as a data-dependent \emph{proposal prior distribution}.

Alg.~\ref{alg:llm-ktry} instantiates this propose--verify--select pipeline.
Based on Thm.~11.1 in~\citep{10.5555/2621980}, the guarantee depends on the \emph{proposal mass}
$p_\epsilon(S_{\mathrm{tr}}):=\Pr_{h\sim q(\cdot\mid S_{\mathrm{tr}})}[\err_D(h)\le \epsilon]$,
plus the usual finite-class validation penalty.

\begin{restatable}{proposition}{pv}
\label{thm:pv}
Fix $\epsilon\ge 0$ and $\delta\in(0,1)$. Draw independent samples
$S_{\mathrm{tr}}\sim D^{m_{\mathrm{tr}}}$ and $S_{\mathrm{val}}\sim D^{m_{\mathrm{val}}}$ with labels from the target $y:\mathcal X\to\{\pm1\}$.
Run Alg.~\ref{alg:llm-ktry} for $k$ trials, and assume each trial outputs an accepted total hypothesis
$h_t:\mathcal X\to\{\pm1\}$ using only $S_{\mathrm{tr}}$. Let
$h^\star \in \arg\min_{t\in[k]} \err_{S_{\mathrm{val}}}(h_t)$ be the returned hypothesis. If $k \geq \lceil \frac{\log(\delta/2)}{\log \big(1-p_\epsilon(S_{\mathrm{tr}})\big)} \rceil$, then with probability at least $1-\delta$ over $(S_{\mathrm{tr}},S_{\mathrm{val}})$ (and all algorithmic randomness), $\err_D(h^\star) \le \epsilon + 2\sqrt{\log(4k/\delta)/(2m_{\mathrm{val}})}$.
\end{restatable}

The guarantee separates the two roles in LLM-PV. The LLM enters through
$p_\epsilon(S_{\mathrm{tr}})$, the probability that a fresh accepted proposal has population error at most $\epsilon$. Given $k$ candidates, validation selection pays the standard finite-class price $\tilde O(\sqrt{\log(k/\delta)/m_{\mathrm{val}}})$. Thus the LLM need not be trusted as a predictor; it only needs to put enough mass on low-error programs so one appears within $k$ trials, after which performance is certified by held-out evaluation.

{\bf Comparing the number of programs observed.\enspace} Uniform sampling from $\mathcal L_L$ hits any fixed program with probability $|\Sigma|^{-L}$, so it takes on the order of $|\Sigma|^{L}$ trials to observe it. In LLM-PV the relevant quantity is the proposal mass $p_\epsilon(S_{\mathrm{tr}})$ on $\epsilon$-good hypotheses: by Prop.~\ref{thm:pv} it suffices that $k \ge \lceil \frac{\log(2/\delta)}{-\log(1-p_\epsilon(S_{\mathrm{tr}}))} \rceil$, and for $p_\epsilon(S_{\mathrm{tr}})\ll 1$ this is $k \approx \frac{\log(2/\delta)}{p_\epsilon(S_{\mathrm{tr}})}$. Thus, LLM-PV replaces the exponential search size $|\Sigma|^{L}$ by an \emph{effective} search size $1/p_\epsilon(S_{\mathrm{tr}})$, improving over enumeration whenever $1/p_\epsilon(S_{\mathrm{tr}})\ll |\Sigma|^{L}$ (equivalently $p_\epsilon(S_{\mathrm{tr}})\gg |\Sigma|^{-L}$).

{\bf Interpretability.\enspace} LLM-PV makes both the \emph{learned object} and the \emph{learning process} transparent: each run returns (i) an executable program and (ii) a reasoning trace that logs the proposed candidates and their observed errors (often including train accuracy). Fig.~\ref{fig:trace_prime} shows a \texttt{GPT-5-Thinking} trace (ChatGPT web UI) on IsPrime with $m{=}100$ examples of 100-digit inputs: it starts with simple digit-level rules (parity and digit-sum tests), then shifts to modular structure, and ends with a Miller--Rabin primality test. Because each step is linked to the errors it aims to fix, failures are auditable and successes are inspectable: the final symbolic program can be unit-tested, stress-tested out of distribution, and re-run or edited.

\section{Experiments}

% \begin{figure}[t]
%   \centering
%   \begin{tabular}{ccc}
%     \includegraphics[width=0.3\linewidth]{figures/main_plots_IsPalindrome.pdf} &
%     \includegraphics[width=0.3\linewidth]{figures/main_plots_Pattern_Matching_00111111.pdf} &
%     \includegraphics[width=0.3\linewidth]{figures/main_plots_Random_10-Parity.pdf} 
%   \end{tabular}
%   \caption{{\bf LLM-PV generalizes from 200 samples, while an SGD-trained LLM overfits.} With only 200 training examples per task, LLM-PV typically recovers the target function exactly. For each input length $n$ (x-axis), models are trained and tested independently on sequences of that length. Although training \texttt{Qwen3-1.7B} from scratch with SGD fits the training data at each $n$, it fails to generalize as $n$ grows. For Random 10-Parity, the 10 indices used for the parity check are fixed by seed 42, while the overall input length $n$ varies. See Fig.~\ref{fig:curves_complete} in App.~\ref{app:sgd_llm} for additional results.}
%   \label{fig:curves_limited}
% \end{figure}

\begin{table*}[t]
\centering
\resizebox{\textwidth}{!}{%
\begin{tabular}{@{}l|ccccccccccccc@{}}
\toprule
\textbf{Model / Task} &
\textbf{FP} & \textbf{FHP} & \textbf{Pattern1} & \textbf{Pattern2} &
\textbf{3-P} & \textbf{10-P} & \textbf{IsPal} & \textbf{Dyck-2} &
\textbf{CAP} & \textbf{IsPrime} & \textbf{IsPrime+47} & \textbf{CycleCount} & \textbf{SHA} \\
\midrule
\texttt{XGBoost}
& 49.9 & 50.0 & 50.6 & 52.3 & 49.9 & 49.3 & 83.6 & 49.7 & 50.4 & 61.8 & 59.3 & 51.1 & 50.3 \\

\texttt{Random Forest}
& 49.8 & 49.7 & 50.5 & 52.5 & 49.7 & 49.3 & 77.8 & 50.1 & 49.2 & 58.7 & 55.7 & 50.8 & 50.6 \\

\texttt{SVM}
& 49.9 & 49.9 & 49.8 & 51.6 & 50.6 & 48.9 & 69.7 & 50.1 & 50.1 & 55.7 & 55.6 & 47.0 & 50.2 \\

\texttt{Genetic Algorithm}
& 49.8 & 52.5 & 51.1 & 49.7 & 47.0 & 54.0 & 87.1 & 51.9 & 48.6 & 51.7 & 53.5 & 49.3 & 49.0 \\

\texttt{TabPFN}~\cite{hollmann2022tabpfn}
& 50.1 & 50.5 & 49.2 & 51.2 & 49.3 & 50.5 & 49.4 & 50.5 & 48.0 & 57.1 & 65.4 & 51.3 & 50.6 \\

SGD: \texttt{Qwen3-1.7B}
& 49.3 & 50.6 & 51.5 & 53.6 & 49.7 & 50.3 & 49.2 & 51.4 & 49.4 & 58.8 & 53.6 & 53.5 & 49.8 \\

SGD: \texttt{DeepSeek-Coder-1.3B}
& 52.2 & 50.5 & 51.3 & 55.1 & 54.3 & 51.3 & 51.6 & 49.4 & 50.0 & 54.5 & 56.0 & 53.9 & 49.8 \\

SGD: \texttt{Llama-3.2-1B}
& 52.2 & 48.6 & 50.0 & 54.3 & 51.5 & 50.6 & 50.3 & 51.5 & 50.2 & 52.7 & 51.7 & 54.7 & 51.0 \\

\midrule
FT: \texttt{Llama-3.2-1B}
& 50.4 & 51.3 & 61.5 & 57.6 & 51.0 & 50.5 & 49.6 & 52.7 & 50.7 & 57.2 & 58.7 & 65.7 & 50.8 \\

FT: \texttt{Qwen3-1.7B}
& 51.2 & 50.7 & 59.6 & 59.4 & 50.9 & 50.7 & 49.6 & 52.5 & 50.8 & 58.3 & 52.9 & 57.4 & 51.2 \\

FT: \texttt{DeepSeek-Coder-1.3B}
& 50.5 & 50.2 & 56.4 & 57.7 & 50.7 & 50.4 & 49.7 & 51.7 & 50.2 & 58.9 & 60.3 & 55.7 & 49.7 \\

\midrule
ICL: \texttt{Qwen3-30B-Instruct} (no tools)
& 53.0 & 54.0 & 56.0 & 51.0 & 50.0 & 50.0 & 47.0 & 48.0 & 49.0 & 50.0 & 58.0 & 46.0 & \textbf{54.0} \\

ICL: \texttt{Qwen3-Coder-30B-Instruct} (no tools)
& 50.0 & 50.0 & 50.0 & 50.0 & 50.0 & 44.0 & 51.0 & 50.0 & 49.0 & 47.0 & 60.0 & 47.0 & 49.0 \\

ICL: \texttt{DeepSeek-Coder-33B-Instruct} (no tools)
& 43.0 & 54.0 & 54.0 & 47.0 & 47.0 & 48.0 & 51.0 & 54.0 & 44.0 & 53.0 & 61.0 & 41.0 & 52.0 \\

ICL: \texttt{Gemini-3.1-Pro} (with tools)
& 48.0 & 54.0 & 83.0 & 57.0 & 50.0 & 45.0 & \textbf{100} & 51.0 & 51.0 & 52.0 & 56.0 & 50.0 & 45.0 \\

ICL: \texttt{GPT-5-Thinking} (no tools)
& 82.0 & 66.0 & 55.0 & 41.0 & 47.0 & 47.0 & 79.0 & 57.0 & 51.0 & 55.0 & 47.0 & 43.0 & 44.0 \\

ICL: \texttt{GPT-5-Thinking} (with tools)
& 99.0 & 88.0 & 51.0 & 51.0 & 75.0 & 79.0 & 91.0 & 59.0 & 46.0 & 62.0 & 56.0 & 51.0 & 46.0 \\

\midrule
MLAgentBench~\cite{10.5555/3692070.3692884}: \texttt{GPT-5-Thinking} (with tools)
& 53.6 & 51.0 & 33.4 & 60.9 & 32.8 & 50.1 & 31.6 & 52.1 & 50.0 & 32.6 & 33.1 & 49.1 & 49.9 \\

AIDE-ML~\cite{jiang2025aideaidrivenexplorationspace}: \texttt{GPT-5-Thinking} (with tools)
& 49.5 & 49.9 & 50.3 & 50.9 & 49.5 & 49.2 & 50.0 & 50.5 & 50.2 & 55.5 & 49.7 & 50.0 & 50.8 \\

\midrule
LLM-PV: \texttt{Qwen3-30B-Instruct} (no tools)
& 50.4 & 49.6 & 51.1 & 60.9 & 50.4 & 51.6 & 49.9 & 80.4 & 50.2 & 50.6 & 50.2 & 49.9 & 49.7 \\

LLM-PV: \texttt{DeepSeek-Coder-33B-Instruct} (no tools)
& \textbf{100} & 50.9 & 50.9 & 50.5 & 51.6 & 50.8 & 51.0 & 57.3 & 50.2 & 50.0 & 50.2 & 50.0 & 51.5 \\

LLM-PV: \texttt{OLMo-3-32B-Instruct} (no tools)
& \textbf{100} & 50.5 & 50.9 & 83.5 & 50.2 & 50.2 & 52.7 & 50.5 & 50.8 & 50.8 & 50.5 & 52.7 & 50.1 \\

LLM-PV: \texttt{Gemini-3.1-Pro} (with tools)
& \textbf{100} & \textbf{100} & \textbf{100} & \textbf{100} & \textbf{100} & \textbf{100} & \textbf{100} & 80.7 & 52.9 & \textbf{100} & 68.1 & 96.1 & 50.3 \\

LLM-PV: \texttt{GPT-5-Thinking} (no tools)
& \textbf{100} & 51.1 & 78.4 & 64.7 & 50.1 & 50.4 & 50.0 & 79.9 & 50.2 & 52.9 & 64.0 & 51.0 & 50.2 \\

LLM-PV: \texttt{GPT-5-Thinking} (with tools)
& \textbf{100} & \textbf{100} & \textbf{100} & \textbf{100} & \textbf{100} & \textbf{100} & \textbf{100} & \textbf{100} & \textbf{100} & \textbf{100} & \textbf{80.6} & \textbf{96.9} & 50.1 \\

\bottomrule
\end{tabular}%
}
\caption{
\textbf{Test accuracy at length $n{=}100$ across baselines and LLM-PV.}
We compare classic ML algorithms, training from scratch (SGD on 1B-scale LMs), fine-tuning (top-$k$ layers or full) of 1B-scale pretrained LMs, and in-context learning with 30B+ instruction-tuned models. {\bf Most baselines remain near chance on the algorithmic tasks at $n{=}100$, while LLM-PV achieves high test accuracy on all tasks except SHA.}
}
\label{tab:all_methods_n100}
\end{table*}

\begin{table*}[t]
\centering
\small
\setlength{\tabcolsep}{3.9pt}
\renewcommand{\arraystretch}{1.06}

\begin{adjustbox}{max width=\textwidth,center}
\begin{tabular}{@{}l|ccccc|ccccc|ccccc|ccccc@{}}
\toprule
& \multicolumn{5}{c|}{\textbf{SGD}}
& \multicolumn{5}{c|}{\textbf{FT}}
& \multicolumn{5}{c|}{\textbf{ICL}}
& \multicolumn{5}{c}{\textbf{LLM-PV}} \\
\cmidrule(lr){2-6}\cmidrule(lr){7-11}\cmidrule(lr){12-16}\cmidrule(lr){17-21}
\textbf{Task}
& $n{=}20$ & $n{=}25$ & $n{=}30$ & $n{=}50$ & $n{=}100$
& $n{=}20$ & $n{=}25$ & $n{=}30$ & $n{=}50$ & $n{=}100$
& $n{=}20$ & $n{=}25$ & $n{=}30$ & $n{=}50$ & $n{=}100$
& $n{=}20$ & $n{=}25$ & $n{=}30$ & $n{=}50$ & $n{=}100$ \\
\midrule

FP
& 56.3 & 56.2 & 52.4 & 52.7 & 52.2
& 50.8 & 51.0 & 50.4 & 51.2 & 51.2
& 52.0 & 47.0 & 54.0 & 51.0 & 53.0
& \alln & \alln & \alln & \alln & \alln \\

FHP
& 51.0 & 52.5 & 50.7 & 53.7 & 50.6
& 52.7 & 51.7 & 50.8 & 50.5 & 51.3
& 50.0 & 50.0 & 46.0 & 59.0 & 54.0
& \textbf{100} & \textbf{100} & \textbf{100} & \textbf{100} & \textbf{100} \\

3-P
& 50.6 & 50.5 & 52.2 & 51.4 & 54.3
& 50.9 & 50.7 & 53.3 & 51.0 & 51.0
& 54.0 & 56.0 & 51.0 & 48.0 & 50.0
& \textbf{100} & \textbf{100} & \textbf{100} & \textbf{100} & \textbf{100} \\

10-P
& 50.6 & 50.2 & 50.3 & 50.6 & 50.3
& 51.1 & 50.5 & 50.8 & 50.3 & 50.7
& 51.0 & 55.0 & 55.0 & 51.0 & 50.0
& \textbf{100} & \textbf{100} & \textbf{100} & \textbf{100} & \textbf{100} \\

Pattern1
& 91.4 & 82.8 & 57.8 & 58.7 & 51.5
& 74.6 & 72.0 & 68.4 & 68.5 & 61.5
& 89.0 & 69.0 & 77.0 & 61.0 & 56.0
& \alln & \textbf{98.9} & \textbf{98.5} & \alln & \alln \\

Pattern2
& 97.5 & {\bf 96.0} & {\bf 95.2} & 74.5 & 55.1
& 69.4 & 69.2 & 67.4 & 67.3 & 59.4
& 79.0 & 68.0 & 72.0 & 79.0 & 51.0
& \alln & 94.2 & 93.2 & \alln & \alln \\

IsPal
& 62.7 & 56.5 & 54.8 & 51.6 & 51.6
& 52.4 & 51.3 & 51.1 & 50.2 & 49.7
& 58.0 & 53.0 & 63.0 & 52.0 & 51.0
& \textbf{100} & \textbf{96.0} & \textbf{100} & \textbf{100} & \textbf{100} \\

Dyck-2$^{*}$
& 65.7 & 59.1 & 55.9 & 51.1 & 51.8
& 64.6 & 57.9 & 53.9 & 52.8 & 52.7
& 70.0 & 57.0 & 61.0 & 54.0 & 54.0
& \textbf{77.4} & \textbf{90.5} & \textbf{80.0} & \textbf{90.5} & \textbf{100} \\

%IsPrime
%& 88.5\% & 88.5\% & 87.8\% & 89.9\% & 90.3\%
%& 60.3\% & 62.5\% & 62.7\% & 61.1\% & 60.5\%
%& 59.0\% & 62.0\% & 56.0\% & 66.0\% & 57.0\%
%& \alln & \alln & \alln & \alln & \alln \\

IsPrime
& 59.9 & 60.2 & 63.0 & 57.0 & 58.8
& 59.2 & 59.8 & 59.2 & 62.5 & 58.9
& 53.0 & 53.0 & 53.0 & 57.0 & 53.0
& \alln & \alln & \alln & \alln & \alln \\

%IsPrime+5 \\

%IsPrime+47 \\

%DoublePrime \\

CAP$^{\diamond}$
& 52.5 & 51.4 & 52.0 & 50.5 & 50.0
& 50.9 & 50.6 & 51.2 & 51.1 & 50.8
& 52.0 & 47.0 & 46.0 & 50.0 & 49.0
& \textbf{100} & \textbf{100} & \alln & \alln & \alln \\

%SHA
%& 48.3 & 50.2 & 50.5 & 50.3 & 49.8
%& 50.7 & 50.5 & 50.8 & 51.0 & 51.2
%& \textbf{53.0} & \textbf{55.0} & \textbf{52.0} & \textbf{53.0} & \textbf{54.0}
%& 50.2 & 49.9 & 50.4 & 50.0 & 50.1 \\

\bottomrule
\multicolumn{21}{@{}l}{\footnotesize
100$^{\dagger}$ indicates 100\% for all $n$.\quad
$^{*}$For Dyck-2, lengths are $n\in\{20,40,60,80,100\}$ respectively.%\quad
%$^{\diamond}$For CAP, length $n{=}100$ took $k{=}27$ attempts.
}
\end{tabular}
\end{adjustbox}

\caption{
\textbf{LLM-PV vs.\ training and prompting baselines.} Test accuracy (\%) for input lengths $n\in\{20,25,30,50,100\}$. \textbf{SGD}: best test accuracy across \texttt{Qwen3-1.7B}, \texttt{Deepseek-Coder-1.3B}, and \texttt{Llama3.2-1B}, each trained from scratch. \textbf{FT}: best test accuracy across \texttt{Llama3.2-1B}, \texttt{Qwen3-1.7B}, and \texttt{Deepseek-Coder-1.3B}, sweeping tuned layers (\{top-2, top-4, top-8, full\}) and training hyperparameters, separately for each (task,$n$). \textbf{ICL}: best test accuracy across \texttt{Qwen3-30B-Instruct}, \texttt{Qwen3-Coder-30B-Instruct}, and \texttt{Deepseek-Coder-33B-Instruct}.
}
\label{tab:pv_vs_all}
\end{table*}

% \begin{figure}[t]
%   \centering
%   \begin{tabular}{ccc}
%     \includegraphics[width=0.3\linewidth]{\detokenize{figures/ablation_plots_data_Random_10-Parity.pdf}} &
%     \includegraphics[width=0.3\linewidth]{\detokenize{figures/ablation_plots_data_Cellular_Automata_Parity.pdf}} &
%     \includegraphics[width=0.3\linewidth]{\detokenize{figures/ablation_plots_data_IsPrime_Ends_in_1-3-7-9.pdf}} \\
%   \end{tabular}
% \caption{{\bf SGD-trained LLMs struggle on algorithmic tasks even with 100k samples.} We train \texttt{BLOOM-75M} on Random 10-Parity {\bf (left)}, Cellular Automata Parity {\bf (middle)}, and IsPrime with negatives restricted to $\{1,3,7,9\}$ {\bf (right)}, each with 100k examples. Despite abundant data and perfect fitting, the model still overfits and fails to generalize. The ERM lower bound is computed as $1-\tfrac{1}{m}[L\log|\Sigma|+\log(\tfrac{2L^2}{\delta})]$ with $\delta{=}10^{-10}$, $|\Sigma|{=}128$ (ASCII), and $L$ the length of a short Python program implementing the target.}
% \label{fig:data_ablation}
% \end{figure}

\begin{table}[t]
\centering
\small
\setlength{\tabcolsep}{1.2pt}
\renewcommand{\arraystretch}{0.9}
\begin{adjustbox}{max width=0.48\textwidth,center}
\begin{tabular}{@{}l*{12}{c}@{}}
\toprule
\textbf{Task}
& \multicolumn{2}{c}{$n{=}20$}
& \multicolumn{2}{c}{$n{=}25$}
& \multicolumn{2}{c}{$n{=}30$}
& \multicolumn{2}{c}{$n{=}50$}
& \multicolumn{2}{c}{$n{=}100$}
& \multicolumn{2}{c}{$\mathbf{PAC\ bound}$} \\
\cmidrule(lr){2-3}\cmidrule(lr){4-5}\cmidrule(lr){6-7}\cmidrule(lr){8-9}\cmidrule(lr){10-11}\cmidrule(lr){12-13}
& \textbf{Train} & \textbf{Test}
& \textbf{Train} & \textbf{Test}
& \textbf{Train} & \textbf{Test}
& \textbf{Train} & \textbf{Test}
& \textbf{Train} & \textbf{Test}
& \textbf{ASCII} & \textbf{PyTok} \\
\midrule
\textbf{10-P}     & 100 & 53.9 & 100 & 49.8 & 100 & 50.5 & 100 & 49.2 & 100 & 50.7 & 0.995 & 0.998 \\
\textbf{CAP}      & 100 & 99.9 & 100 & 50.3 & 100 & 50.2 & 100 & 49.5 & 100 & 50.4 & 0.985 & 0.994 \\
\textbf{IsPrime} & 100 & 59.8 & 100 & 58.7 & 100 & 60.3 & 100 & 60.1 & 100 & 59.9 & 0.992 & 0.997 \\
\bottomrule
\end{tabular}
\end{adjustbox}
%\vspace{-0.35em}
\caption{\textbf{BLOOM-75M (SGD) train/test accuracy (\%) and a program-length PAC bound for training with $m{=}100$k samples.}
We report train/test accuracy at each length $n$.
The last two columns report the bound
$1-m^{-1}[L\log|\Sigma|+\log(2L^2/\delta)]$
with $\delta{=}10^{-10}$, using $L$ from a compact reference implementation measured in ASCII bytes
($|\Sigma|{=}128$) or Python lexical tokens ($|\Sigma|{=}64$); the bound is independent of $n$.
{\bf Despite perfect training accuracy at each $n$, test accuracy stays near chance on 10-P and CAP, even though both admit short programs for which the PAC-learning bounds are close to 1.}}
\label{tab:bloom_ablation}
\end{table}

\begin{figure}[t]
  \centering
  \begin{tabular}{ccc}
    \includegraphics[width=0.29\linewidth]{\detokenize{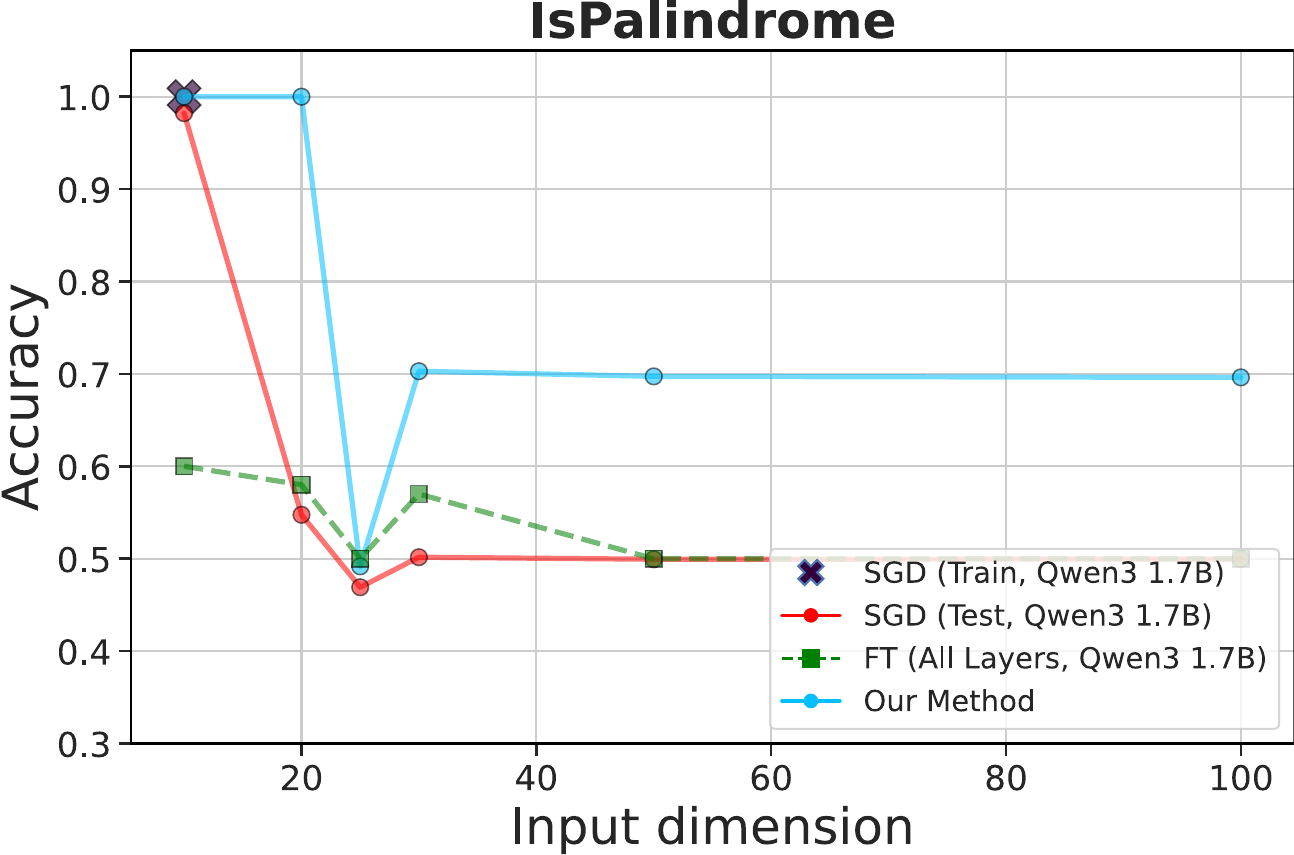}} &
    \includegraphics[width=0.29\linewidth]{\detokenize{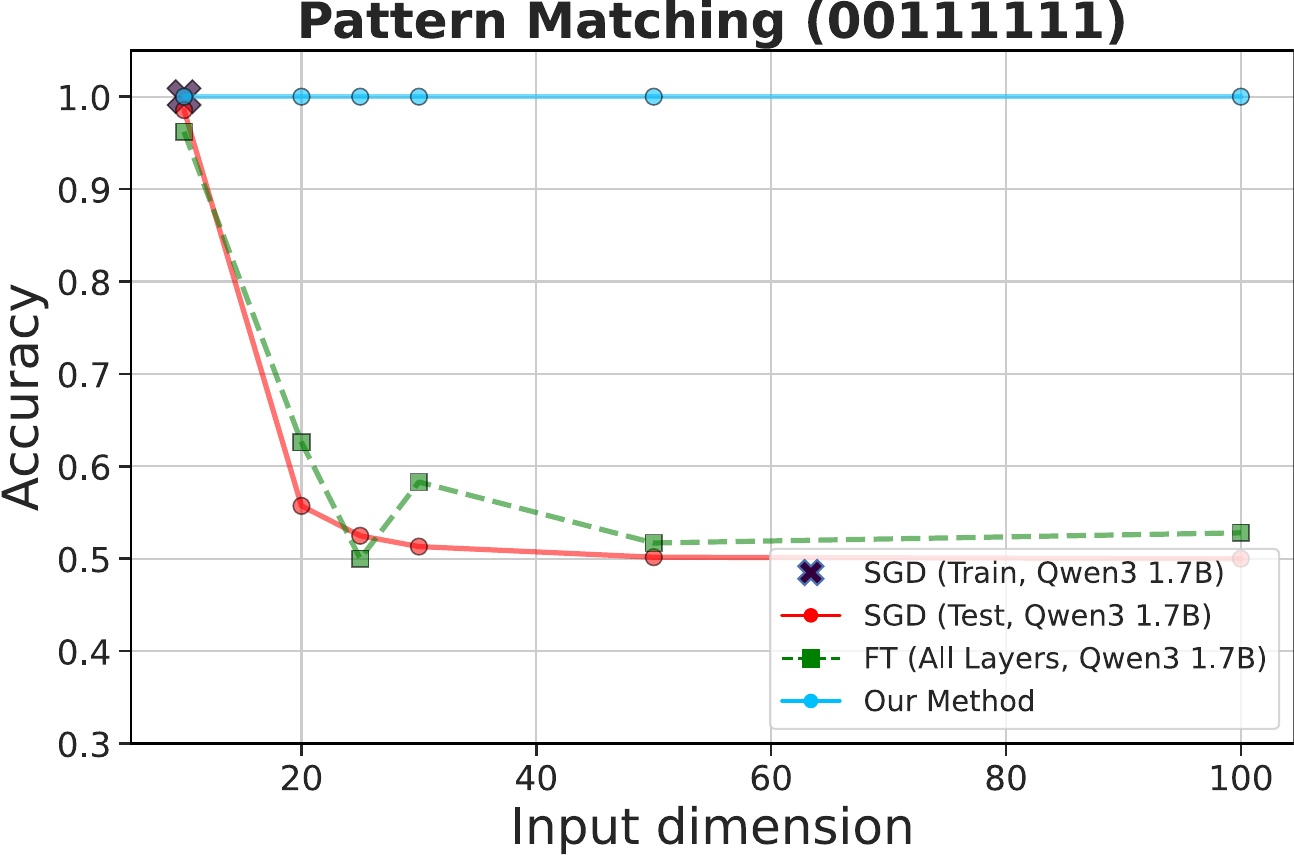}} &
    \includegraphics[width=0.29\linewidth]{\detokenize{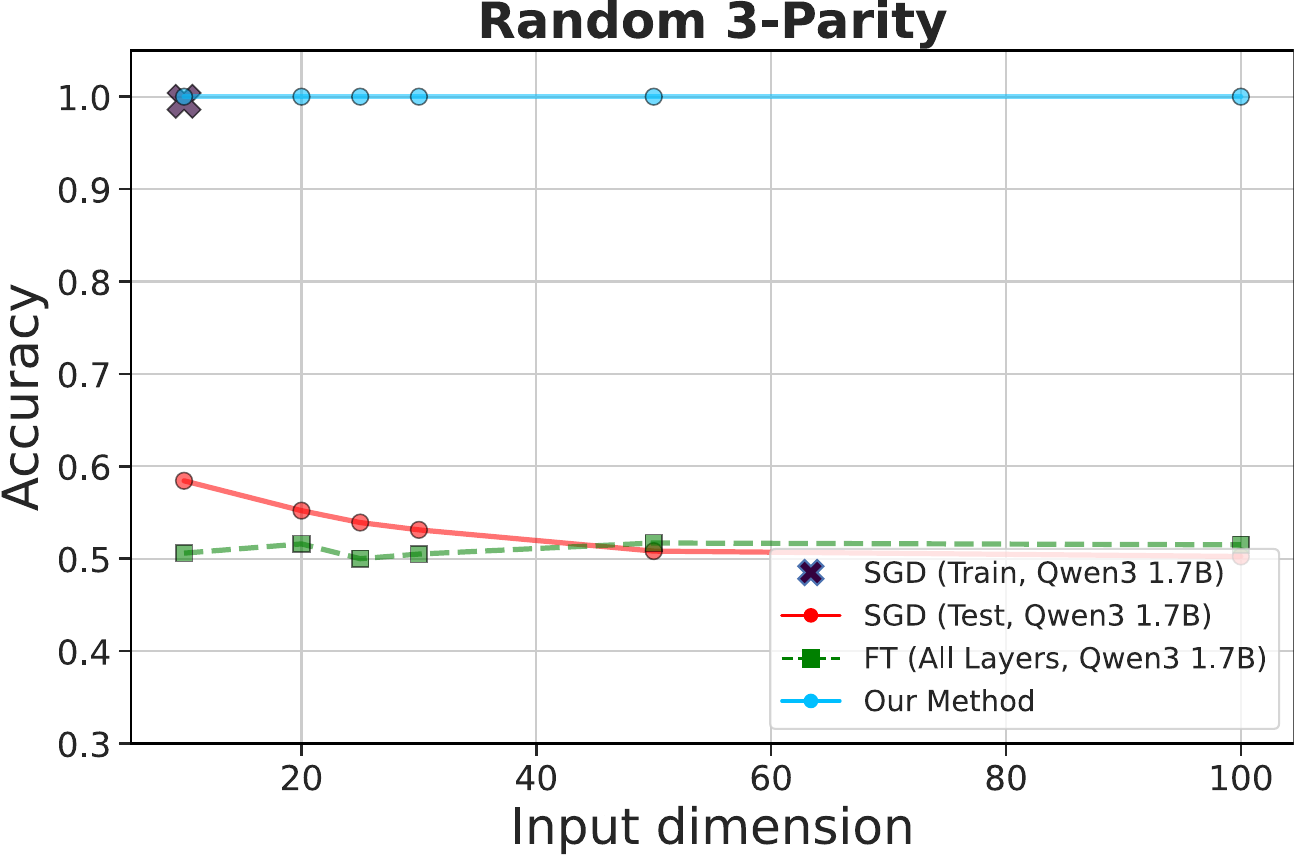}} 
  \end{tabular}
\caption{\textbf{LLM-PV generalizes to much larger input lengths.} All methods (SGD, FT, and \textsc{LLM-PV}) are trained on 200 examples at a fixed length ($n{=}10$) and evaluated on lengths up to $n{=}100$. SGD learns the training distribution but degrades to near-chance accuracy (${\approx}50\%$) as $n$ grows. In contrast, \textsc{LLM-PV} maintains high accuracy and, on Pattern Matching and Random 3-Parity, synthesizes dimension-invariant programs.}
\label{fig:length_generalize_ablation} 
\end{figure}

\begin{figure}[t]
  \centering
  \begin{tabular}{ccc}
    \includegraphics[width=0.29\linewidth]{\detokenize{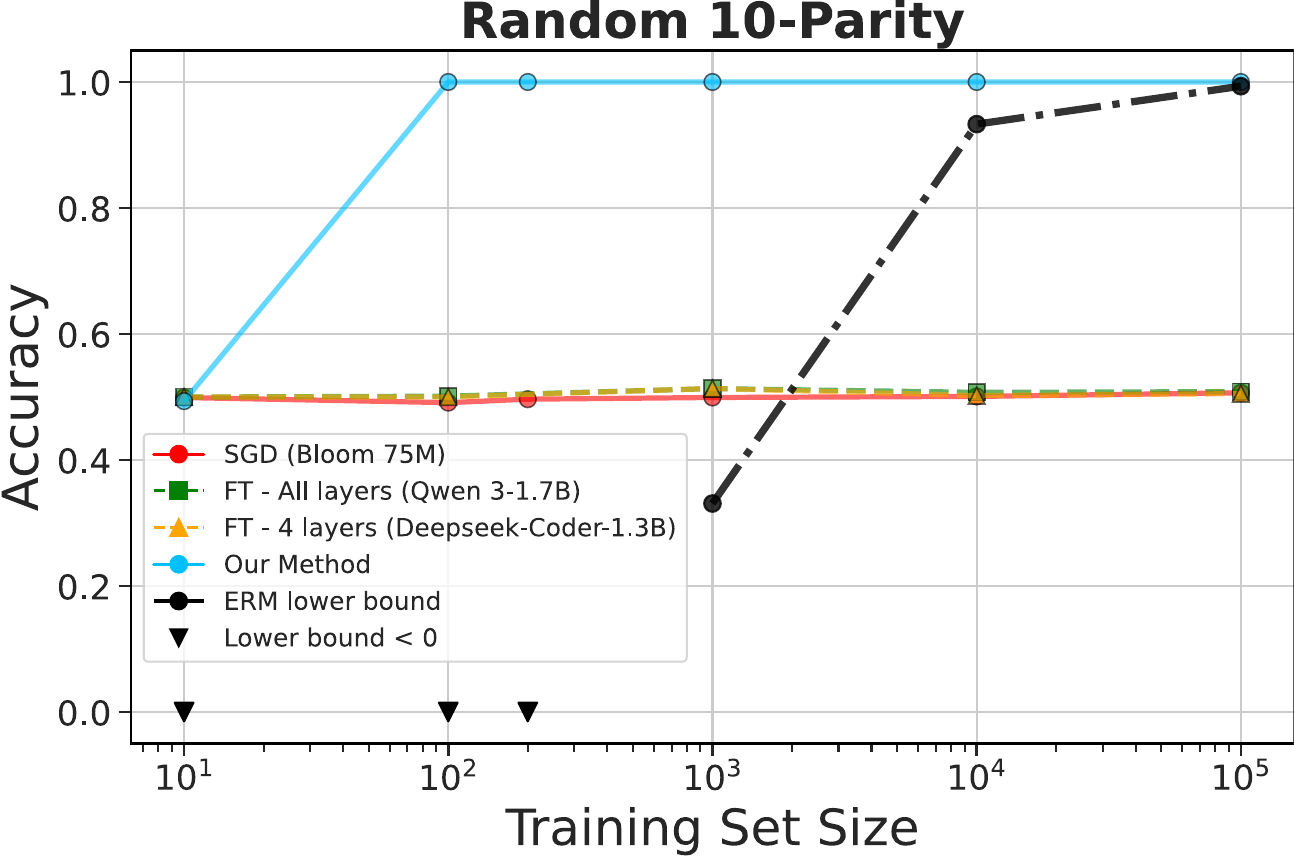}} &
    \includegraphics[width=0.29\linewidth]{\detokenize{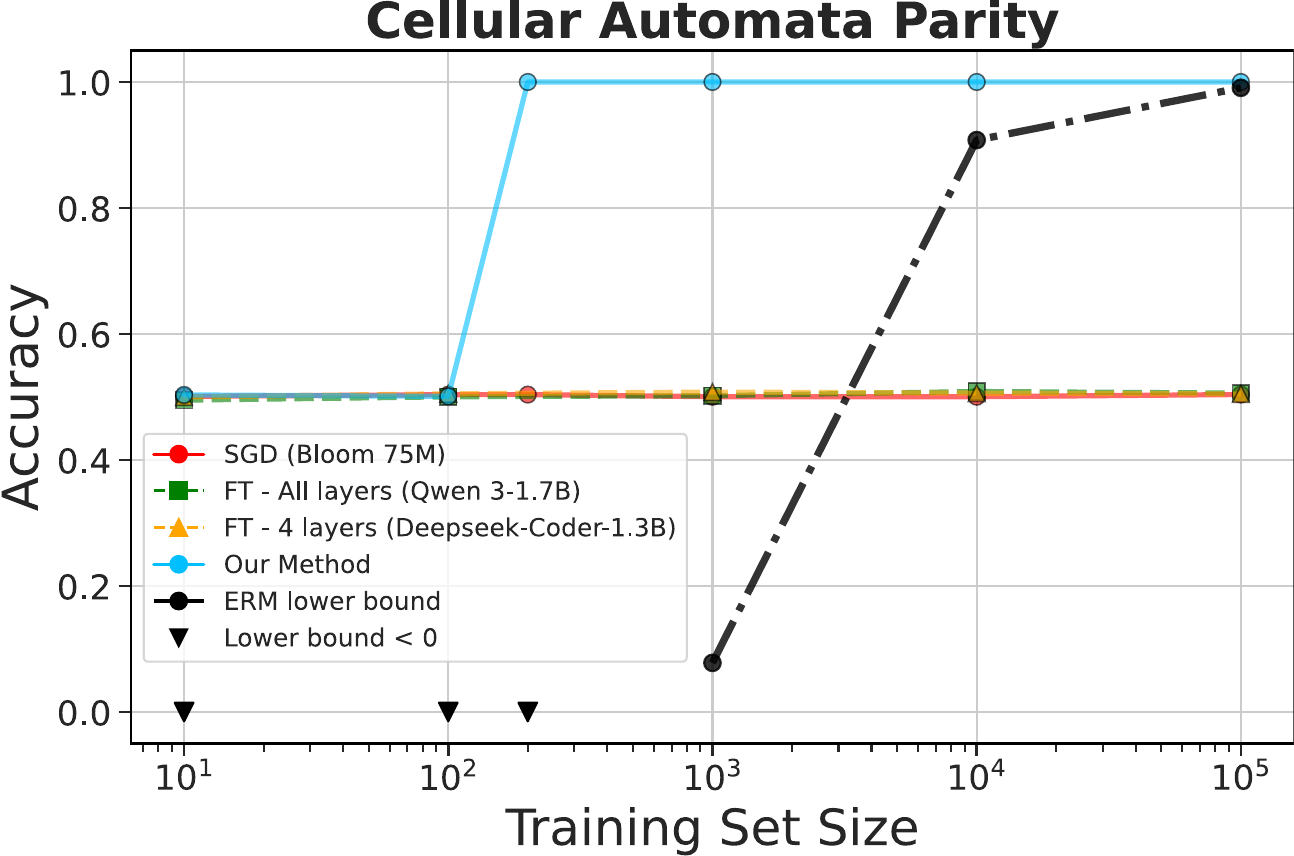}} &
    \includegraphics[width=0.29\linewidth]{\detokenize{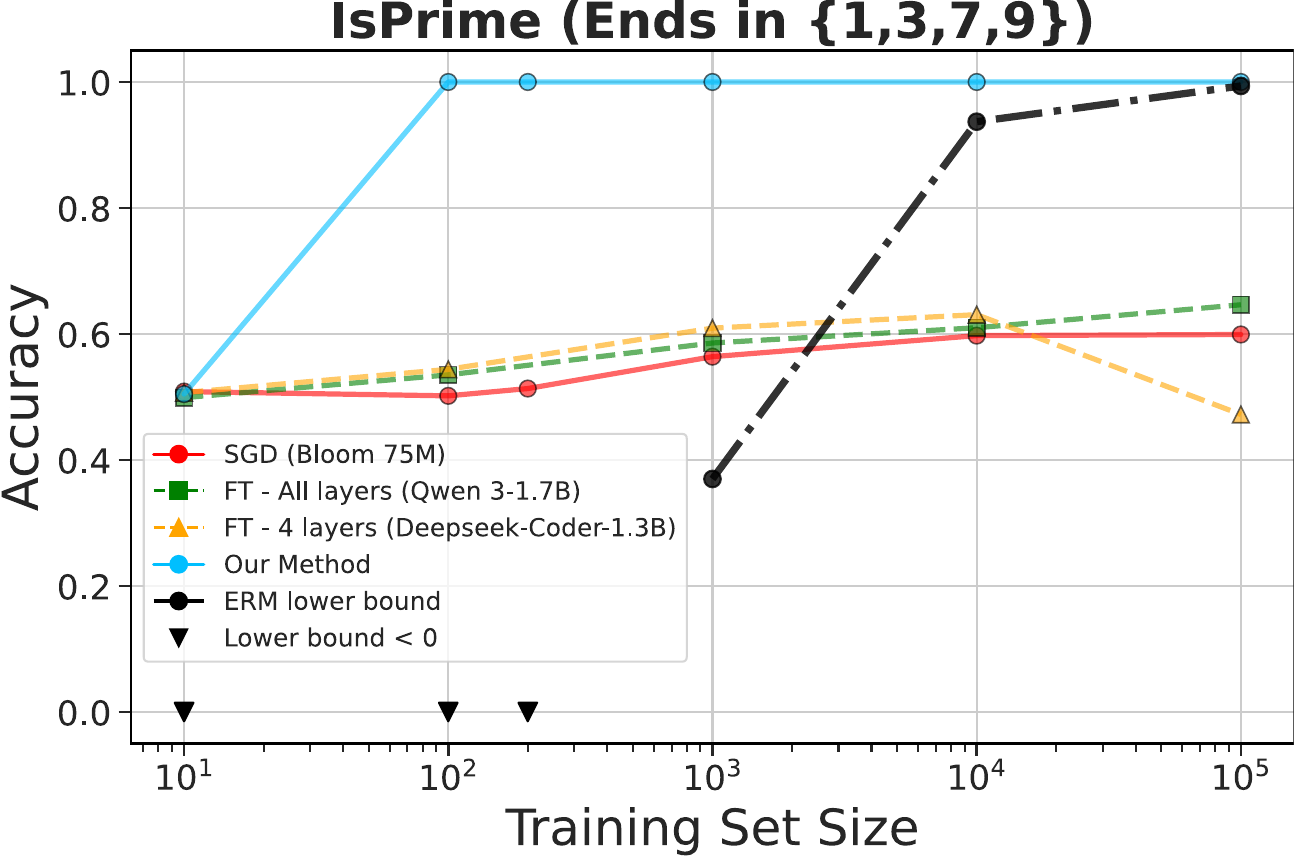}} \\
  \end{tabular}
\caption{{\bf SGD-trained LLMs fail to learn algorithmic tasks even with 100k examples; LLM-PV succeeds with 200.} We compare the test performance of LLM-PV and SGD training (from-scratch and fine-tuning) on 10-P {\bf (left)}, CAP {\bf (middle)}, and IsPrime {\bf (right)} when varying the training set size. {\bf SGD training overfits, with test accuracy near chance across all training sizes. LLM-PV achieves 100\% test accuracy with just 200 examples.}}
\label{fig:size_ablation}
\end{figure}

\begin{table}[t]
\centering
\small
\setlength{\tabcolsep}{4pt}
\renewcommand{\arraystretch}{1.15}
\begin{adjustbox}{max width=0.48\textwidth,center}
\begin{tabular}{@{}lccccccc@{}}
\toprule
\textbf{Model} &
\textbf{\shortstack{Adult\\Income}} &
\textbf{\shortstack{Secondary\\Mushroom}} &
\textbf{\shortstack{CDC Diabetes\\Health Indicators}} & 
% \textbf{\shortstack{Spambase}} & 
\textbf{\shortstack{HRTU2}} &
\textbf{\shortstack{Chess\\(KR vs KP)}} &
% \textbf{\shortstack{Magic\\Gamma}} 
\\
\midrule
\texttt{SVM}           & $52.8 \pm 0.000$ & $68.1 \pm 0.000$ & $69.1 \pm 0.000$ & %$68.3 \pm 0.000$ & 
$91.5 \pm 0.000$ & $89.4 \pm 0.000$ & %$78.5 \pm 0.000$
\\
\texttt{GA}           & $75.2 \pm 0.016$ & $64.7 \pm 0.020$ & $69.0 \pm 0.014$ & %$84.5 \pm 0.009$ & 
$90.6 \pm 0.001$ & $92.5 \pm 0.013$ & %$73.2 \pm 0.007$
\\
\texttt{Decision Tree} & $67.8 \pm 0.020$ & $69.0 \pm 0.016$ & $62.5 \pm 0.004$ & %$77.9 \pm 0.000$ & 
$89.1 \pm 0.002$ & $\mathbf{94.1 \pm 0.003}$ & %$76.8 \pm 0.003$
\\
\texttt{Random Forest} & $74.2 \pm 0.008$ & $70.8 \pm 0.003$ & $\mathbf{72.7 \pm 0.006}$ & %$87.3 \pm 0.009$ & 
$91.4 \pm 0.005$ & $88.1 \pm 0.011$ & %$78.4 \pm 0.006$
\\
\texttt{XGBoost}       & $71.4 \pm 0.006$ & $66.1 \pm 0.012$ & $68.6 \pm 0.004$ & %$85.5 \pm 0.002$ & 
$91.4 \pm 0.001$ & $93.1 \pm 0.005$ & %$80.3 \pm 0.007$
\\
\texttt{TabPFN}           & $\mathbf{77.7 \pm 0.000}$ & $69.9 \pm 0.000$ & $72.1 \pm 0.000$ & %$91.0 \pm 0.000$ & 
$\mathbf{93.1 \pm 0.000}$ & $93.1 \pm 0.000$ & %$81.8 \pm 0.000$
\\
\midrule
LLM-PV        & $75.8 \pm 0.007$ & $\mathbf{72.2 \pm 0.020}$ & $71.1 \pm 0.002$ & %$82.5 \pm 0.007$ & 
$92.3 \pm 0.007$ & $92.3 \pm 0.012$ & %$77.2 \pm 0.005$
\\
\bottomrule
\end{tabular}
\end{adjustbox}
% \textbf{79.5\%} & \textbf{69.8\%} & \textbf{70.5\%}

\caption{\textbf{Tabular benchmarks.} Test accuracy (\%) for comparing classic ML baselines with LLM-PV. Our LLM-PV method achieves comparable results to other standard tabular learners.}
\label{tab:tabular}
\end{table}

% \begin{figure}[t]
%   \centering
%   \begin{tabular}{ccc}
%     \includegraphics[width=0.3\linewidth]{figures/ablation_plots_finetune_Random_10-Parity_Qwen3_1-7B_2.pdf} &
%     \includegraphics[width=0.3\linewidth]{figures/ablation_plots_finetune_Cellular_Automata_Parity_Qwen3_1-7B_2.pdf} &
%     \includegraphics[width=0.3\linewidth]{figures/ablation_plots_finetune_IsPrime_Ends_in_1-3-7-9_Qwen3_1-7B_2.pdf} 
%   \end{tabular}
% \caption{{\bf Fine-tuning pre-trained LLMs fails to overcome overfitting on algorithmic tasks.} We fine-tuned \texttt{Qwen3-1.7B}  on three tasks with 200 samples, training either the full model or only the top 2, 4, or 8 layers. While the models could fit the data, their test accuracy remained near 50\%. See Fig.~\ref{fig:finetune_ablation} in App.~\ref{app:finetune} for additional results.}
%   \label{fig:finetune_ablation_main}
% \end{figure}

\subsection{Setup}\label{sec:setup}

We evaluate whether LLM-guided propose-and-verify can recover short executable rules from few labeled examples, and compare it to standard learning baselines.
Each task is binary classification with inputs $x\in\mathcal X$ (bitstrings/digit strings/graph encodings) and labels $y(x)\in\{0,1\}$. We draw $m{=}200$ i.i.d.\ labeled examples and split them into $S_{\mathrm{tr}}$ and $S_{\mathrm{val}}$ of size 100 each. We report test accuracy on an independent held-out set of 10{,}000 examples.

\subsubsection{Evaluation Tasks} 

{\bf Synthetic tasks.\enspace} We use synthetic algorithmic tasks spanning non-local properties and simple heuristics: {\bf (1)} parity variants: full parity (FP), first-half parity (FHP), random 3-parity (3-P), random 10-parity (10-P); {\bf (2)} pattern matching for fixed motifs \texttt{00111111} (Pattern1) and \texttt{10101010} (Pattern2);
{\bf (3)} palindrome detection (IsPal); {\bf (4)} Dyck-2 membership (Dyck-2); {\bf (5)} primality (IsPrime) while restricting negatives to last digits $\in \{1,3,7,9\}$; {\bf (6)} the shifted-input primality variant (IsPrime+47), where we add 47 to each integer before labeling; {\bf (7)} cellular automata parity (CAP); and {\bf (8)} SHA-256 parity (SHA). All datasets are class-balanced by construction. Formal definitions and data generation appear in App.~\ref{app:tasks}.

Despite being familiar benchmarks, these targets are difficult to learn from i.i.d.\ labeled examples without an explicit algorithmic bias. First, $k$-parity suffers from a combinatorial search barrier: the label depends on a tiny unknown subset of coordinates, and there are $\binom{100}{10}$ candidate 10-bit subsets over 100 variables, so fitting low-order correlations provides essentially no guidance toward the correct rule. Second, palindromes and Dyck\textnormal{-}2 have hard negatives that differ from positives by only a few edits, which preserves most local $n$-gram statistics; thus, any learner that relies primarily on local cues will struggle to separate the classes. For Dyck\textnormal{-}2, we additionally encode the two bracket types as a 2-bit alphabet $\{00,01,10,11\}$ to remove trivial formatting cues; success must come from tracking the underlying stack-like constraint rather than from recognizable characters. Third, motif detection is a needle-in-a-haystack problem: a short pattern can occur at an unknown position within a long string, so generalization requires reasoning that is robust to shifts rather than exploiting fixed coordinates. 
Fourth, cellular-automata parity and SHA-based labels exhibit an avalanche effect, where small input changes can induce global output changes, ruling out simple heuristic rules based on a few digits or local patterns. Finally, for 100-digit primality, memorization is implausible given the astronomically large number of candidates (on the order of $10^{97}$ primes), so systematic success requires implementing an algorithmic test; moreover, IsPrime+47 makes this necessity explicit by defining the label via the primality of $\mathrm{int}(x)+47$, so any strategy that treats the label as a direct property of the observed digit string $x$ will fail.

%{\bf Tabular datasets.\enspace} 

%{\bf Input lengths.\enspace} We evaluate multiple input lengths $n$ (typically $n\in\{20,25,30,50,100\}$; see table captions for exceptions). We also test length generalization by training at a fixed length and evaluating at larger lengths (Fig.~\ref{fig:length_generalize_ablation}).

\subsection{Methods Compared}

We compare \textsc{LLM-PV} to several families of representative baselines. To keep the main text focused, we describe each baseline at a high level here and defer full hyperparameters, prompts, and sweeps to App.~\ref{app:baselines}.

{\bf LLM-PV.\enspace} We instantiate Alg.~\ref{alg:llm-ktry} with several backbone models, including
open-source models such as \texttt{Qwen3-30B-Instruct},
\texttt{Deepseek-Coder-33B-Instruct}, and \texttt{Olmo-3-32B-Instruct}, as well
as closed models such as \texttt{Gemini-3.1-Pro} and \texttt{GPT-5-Thinking}.
The open-source models are evaluated without tool calls. For
\texttt{GPT-5-Thinking}, we explicitly evaluate both no-tool and tool-augmented
variants, allowing us to isolate the effect of tool use. Given
$S_{\mathrm{tr}}$, the model samples up to $k{=}5$ candidate Python programs
under a fixed prompt (Fig.~\ref{fig:prompt}). We discard invalid or duplicate
programs, execute the remaining candidates in a sandboxed interpreter, and
select the program with the lowest validation error on $S_{\mathrm{val}}$
(Fig.~\ref{fig:experiment-flow-validation}). We do not use validation feedback
to adapt the sampling distribution.

{\bf Baselines.\enspace} (1) \emph{Classic ML:} XGBoost, Random Forest, SVM, and a Genetic Algorithm baseline, each with validation-based model and hyperparameter selection. (2) \emph{A pre-trained tabular foundation model:} TabPFN~\cite{hollmann2022tabpfn}. (3) \emph{From scratch:} \texttt{Qwen3-1.7B} trained from scratch as a classifier on the $m{=}200$ labeled examples. (4) \emph{Fine-tuning:} \texttt{Qwen3-1.7B}, \texttt{Llama3.2-1B}, and \texttt{Deepseek-Coder-1.3B} fine-tuned on the same $m{=}200$ labeled examples; hyperparameters (e.g., learning rate, batch size, and the number of tuned layers) are selected via a grid sweep. (5) \emph{In-context learning:} We evaluate large language models prompted with
all $m{=}200$ labeled examples and queried for the test labels. The comparison
includes open-weight models, code-specialized models, and closed frontier models.
Open-weight models are run without tool use. For \texttt{GPT-5-Thinking}, we
report both no-tool and tool-augmented variants; for \texttt{Gemini-3.1-Pro},
we report the tool-augmented variant. (6) \emph{Agentic ML baselines:} We also compare against general-purpose
agentic ML systems, including \texttt{MLAgentBench}~\cite{10.5555/3692070.3692884} and \texttt{AIDE-ML}~\cite{jiang2025aideaidrivenexplorationspace}. These
baselines are instantiated with \texttt{GPT-5-Thinking} and tool use, allowing
the agent to write, execute, and revise code during the search process. All
prompting, tool-use, and evaluation details are provided in
App.~\ref{app:baselines}.

\subsection{Main results}
\label{sec:main_results}

{\bf LLM baselines often fit without recovering the rule.\enspace}
Across the core algorithmic tasks (parity variants, Dyck-2, CAP), both training from scratch and fine-tuning remain close to chance at $n{=}100$ (Tab.~\ref{tab:all_methods_n100}), suggesting length-specific memorization rather than rule recovery. Concretely, for the parity variants $\{\mathrm{FP},\mathrm{FHP},\mathrm{3\text{-}P},\mathrm{10\text{-}P}\}$, SGD/FT scores fall in the range $48.6$--$54.3\%$ at $n{=}100$ (Tab.~\ref{tab:all_methods_n100}); for CAP they are even tighter at $49.4$--$50.8\%$ (Tab.~\ref{tab:all_methods_n100}). Fine-tuning can exploit local structure when it exists: on Pattern1 at $n{=}100$, the best FT model reaches $61.5\%$ versus $51.5\%$ for the best SGD model (Tab.~\ref{tab:all_methods_n100}). However, this improvement does not transfer to the parity variants or CAP, where FT remains near chance (Tab.~\ref{tab:all_methods_n100}). In-context learning at 30B scale yields occasional gains but still does not produce consistent algorithmic generalization: at $n{=}100$, ICL is the strongest baseline on SHA (up to $54.0\%$; Tab.~\ref{tab:all_methods_n100}), while remaining near chance on parity and CAP (Tab.~\ref{tab:all_methods_n100}).

{\bf Tabular learners capture surface-level cues.\enspace} At $n{=}100$, tabular learners can succeed when local statistics are predictive (e.g., IsPal reaches $69.7$--$87.1\%$ for SVM/GA; IsPrime reaches $51.7$--$61.8\%$ for GA/XGBoost; Tab.~\ref{tab:all_methods_n100}). Notably, even TabPFN~\cite{hollmann2022tabpfn}---a strong pre-trained tabular foundation-model---does not close the gap on our algorithmic tasks: at $n{=}100$ it remains at (near) chance on FP/FHP/3-P/10-P, Dyck-2, and CAP (roughly $48$--$51\%$ across these columns; Tab.~\ref{tab:all_methods_n100}). Overall, while tabular methods can exploit shallow cues when they exist, they fail on parity, Dyck-2, and CAP, highlighting a qualitative distinction between feature-based prediction and learning an algorithmic procedure.

{\bf \textsc{LLM-PV} recovers algorithmic rules.\enspace}
In contrast, \textsc{LLM-PV} achieves perfect accuracy at $n{=}100$ on all parity variants, IsPalindrome, Dyck-2, CAP, and IsPrime (all 100\%; Tab.~\ref{tab:all_methods_n100}), indicating broad rule recovery rather than task-specific pattern fitting. More strikingly, \textsc{LLM-PV} maintains high accuracy on the shifted-primality task IsPrime+47, reaching $80.6\%$ at $n{=}100$ (Tab.~\ref{tab:all_methods_n100}). Since adding a fixed offset breaks many superficial correlations while preserving a clean arithmetic relationship, this result suggests \textsc{LLM-PV} is not merely memorizing primality patterns but is often discovering a transferable computational structure. SHA remains difficult in our setup: all methods are close to chance, with the best performance coming from ICL (up to $54.0\%$ at $n{=}100$; Tab.~\ref{tab:all_methods_n100}).

{\bf Direct prompting is the wrong output space.\enspace}
ICL asks the model to behave like the target function directly: given a new
input, output a label. This is different from recovering an executable rule.
Consistent with this distinction, even tool-augmented ICL remains unreliable:
\texttt{GPT-5-Thinking} with tools improves over its no-tool variant on several
tasks, such as FP ($99.0\%$ vs.\ $82.0\%$), FHP ($88.0\%$ vs.\ $66.0\%$), and
10-P ($79.0\%$ vs.\ $47.0\%$), but it still remains far from rule recovery on
Pattern1, Pattern2, Dyck-2, CAP, CycleCount, and SHA
(Tab.~\ref{tab:all_methods_n100}). Thus, tool-augmented direct prediction does not reliably produce algorithmic generalization.

{\bf Tools help when they support program search.\enspace}
The \texttt{GPT-5-Thinking} ablation inside \textsc{LLM-PV} shows that tool use
is highly beneficial when it is embedded in a propose-and-verify loop. Without
tools, \textsc{LLM-PV} recovers some structure but remains incomplete, e.g.,
$78.4\%$ on Pattern1, $64.7\%$ on Pattern2, and $79.9\%$ on Dyck-2. With tools,
the same backbone reaches $100\%$ accuracy on 10 of the 13 tasks and remains
high on IsPrime+47 ($80.6\%$) and CycleCount ($96.9\%$)
(Tab.~\ref{tab:all_methods_n100}). This suggests that much of the gain comes from programmatic search: the agent can write code, execute it, inspect failures, test hypotheses, and refine candidate solutions.

{\bf Tool use alone is not sufficient.\enspace}
The comparison with ICL shows that tools are useful only when paired with the
right search objective. Tool-augmented ICL has access to the same kind of
external computation, but it still returns labels rather than an executable
rule. In contrast, \textsc{LLM-PV} uses tools to search over programs and then
selects among them using validation error. The gap between tool-augmented ICL
and tool-augmented \textsc{LLM-PV} shows that the benefit is not merely tool
access; it is tool use organized around program discovery.

{\bf Generic agents are not enough; the output matters.\enspace}
The agentic ML baselines \texttt{MLAgentBench} and \texttt{AIDE-ML}, both
instantiated with \texttt{GPT-5-Thinking} and tool use, remain close to chance
on most tasks. For example, \texttt{AIDE-ML} stays between roughly $49\%$ and
$55.5\%$ across all tasks, while \texttt{MLAgentBench} also fails broadly and
even drops below chance on several tasks (Tab.~\ref{tab:all_methods_n100}).
These systems search over standard ML workflows and return trained predictors,
whereas our tasks often require an explicit algorithmic rule. \textsc{LLM-PV}
succeeds because its hypothesis class is executable programs, not generic
predictors.

{\bf Open-weight models show partial program discovery without tools.\enspace}
The no-tool open-weight \textsc{LLM-PV} variants occasionally recover individual
rules, but their successes are sparse and unstable across tasks. For example,
\texttt{DeepSeek-Coder-33B-Instruct} and \texttt{OLMo-3-32B-Instruct} solve FP,
\texttt{OLMo-3-32B-Instruct} reaches $83.5\%$ on Pattern2, and
\texttt{Qwen3-30B-Instruct} reaches $80.4\%$ on Dyck-2
(Tab.~\ref{tab:all_methods_n100}). However, most other entries remain near
chance, suggesting that the propose-and-verify formulation can expose partial
rule-recovery ability without tools, but reliable recovery requires active
testing and debugging.

{\bf More data does not reliably fix SGD on these tasks.\enspace}
To test whether these failures are a small-data artifact, we train \texttt{BLOOM-75M} from scratch with SGD on $m{=}100$k examples per task (separately for each $n$).
The model again fits the training data perfectly, yet test accuracy remains near chance on 10-P and on CAP for $n\ge 25$, and is only modest on IsPrime (Tab.~\ref{tab:bloom_ablation}).
Fig.~\ref{fig:size_ablation} further shows that scaling $m$ from 10 to 100k does not yield reliable generalization for the SGD baseline in our setup, whereas \textsc{LLM-PV} attains perfect test accuracy with $m{=}200$ on these tasks.

{\bf Length generalization.\enspace} In Fig.~\ref{fig:length_generalize_ablation} we study out-of-distribution length generalization by training on 200 samples at a short input length ($n{=}10$) and evaluating on longer sequences. We keep the SGD (and FT) setup identical to our primary experiments. The SGD and FT baselines are oftentime able to generalize at $n{=}10$ but fail to generalize across lengths: accuracy rapidly collapses toward ${\approx}50\%$ as $n$ increases. In contrast, \textsc{LLM-PV} often synthesizes dimension-invariant programs, yielding stable performance across lengths, including perfect generalization on tasks such as Pattern Matching and Random 3-Parity.

{\bf LLM-PV performs comparably to strong tabular ML baselines.\enspace}
In addition to the tasks above, we also consider tabular datasets. For each task, we use only 100 fixed training samples and 100 fixed validation samples, and we report test error on the original test set. To ensure that the samples are not easily recognized by LLM-based methods (e.g., LLM-PV), we preprocess each input by applying a fixed random linear transformation $x \mapsto Wx+b$, where $W$ is diagonal and $b$ is a vector, with entries drawn i.i.d.\ from a standard normal distribution. As per Tab.~\ref{tab:tabular}, \textsc{LLM-PV} is competitive against standard tabular learners, suggesting that propose-and-verify can remain effective beyond synthetic sequence tasks. %{\color{red}Say something about Random forest vs. LLM-PV vs. decision tree}

\section{Discussion}

We studied \textsc{LLM-PV}, a learning algorithm that uses a pretrained LLM as a proposal prior and selects among executable programs by validation error. Across algorithmic tasks, \textsc{LLM-PV} often recovers exact rules from few examples and generalizes far beyond the training length. This suggests a different use of pretrained LLMs in learning: not as predictors that directly imitate the target function, but as search priors over discrete, verifiable hypotheses.

The main lesson is that the output space matters. Standard gradient-based predictors, fine-tuned models, and in-context learning ultimately produce labels or learned predictors. By contrast, \textsc{LLM-PV} searches over executable rules and uses held-out error only to select among candidate programs. This separation keeps the learning criterion simple and auditable: the LLM proposes programs, while execution and validation performance decide.

More broadly, our results point toward \emph{learning via LLM-guided search with verification}. Future work should characterize when the LLM proposal distribution places enough mass on correct or near-correct programs, when validation can reliably identify the right hypothesis, and where the approach breaks down.

The main limitation is scope. Our experiments focus on structured low-data regimes, and \textsc{LLM-PV} is not intended as a general replacement for supervised deep learning. Its success depends on a capable agentic proposer and on sufficient proposal mass near the correct program. As target programs become longer, more compositional, or less aligned with the pretrained model's prior, this mass may decrease, requiring more samples, more LLM calls, or richer propose-verify-refine procedures. Extending the approach to larger synthesis problems and real-world structured domains remains future work.

% At the same time, it relies on a suitable program class and an efficient executor, and its performance depends on the proposal quality of the base LLM. Validation-based selection also introduces a compute cost that grows with the number of candidates. An important direction is to tighten theory and practice around proposal quality, adaptive candidate budgeting, and richer program classes that cover noisier and more realistic data.

\newpage

\section*{Impact Statement}

This work studies program learning in controlled algorithmic settings. It raises no direct ethical, safety, or environmental concerns in the scope considered here. We introduce LLM-PV, a propose-and-verify framework that performs ERM-style selection over a discrete program class using a pretrained model only as a proposal prior, and we provide empirical evidence that it can recover compact rules from few labeled examples and generalize to longer inputs. These results inform how pretrained priors can support auditable, executable hypotheses without gradient-based retraining. While our experiments execute candidate programs, they do so in a restricted setting; any application beyond this scope should use sandboxed execution and a constrained language to avoid unintended behavior.

\bibliography{refs}

@article{10.1145/1968.1972,
author = {Valiant, L. G.},
title = {A theory of the learnable},
year = {1984},
issue_date = {Nov. 1984},
publisher = {Association for Computing Machinery},
address = {New York, NY, USA},
volume = {27},
number = {11},
issn = {0001-0782},
url = {https://doi.org/10.1145/1968.1972},
doi = {10.1145/1968.1972},
journal = {Commun. ACM},
month = nov,
pages = {1134–1142},
numpages = {9},
keywords = {propositional expressions, probabilistic models of learning, inductive inference}
}

@article{blumer1987occams,
  author  = {Blumer, Anselm and Ehrenfeucht, Andrzej and Haussler, David and Warmuth, Manfred K.},
  title   = {Occam's Razor},
  journal = {Information Processing Letters},
  volume  = {24},
  number  = {6},
  pages   = {377--380},
  year    = {1987},
  doi     = {10.1016/0020-0190(87)90114-1},
  url     = {https://doi.org/10.1016/0020-0190(87)90114-1}
}

@article{gulwani2017program,
  author  = {Gulwani, Sumit and Polozov, Oleksandr and Singh, Rishabh},
  title   = {Program Synthesis},
  journal = {Foundations and Trends in Programming Languages},
  volume  = {4},
  number  = {1--2},
  pages   = {1--119},
  year    = {2017},
  doi     = {10.1561/2500000010},
  url     = {https://doi.org/10.1561/2500000010}
}

@article{levin1973universal,
  author  = {Levin, Leonid A.},
  title   = {Universal Sequential Search Problems},
  journal = {Problems of Information Transmission},
  volume  = {9},
  number  = {3},
  pages   = {265--266},
  year    = {1973},
  note    = {English translation of Probl. Peredachi Inf., 9(3):115--116},
  url     = {https://www.mathnet.ru/eng/ppi914}
}

@inproceedings{polikarpova2016program,
  author    = {Polikarpova, Nadia and Kuraj, Ivan and Solar-Lezama, Armando},
  title     = {Program Synthesis from Polymorphic Refinement Types},
  booktitle = {Proceedings of the 37th ACM SIGPLAN Conference on Programming Language Design and Implementation},
  series    = {PLDI '16},
  pages     = {522--538},
  year      = {2016},
  publisher = {Association for Computing Machinery},
  doi       = {10.1145/2908080.2908093},
  url       = {https://doi.org/10.1145/2908080.2908093}
}

@article{robbins1951stochastic,
  author  = {Robbins, Herbert and Monro, Sutton},
  title   = {A Stochastic Approximation Method},
  journal = {The Annals of Mathematical Statistics},
  volume  = {22},
  number  = {3},
  pages   = {400--407},
  year    = {1951},
  doi     = {10.1214/aoms/1177729586},
  url     = {https://doi.org/10.1214/aoms/1177729586}
}

@article{solarlezama2013program,
  author  = {Solar-Lezama, Armando},
  title   = {Program Sketching},
  journal = {International Journal on Software Tools for Technology Transfer},
  volume  = {15},
  number  = {5--6},
  pages   = {475--495},
  year    = {2013},
  doi     = {10.1007/s10009-012-0249-7},
  url     = {https://doi.org/10.1007/s10009-012-0249-7}
}

@article{solomonoff1964formal1,
  author  = {Solomonoff, Ray J.},
  title   = {A Formal Theory of Inductive Inference. Part I},
  journal = {Information and Control},
  volume  = {7},
  number  = {1},
  pages   = {1--22},
  year    = {1964},
  doi     = {10.1016/S0019-9958(64)90223-2},
  url     = {https://doi.org/10.1016/S0019-9958(64)90223-2}
}

@article{solomonoff1964formal2,
  author  = {Solomonoff, Ray J.},
  title   = {A Formal Theory of Inductive Inference. Part II},
  journal = {Information and Control},
  volume  = {7},
  number  = {2},
  pages   = {224--254},
  year    = {1964},
  doi     = {10.1016/S0019-9958(64)90131-7},
  url     = {https://doi.org/10.1016/S0019-9958(64)90131-7}
}

@article{vapnik1971uniform,
  author  = {Vapnik, V. N. and Chervonenkis, A. Ya.},
  title   = {On the Uniform Convergence of Relative Frequencies of Events to Their Probabilities},
  journal = {Theory of Probability and Its Applications},
  volume  = {16},
  number  = {2},
  pages   = {264--280},
  year    = {1971},
  doi     = {10.1137/1116025},
  url     = {https://doi.org/10.1137/1116025}
}

@book{Vapnik1998,
  author = {Vapnik, Vladimir N.},
  owner = {CHAENIG},
  publisher = {Wiley-Interscience},
  timestamp = {2009-11-30T17:48:32.000+0100},
  title = {Statistical Learning Theory},
  year = 1998
}

@misc{reyzin2020statisticalqueriesstatisticalalgorithms,
      title={Statistical Queries and Statistical Algorithms: Foundations and Applications}, 
      author={Lev Reyzin},
      year={2020},
      eprint={2004.00557},
      archivePrefix={arXiv},
      primaryClass={cs.LG},
      url={https://arxiv.org/abs/2004.00557}, 
}

@inproceedings{10.1145/195058.195147,
author = {Blum, Avrim and Furst, Merrick and Jackson, Jeffrey and Kearns, Michael and Mansour, Yishay and Rudich, Steven},
title = {Weakly learning DNF and characterizing statistical query learning using Fourier analysis},
year = {1994},
isbn = {0897916638},
publisher = {Association for Computing Machinery},
address = {New York, NY, USA},
url = {https://doi.org/10.1145/195058.195147},
doi = {10.1145/195058.195147},
booktitle = {Proceedings of the Twenty-Sixth Annual ACM Symposium on Theory of Computing},
pages = {253–262},
numpages = {10},
location = {Montreal, Quebec, Canada},
series = {STOC '94}
}

@article{10.1145/293347.293351,
author = {Kearns, Michael},
title = {Efficient noise-tolerant learning from statistical queries},
year = {1998},
issue_date = {Nov. 1998},
publisher = {Association for Computing Machinery},
address = {New York, NY, USA},
volume = {45},
number = {6},
issn = {0004-5411},
url = {https://doi.org/10.1145/293347.293351},
doi = {10.1145/293347.293351},
journal = {J. ACM},
month = nov,
pages = {983–1006},
numpages = {24},
keywords = {machine learning, computational learning theory}
}

@InProceedings{pmlr-v65-feldman17c,
  title = 	 {A General Characterization of the Statistical Query Complexity},
  author = 	 {Feldman, Vitaly},
  booktitle = 	 {Proceedings of the 2017 Conference on Learning Theory},
  pages = 	 {785--830},
  year = 	 {2017},
  editor = 	 {Kale, Satyen and Shamir, Ohad},
  volume = 	 {65},
  series = 	 {Proceedings of Machine Learning Research},
  month = 	 {07--10 Jul},
  publisher =    {PMLR},
  pdf = 	 {http://proceedings.mlr.press/v65/feldman17c/feldman17c.pdf},
  url = 	 {https://proceedings.mlr.press/v65/feldman17c.html},
}

@article{10.1145/3046674,
author = {Feldman, Vitaly and Grigorescu, Elena and Reyzin, Lev and Vempala, Santosh S. and Xiao, Ying},
title = {Statistical Algorithms and a Lower Bound for Detecting Planted Cliques},
year = {2017},
issue_date = {April 2017},
publisher = {Association for Computing Machinery},
address = {New York, NY, USA},
volume = {64},
number = {2},
issn = {0004-5411},
url = {https://doi.org/10.1145/3046674},
doi = {10.1145/3046674},
journal = {J. ACM},
month = apr,
articleno = {8},
numpages = {37},
keywords = {Learning theory, lower bounds, planted clique, statistical algorithms, statistical dimension}
}

@article{10.1145/2666356.2594333,
author = {Le, Vu and Gulwani, Sumit},
title = {FlashExtract: a framework for data extraction by examples},
year = {2014},
issue_date = {June 2014},
publisher = {Association for Computing Machinery},
address = {New York, NY, USA},
volume = {49},
number = {6},
issn = {0362-1340},
url = {https://doi.org/10.1145/2666356.2594333},
doi = {10.1145/2666356.2594333},
journal = {SIGPLAN Not.},
month = jun,
pages = {542–553},
numpages = {12},
keywords = {programming by examples, program synthesis, end-user programming}
}

@inproceedings{NEURIPS2024_4eff61b7,
 author = {Li, Wen-Ding and Ellis, Kevin},
 booktitle = {Advances in Neural Information Processing Systems},
 doi = {10.52202/079017-1422},
 editor = {A. Globerson and L. Mackey and D. Belgrave and A. Fan and U. Paquet and J. Tomczak and C. Zhang},
 pages = {44761--44790},
 publisher = {Curran Associates, Inc.},
 title = {Is Programming by Example Solved by LLMs?},
 url = {https://proceedings.neurips.cc/paper_files/paper/2024/file/4eff61b79274124bc71efe2ee9772f95-Paper-Conference.pdf},
 volume = {37},
 year = {2024}
}

@inproceedings{wei2025codearc,
  title={Code{ARC}: Benchmarking Reasoning Capabilities of {LLM} Agents for Inductive Program Synthesis},
  author={Anjiang Wei and Tarun Suresh and Jiannan Cao and Naveen Kannan and Yuheng Wu and Kai Yan and Thiago S. F. X. Teixeira and Ke Wang and Alex Aiken},
  booktitle={Second Conference on Language Modeling},
  year={2025},
  url={https://openreview.net/forum?id=Q5pVZCrrKr}
}

@misc{khan2025llmguidedcompositionalprogramsynthesis,
      title={LLM-Guided Compositional Program Synthesis}, 
      author={Ruhma Khan and Sumit Gulwani and Vu Le and Arjun Radhakrishna and Ashish Tiwari and Gust Verbruggen},
      year={2025},
      eprint={2503.15540},
      archivePrefix={arXiv},
      primaryClass={cs.PL},
      url={https://arxiv.org/abs/2503.15540}, 
}

@inproceedings{10.1145/3510003.3510203,
author = {Jain, Naman and Vaidyanath, Skanda and Iyer, Arun and Natarajan, Nagarajan and Parthasarathy, Suresh and Rajamani, Sriram and Sharma, Rahul},
title = {Jigsaw: large language models meet program synthesis},
year = {2022},
isbn = {9781450392211},
publisher = {Association for Computing Machinery},
address = {New York, NY, USA},
url = {https://doi.org/10.1145/3510003.3510203},
doi = {10.1145/3510003.3510203},
booktitle = {Proceedings of the 44th International Conference on Software Engineering},
pages = {1219–1231},
numpages = {13},
location = {Pittsburgh, Pennsylvania},
series = {ICSE '22}
}

@misc{jiang2025aideaidrivenexplorationspace,
      title={AIDE: AI-Driven Exploration in the Space of Code}, 
      author={Zhengyao Jiang and Dominik Schmidt and Dhruv Srikanth and Dixing Xu and Ian Kaplan and Deniss Jacenko and Yuxiang Wu},
      year={2025},
      eprint={2502.13138},
      archivePrefix={arXiv},
      primaryClass={cs.AI},
      url={https://arxiv.org/abs/2502.13138}, 
}

@inproceedings{10.5555/3692070.3692884,
author = {Huang, Qian and Vora, Jian and Liang, Percy and Leskovec, Jure},
title = {MLAgentBench: evaluating language agents on machine learning experimentation},
year = {2024},
publisher = {JMLR.org},
abstract = {A central aspect of machine learning research is experimentation, the process of designing and running experiments, analyzing the results, and iterating towards some positive outcome (e.g., improving accuracy). Could agents driven by powerful language models perform machine learning experimentation effectively? To answer this question, we introduce MLAgentBench, a suite of 13 tasks ranging from improving model performance on CIFAR-10 to recent research problems like BabyLM. For each task, an agent can perform actions like reading/writing files, executing code, and inspecting outputs. We then construct an agent that can perform ML experimentation based on ReAct framework. We benchmark agents based on Claude v1.0, Claude v2.1, Claude v3 Opus, GPT-4, GPT-4-turbo, Gemini-Pro, and Mixtral and find that a Claude v3 Opus agent is the best in terms of success rate. It can build compelling ML models over many tasks in MLA-gentBench with 37.5\% average success rate. Our agents also display highly interpretable plans and actions. However, the success rates vary considerably; they span from 100\% on well-established older datasets to as low as 0\% on recent Kaggle challenges created potentially after the underlying LM was trained. Finally, we identify several key challenges for LM-based agents such as long-term planning and reducing hallucination.},
booktitle = {Proceedings of the 41st International Conference on Machine Learning},
articleno = {814},
numpages = {39},
location = {Vienna, Austria},
series = {ICML'24}
}

@inproceedings{
hollmann2022tabpfn,
title={Tab{PFN}: A Transformer That Solves Small Tabular Classification Problems in a Second},
author={Noah Hollmann and Samuel M{\"u}ller and Katharina Eggensperger and Frank Hutter},
booktitle={NeurIPS 2022 First Table Representation Workshop},
year={2022},
url={https://openreview.net/forum?id=eu9fVjVasr4}
}

@article{10.1145/1168919.1168907,
author = {Solar-Lezama, Armando and Tancau, Liviu and Bodik, Rastislav and Seshia, Sanjit and Saraswat, Vijay},
title = {Combinatorial sketching for finite programs},
year = {2006},
issue_date = {December 2006},
publisher = {Association for Computing Machinery},
address = {New York, NY, USA},
volume = {34},
number = {5},
issn = {0163-5964},
url = {https://doi.org/10.1145/1168919.1168907},
doi = {10.1145/1168919.1168907},
journal = {SIGARCH Comput. Archit. News},
month = oct,
pages = {404–415},
numpages = {12},
keywords = {sketching, SAT}
}

@INPROCEEDINGS{6679385,
  author={Alur, Rajeev and Bodik, Rastislav and Juniwal, Garvit and Martin, Milo M. K. and Raghothaman, Mukund and Seshia, Sanjit A. and Singh, Rishabh and Solar-Lezama, Armando and Torlak, Emina and Udupa, Abhishek},
  booktitle={2013 Formal Methods in Computer-Aided Design}, 
  title={Syntax-guided synthesis}, 
  year={2013},
  volume={},
  number={},
  pages={1-8},
  keywords={Grammar;Syntactics;Heuristic algorithms;Concrete;Search problems;Libraries;Production},
  doi={10.1109/FMCAD.2013.6679385}}

@inproceedings{10.1007/978-3-642-31424-7_44,
author = {Singh, Rishabh and Gulwani, Sumit},
title = {Synthesizing number transformations from input-output examples},
year = {2012},
isbn = {9783642314230},
publisher = {Springer-Verlag},
address = {Berlin, Heidelberg},
url = {https://doi.org/10.1007/978-3-642-31424-7_44},
doi = {10.1007/978-3-642-31424-7_44},
booktitle = {Proceedings of the 24th International Conference on Computer Aided Verification},
pages = {634–651},
numpages = {18},
location = {Berkeley, CA},
series = {CAV'12}
}

@article{10.1023/A:1025671410623,
author = {Lau, Tessa and Wolfman, Steven A. and Domingos, Pedro and Weld, Daniel S.},
title = {Programming by Demonstration Using Version Space Algebra},
year = {2003},
issue_date = {October-November 2003},
publisher = {Kluwer Academic Publishers},
address = {USA},
volume = {53},
number = {1–2},
issn = {0885-6125},
url = {https://doi.org/10.1023/A:1025671410623},
doi = {10.1023/A:1025671410623},
journal = {Mach. Learn.},
month = oct,
pages = {111–156},
numpages = {46},
keywords = {adaptive user interfaces, complex function learning, programming by demonstration, version spaces}
}

@article{10.1145/1925844.1926423,
author = {Gulwani, Sumit},
title = {Automating string processing in spreadsheets using input-output examples},
year = {2011},
issue_date = {January 2011},
publisher = {Association for Computing Machinery},
address = {New York, NY, USA},
volume = {46},
number = {1},
issn = {0362-1340},
url = {https://doi.org/10.1145/1925844.1926423},
doi = {10.1145/1925844.1926423},
journal = {SIGPLAN Not.},
month = jan,
pages = {317–330},
numpages = {14},
keywords = {version space algebra, user intent, string manipulation, spreadsheet programming, programming by example (pbe), program synthesis}
}

@article{doi:10.1137/16M1078471,
author = {Feldman, Vitaly and Perkins, Will and Vempala, Santosh},
title = {On the Complexity of Random Satisfiability Problems with Planted Solutions},
journal = {SIAM Journal on Computing},
volume = {47},
number = {4},
pages = {1294-1338},
year = {2018},
doi = {10.1137/16M1078471},
URL = { 
        https://doi.org/10.1137/16M1078471
},
eprint = {
        https://doi.org/10.1137/16M1078471
}
}

@inproceedings{min-etal-2022-rethinking,
    title = "Rethinking the Role of Demonstrations: What Makes In-Context Learning Work?",
    author = "Min, Sewon  and
      Lyu, Xinxi  and
      Holtzman, Ari  and
      Artetxe, Mikel  and
      Lewis, Mike  and
      Hajishirzi, Hannaneh  and
      Zettlemoyer, Luke",
    editor = "Goldberg, Yoav  and
      Kozareva, Zornitsa  and
      Zhang, Yue",
    booktitle = "Proceedings of the 2022 Conference on Empirical Methods in Natural Language Processing",
    month = dec,
    year = "2022",
    address = "Abu Dhabi, United Arab Emirates",
    publisher = "Association for Computational Linguistics",
    url = "https://aclanthology.org/2022.emnlp-main.759/",
    doi = "10.18653/v1/2022.emnlp-main.759",
    pages = "11048--11064",
}

@misc{dewynter2025incontextlearninglearning,
      title={Is In-Context Learning Learning?}, 
      author={Adrian de Wynter},
      year={2025},
      eprint={2509.10414},
      archivePrefix={arXiv},
      primaryClass={cs.CL},
      url={https://arxiv.org/abs/2509.10414}, 
}

@InProceedings{pmlr-v202-von-oswald23a,
  title = 	 {Transformers Learn In-Context by Gradient Descent},
  author =       {Von Oswald, Johannes and Niklasson, Eyvind and Randazzo, Ettore and Sacramento, Joao and Mordvintsev, Alexander and Zhmoginov, Andrey and Vladymyrov, Max},
  booktitle = 	 {Proceedings of the 40th International Conference on Machine Learning},
  pages = 	 {35151--35174},
  year = 	 {2023},
  editor = 	 {Krause, Andreas and Brunskill, Emma and Cho, Kyunghyun and Engelhardt, Barbara and Sabato, Sivan and Scarlett, Jonathan},
  volume = 	 {202},
  series = 	 {Proceedings of Machine Learning Research},
  month = 	 {23--29 Jul},
  publisher =    {PMLR},
  pdf = 	 {https://proceedings.mlr.press/v202/von-oswald23a/von-oswald23a.pdf},
  url = 	 {https://proceedings.mlr.press/v202/von-oswald23a.html},
}

@inproceedings{
aky,
title={What learning algorithm is in-context learning? Investigations with linear models},
author={Ekin Aky{\"u}rek and Dale Schuurmans and Jacob Andreas and Tengyu Ma and Denny Zhou},
booktitle={The Eleventh International Conference on Learning Representations },
year={2023},
url={https://openreview.net/forum?id=0g0X4H8yN4I}
}

@misc{
shen2024do,
title={Do Pre-trained Transformers Really Learn In-context by Gradient Descent?},
author={Lingfeng Shen and Aayush Mishra and Daniel Khashabi},
year={2024},
url={https://openreview.net/forum?id=992eLydH8G}
}

@inproceedings{
wei2022chain,
title={Chain of Thought Prompting Elicits Reasoning in Large Language Models},
author={Jason Wei and Xuezhi Wang and Dale Schuurmans and Maarten Bosma and brian ichter and Fei Xia and Ed H. Chi and Quoc V Le and Denny Zhou},
booktitle={Advances in Neural Information Processing Systems},
editor={Alice H. Oh and Alekh Agarwal and Danielle Belgrave and Kyunghyun Cho},
year={2022},
url={https://openreview.net/forum?id=_VjQlMeSB_J}
}

@article{journals/corr/abs-2401-14196,
  added-at = {2025-08-06T00:00:00.000+0200},
  author = {Guo, Daya and Zhu, Qihao and Yang, Dejian and Xie, Zhenda and Dong, Kai and Zhang, Wentao and Chen, Guanting and Bi, Xiao and Wu, Y. and Li, Y. K. and Luo, Fuli and Xiong, Yingfei and Liang, Wenfeng},
  biburl = {https://www.bibsonomy.org/bibtex/2eeb87042d47b3a4a297e96ac4296c813/dblp},
  ee = {https://doi.org/10.48550/arXiv.2401.14196},
  interhash = {d937c6b412752aaba844fe3e376b0b34},
  intrahash = {eeb87042d47b3a4a297e96ac4296c813},
  journal = {CoRR},
  keywords = {dblp},
  timestamp = {2025-08-11T07:29:59.000+0200},
  title = {DeepSeek-Coder: When the Large Language Model Meets Programming - The Rise of Code Intelligence.},
  url = {http://dblp.uni-trier.de/db/journals/corr/corr2401.html#abs-2401-14196},
  volume = {abs/2401.14196},
  year = 2024
}

@misc{meta2024llama32,
  title        = {Llama 3.2: Revolutionizing edge AI and vision with open, multimodal models},
  author       = {{Meta AI}},
  year         = {2024},
  month        = sep,
  howpublished = {Meta AI Blog},
  url          = {https://ai.meta.com/blog/llama-3-2-connect-2024-vision-edge-mobile-devices/}
}

@misc{workshop2023bloom176bparameteropenaccessmultilingual,
      title={BLOOM: A 176B-Parameter Open-Access Multilingual Language Model}, 
      author={Teven Le Scao and Angela Fan and Christopher Akiki and Ellie Pavlick and Suzana Ilić and Daniel Hesslow and Roman Castagné and Alexandra Sasha Luccioni and François Yvon and Matthias Gallé and Jonathan Tow and Alexander M. Rush and Stella Biderman and Albert Webson and Pawan Sasanka Ammanamanchi and Thomas Wang and Benoît Sagot and Niklas Muennighoff and Albert Villanova del Moral and Olatunji Ruwase and Rachel Bawden and Stas Bekman and Angelina McMillan-Major and Iz Beltagy and Huu Nguyen and Lucile Saulnier and Samson Tan and Pedro Ortiz Suarez and Victor Sanh and Hugo Laurençon and Yacine Jernite and Julien Launay and Margaret Mitchell and Colin Raffel and Aaron Gokaslan and Adi Simhi and Aitor Soroa and Alham Fikri Aji and Amit Alfassy and Anna Rogers and Ariel Kreisberg Nitzav and Canwen Xu and Chenghao Mou and Chris Emezue and Christopher Klamm and Colin Leong and Daniel van Strien and David Ifeoluwa Adelani and Dragomir Radev and Eduardo González Ponferrada and Efrat Levkovizh and Ethan Kim and Eyal Bar Natan and Francesco De Toni and Gérard Dupont and Germán Kruszewski and Giada Pistilli and Hady Elsahar and Hamza Benyamina and Hieu Tran and Ian Yu and Idris Abdulmumin and Isaac Johnson and Itziar Gonzalez-Dios and Javier de la Rosa and Jenny Chim and Jesse Dodge and Jian Zhu and Jonathan Chang and Jörg Frohberg and Joseph Tobing and Joydeep Bhattacharjee and Khalid Almubarak and Kimbo Chen and Kyle Lo and Leandro Von Werra and Leon Weber and Long Phan and Loubna Ben allal and Ludovic Tanguy and Manan Dey and Manuel Romero Muñoz and Maraim Masoud and María Grandury and Mario Šaško and Max Huang and Maximin Coavoux and Mayank Singh and Mike Tian-Jian Jiang and Minh Chien Vu and Mohammad A. Jauhar and Mustafa Ghaleb and Nishant Subramani and Nora Kassner and Nurulaqilla Khamis and Olivier Nguyen and Omar Espejel and Ona de Gibert and Paulo Villegas and Peter Henderson and Pierre Colombo and Priscilla Amuok and Quentin Lhoest and Rheza Harliman and Rishi Bommasani and Roberto Luis López and Rui Ribeiro and Salomey Osei and Sampo Pyysalo and Sebastian Nagel and Shamik Bose and Shamsuddeen Hassan Muhammad and Shanya Sharma and Shayne Longpre and Somaieh Nikpoor and Stanislav Silberberg and Suhas Pai and Sydney Zink and Tiago Timponi Torrent and Timo Schick and Tristan Thrush and Valentin Danchev and Vassilina Nikoulina and Veronika Laippala and Violette Lepercq and Vrinda Prabhu and Zaid Alyafeai and Zeerak Talat and Arun Raja and Benjamin Heinzerling and Chenglei Si and Davut Emre Taşar and Elizabeth Salesky and Sabrina J. Mielke and Wilson Y. Lee and Abheesht Sharma and Andrea Santilli and Antoine Chaffin and Arnaud Stiegler and Debajyoti Datta and Eliza Szczechla and Gunjan Chhablani and Han Wang and Harshit Pandey and Hendrik Strobelt and Jason Alan Fries and Jos Rozen and Leo Gao and Lintang Sutawika and M Saiful Bari and Maged S. Al-shaibani and Matteo Manica and Nihal Nayak and Ryan Teehan and Samuel Albanie and Sheng Shen and Srulik Ben-David and Stephen H. Bach and Taewoon Kim and Tali Bers and Thibault Fevry and Trishala Neeraj and Urmish Thakker and Vikas Raunak and Xiangru Tang and Zheng-Xin Yong and Zhiqing Sun and Shaked Brody and Yallow Uri and Hadar Tojarieh and Adam Roberts and Hyung Won Chung and Jaesung Tae and Jason Phang and Ofir Press and Conglong Li and Deepak Narayanan and Hatim Bourfoune and Jared Casper and Jeff Rasley and Max Ryabinin and Mayank Mishra and Minjia Zhang and Mohammad Shoeybi and Myriam Peyrounette and Nicolas Patry and Nouamane Tazi and Omar Sanseviero and Patrick von Platen and Pierre Cornette and Pierre François Lavallée and Rémi Lacroix and Samyam Rajbhandari and Sanchit Gandhi and Shaden Smith and Stéphane Requena and Suraj Patil and Tim Dettmers and Ahmed Baruwa and Amanpreet Singh and Anastasia Cheveleva and Anne-Laure Ligozat and Arjun Subramonian and Aurélie Névéol and Charles Lovering and Dan Garrette and Deepak Tunuguntla and Ehud Reiter and Ekaterina Taktasheva and Ekaterina Voloshina and Eli Bogdanov and Genta Indra Winata and Hailey Schoelkopf and Jan-Christoph Kalo and Jekaterina Novikova and Jessica Zosa Forde and Jordan Clive and Jungo Kasai and Ken Kawamura and Liam Hazan and Marine Carpuat and Miruna Clinciu and Najoung Kim and Newton Cheng and Oleg Serikov and Omer Antverg and Oskar van der Wal and Rui Zhang and Ruochen Zhang and Sebastian Gehrmann and Shachar Mirkin and Shani Pais and Tatiana Shavrina and Thomas Scialom and Tian Yun and Tomasz Limisiewicz and Verena Rieser and Vitaly Protasov and Vladislav Mikhailov and Yada Pruksachatkun and Yonatan Belinkov and Zachary Bamberger and Zdeněk Kasner and Alice Rueda and Amanda Pestana and Amir Feizpour and Ammar Khan and Amy Faranak and Ana Santos and Anthony Hevia and Antigona Unldreaj and Arash Aghagol and Arezoo Abdollahi and Aycha Tammour and Azadeh HajiHosseini and Bahareh Behroozi and Benjamin Ajibade and Bharat Saxena and Carlos Muñoz Ferrandis and Daniel McDuff and Danish Contractor and David Lansky and Davis David and Douwe Kiela and Duong A. Nguyen and Edward Tan and Emi Baylor and Ezinwanne Ozoani and Fatima Mirza and Frankline Ononiwu and Habib Rezanejad and Hessie Jones and Indrani Bhattacharya and Irene Solaiman and Irina Sedenko and Isar Nejadgholi and Jesse Passmore and Josh Seltzer and Julio Bonis Sanz and Livia Dutra and Mairon Samagaio and Maraim Elbadri and Margot Mieskes and Marissa Gerchick and Martha Akinlolu and Michael McKenna and Mike Qiu and Muhammed Ghauri and Mykola Burynok and Nafis Abrar and Nazneen Rajani and Nour Elkott and Nour Fahmy and Olanrewaju Samuel and Ran An and Rasmus Kromann and Ryan Hao and Samira Alizadeh and Sarmad Shubber and Silas Wang and Sourav Roy and Sylvain Viguier and Thanh Le and Tobi Oyebade and Trieu Le and Yoyo Yang and Zach Nguyen and Abhinav Ramesh Kashyap and Alfredo Palasciano and Alison Callahan and Anima Shukla and Antonio Miranda-Escalada and Ayush Singh and Benjamin Beilharz and Bo Wang and Caio Brito and Chenxi Zhou and Chirag Jain and Chuxin Xu and Clémentine Fourrier and Daniel León Periñán and Daniel Molano and Dian Yu and Enrique Manjavacas and Fabio Barth and Florian Fuhrimann and Gabriel Altay and Giyaseddin Bayrak and Gully Burns and Helena U. Vrabec and Imane Bello and Ishani Dash and Jihyun Kang and John Giorgi and Jonas Golde and Jose David Posada and Karthik Rangasai Sivaraman and Lokesh Bulchandani and Lu Liu and Luisa Shinzato and Madeleine Hahn de Bykhovetz and Maiko Takeuchi and Marc Pàmies and Maria A Castillo and Marianna Nezhurina and Mario Sänger and Matthias Samwald and Michael Cullan and Michael Weinberg and Michiel De Wolf and Mina Mihaljcic and Minna Liu and Moritz Freidank and Myungsun Kang and Natasha Seelam and Nathan Dahlberg and Nicholas Michio Broad and Nikolaus Muellner and Pascale Fung and Patrick Haller and Ramya Chandrasekhar and Renata Eisenberg and Robert Martin and Rodrigo Canalli and Rosaline Su and Ruisi Su and Samuel Cahyawijaya and Samuele Garda and Shlok S Deshmukh and Shubhanshu Mishra and Sid Kiblawi and Simon Ott and Sinee Sang-aroonsiri and Srishti Kumar and Stefan Schweter and Sushil Bharati and Tanmay Laud and Théo Gigant and Tomoya Kainuma and Wojciech Kusa and Yanis Labrak and Yash Shailesh Bajaj and Yash Venkatraman and Yifan Xu and Yingxin Xu and Yu Xu and Zhe Tan and Zhongli Xie and Zifan Ye and Mathilde Bras and Younes Belkada and Thomas Wolf},
      year={2023},
      eprint={2211.05100},
      archivePrefix={arXiv},
      primaryClass={cs.CL},
      url={https://arxiv.org/abs/2211.05100}, 
}

@misc{yang2025qwen3technicalreport,
      title={Qwen3 Technical Report}, 
      author={An Yang and Anfeng Li and Baosong Yang and Beichen Zhang and Binyuan Hui and Bo Zheng and Bowen Yu and Chang Gao and Chengen Huang and Chenxu Lv and Chujie Zheng and Dayiheng Liu and Fan Zhou and Fei Huang and Feng Hu and Hao Ge and Haoran Wei and Huan Lin and Jialong Tang and Jian Yang and Jianhong Tu and Jianwei Zhang and Jianxin Yang and Jiaxi Yang and Jing Zhou and Jingren Zhou and Junyang Lin and Kai Dang and Keqin Bao and Kexin Yang and Le Yu and Lianghao Deng and Mei Li and Mingfeng Xue and Mingze Li and Pei Zhang and Peng Wang and Qin Zhu and Rui Men and Ruize Gao and Shixuan Liu and Shuang Luo and Tianhao Li and Tianyi Tang and Wenbiao Yin and Xingzhang Ren and Xinyu Wang and Xinyu Zhang and Xuancheng Ren and Yang Fan and Yang Su and Yichang Zhang and Yinger Zhang and Yu Wan and Yuqiong Liu and Zekun Wang and Zeyu Cui and Zhenru Zhang and Zhipeng Zhou and Zihan Qiu},
      year={2025},
      eprint={2505.09388},
      archivePrefix={arXiv},
      primaryClass={cs.CL},
      url={https://arxiv.org/abs/2505.09388}, 
}

@inproceedings{NEURIPS2020_1457c0d6,
 author = {Brown, Tom and Mann, Benjamin and Ryder, Nick and Subbiah, Melanie and Kaplan, Jared D and Dhariwal, Prafulla and Neelakantan, Arvind and Shyam, Pranav and Sastry, Girish and Askell, Amanda and Agarwal, Sandhini and Herbert-Voss, Ariel and Krueger, Gretchen and Henighan, Tom and Child, Rewon and Ramesh, Aditya and Ziegler, Daniel and Wu, Jeffrey and Winter, Clemens and Hesse, Chris and Chen, Mark and Sigler, Eric and Litwin, Mateusz and Gray, Scott and Chess, Benjamin and Clark, Jack and Berner, Christopher and McCandlish, Sam and Radford, Alec and Sutskever, Ilya and Amodei, Dario},
 booktitle = {Advances in Neural Information Processing Systems},
 editor = {H. Larochelle and M. Ranzato and R. Hadsell and M.F. Balcan and H. Lin},
 pages = {1877--1901},
 publisher = {Curran Associates, Inc.},
 title = {Language Models are Few-Shot Learners},
 url = {https://proceedings.neurips.cc/paper_files/paper/2020/file/1457c0d6bfcb4967418bfb8ac142f64a-Paper.pdf},
 volume = {33},
 year = {2020}
}

@inproceedings{NEURIPS2021_cc225865,
 author = {Abbe, Emmanuel and Kamath, Pritish and Malach, Eran and Sandon, Colin and Srebro, Nathan},
 booktitle = {Advances in Neural Information Processing Systems},
 editor = {M. Ranzato and A. Beygelzimer and Y. Dauphin and P.S. Liang and J. Wortman Vaughan},
 pages = {24340--24351},
 publisher = {Curran Associates, Inc.},
 title = {On the Power of Differentiable Learning versus PAC and SQ Learning},
 url = {https://proceedings.neurips.cc/paper_files/paper/2021/file/cc225865b743ecc91c4743259813f604-Paper.pdf},
 volume = {34},
 year = {2021}
}

@book{10.5555/2621980,
author = {Shalev-Shwartz, Shai and Ben-David, Shai},
title = {Understanding Machine Learning: From Theory to Algorithms},
year = {2014},
isbn = {1107057132},
publisher = {Cambridge University Press},
address = {USA},
}

@article{klivans2007unconditional,
  title={Unconditional lower bounds for learning intersections of halfspaces},
  author={Klivans, Adam R. and Sherstov, Alexander A.},
  journal={Machine Learning},
  volume={69},
  number={2-3},
  pages={97--114},
  year={2007},
  publisher={Springer},
  doi={10.1007/s10994-007-5010-1}
}

@inproceedings{
barak2022hidden,
title={Hidden Progress in Deep Learning: {SGD} Learns Parities Near the Computational Limit},
author={Boaz Barak and Benjamin L. Edelman and Surbhi Goel and Sham M. Kakade and eran malach and Cyril Zhang},
booktitle={Advances in Neural Information Processing Systems},
editor={Alice H. Oh and Alekh Agarwal and Danielle Belgrave and Kyunghyun Cho},
year={2022},
url={https://openreview.net/forum?id=8XWP2ewX-im}
}

@InProceedings{klivans_et_al:LIPIcs.APPROX-RANDOM.2014.793,
  author =	{Klivans, Adam and Kothari, Pravesh},
  title =	{{Embedding Hard Learning Problems Into Gaussian Space}},
  booktitle =	{Approximation, Randomization, and Combinatorial Optimization. Algorithms and Techniques (APPROX/RANDOM 2014)},
  pages =	{793--809},
  series =	{Leibniz International Proceedings in Informatics (LIPIcs)},
  ISBN =	{978-3-939897-74-3},
  ISSN =	{1868-8969},
  year =	{2014},
  volume =	{28},
  editor =	{Jansen, Klaus and Rolim, Jos\'{e} and Devanur, Nikhil R. and Moore, Cristopher},
  publisher =	{Schloss Dagstuhl -- Leibniz-Zentrum f{\"u}r Informatik},
  address =	{Dagstuhl, Germany},
  URL =		{https://drops.dagstuhl.de/entities/document/10.4230/LIPIcs.APPROX-RANDOM.2014.793},
  URN =		{urn:nbn:de:0030-drops-47391},
  doi =		{10.4230/LIPIcs.APPROX-RANDOM.2014.793},
  annote =	{Keywords: distribution-specific hardness of learning, gaussian space, halfspace-learning, agnostic learning}
}

@inproceedings{10.5555/3294996.3295003,
author = {Daniely, Amit},
title = {SGD learns the conjugate kernel class of the network},
year = {2017},
isbn = {9781510860964},
publisher = {Curran Associates Inc.},
address = {Red Hook, NY, USA},
booktitle = {Proceedings of the 31st International Conference on Neural Information Processing Systems},
pages = {2419–2427},
numpages = {9},
location = {Long Beach, California, USA},
series = {NIPS'17}
}

@InProceedings{pmlr-v80-safran18a,
  title = 	 {Spurious Local Minima are Common in Two-Layer {R}e{LU} Neural Networks},
  author =       {Safran, Itay and Shamir, Ohad},
  booktitle = 	 {Proceedings of the 35th International Conference on Machine Learning},
  pages = 	 {4433--4441},
  year = 	 {2018},
  editor = 	 {Dy, Jennifer and Krause, Andreas},
  volume = 	 {80},
  series = 	 {Proceedings of Machine Learning Research},
  month = 	 {10--15 Jul},
  publisher =    {PMLR},
  pdf = 	 {http://proceedings.mlr.press/v80/safran18a/safran18a.pdf},
  url = 	 {https://proceedings.mlr.press/v80/safran18a.html},
}

@inproceedings{10.5555/3495724.3497433,
author = {Daniely, Amit and Malach, Eran},
title = {Learning parities with neural networks},
year = {2020},
isbn = {9781713829546},
publisher = {Curran Associates Inc.},
address = {Red Hook, NY, USA},
booktitle = {Proceedings of the 34th International Conference on Neural Information Processing Systems},
articleno = {1709},
numpages = {10},
location = {Vancouver, BC, Canada},
series = {NIPS '20}
}

@inproceedings{
pourcel2025selfimproving,
title={Self-Improving Language Models for Evolutionary Program Synthesis: A Case Study on {ARC}-{AGI}},
author={Julien Pourcel and C{\'e}dric Colas and Pierre-Yves Oudeyer},
booktitle={Forty-second International Conference on Machine Learning},
year={2025},
url={https://openreview.net/forum?id=z4IG090qt2}
}

@misc{yuksekgonul2024textgradautomaticdifferentiationtext,
      title={TextGrad: Automatic "Differentiation" via Text}, 
      author={Mert Yuksekgonul and Federico Bianchi and Joseph Boen and Sheng Liu and Zhi Huang and Carlos Guestrin and James Zou},
      year={2024},
      eprint={2406.07496},
      archivePrefix={arXiv},
      primaryClass={cs.CL},
      url={https://arxiv.org/abs/2406.07496}, 
}

@inproceedings{
shinn2023reflexion,
title={Reflexion: language agents with verbal reinforcement learning},
author={Noah Shinn and Federico Cassano and Ashwin Gopinath and Karthik R Narasimhan and Shunyu Yao},
booktitle={Thirty-seventh Conference on Neural Information Processing Systems},
year={2023},
url={https://openreview.net/forum?id=vAElhFcKW6}
}

@misc{bhansali2024legolanguagemodelbuilding,
      title={LEGO: Language Model Building Blocks}, 
      author={Shrenik Bhansali and Alwin Jin and Tyler Lizzo and Larry Heck},
      year={2024},
      eprint={2410.18287},
      archivePrefix={arXiv},
      primaryClass={cs.CL},
      url={https://arxiv.org/abs/2410.18287}, 
}

@inproceedings{10.5555/3666122.3668141,
author = {Madaan, Aman and Tandon, Niket and Gupta, Prakhar and Hallinan, Skyler and Gao, Luyu and Wiegreffe, Sarah and Alon, Uri and Dziri, Nouha and Prabhumoye, Shrimai and Yang, Yiming and Gupta, Shashank and Majumder, Bodhisattwa Prasad and Hermann, Katherine and Welleck, Sean and Yazdanbakhsh, Amir and Clark, Peter},
title = {SELF-REFINE: iterative refinement with self-feedback},
year = {2023},
publisher = {Curran Associates Inc.},
address = {Red Hook, NY, USA},
booktitle = {Proceedings of the 37th International Conference on Neural Information Processing Systems},
articleno = {2019},
numpages = {61},
location = {New Orleans, LA, USA},
series = {NIPS '23}
}

@misc{novikov2025alphaevolvecodingagentscientific,
      title={AlphaEvolve: A coding agent for scientific and algorithmic discovery}, 
      author={Alexander Novikov and Ngân Vũ and Marvin Eisenberger and Emilien Dupont and Po-Sen Huang and Adam Zsolt Wagner and Sergey Shirobokov and Borislav Kozlovskii and Francisco J. R. Ruiz and Abbas Mehrabian and M. Pawan Kumar and Abigail See and Swarat Chaudhuri and George Holland and Alex Davies and Sebastian Nowozin and Pushmeet Kohli and Matej Balog},
      year={2025},
      eprint={2506.13131},
      archivePrefix={arXiv},
      primaryClass={cs.AI},
      url={https://arxiv.org/abs/2506.13131}, 
}

@misc{AlphaEvolve2025,
  title  = {AlphaEvolve: A Gemini-powered coding agent for designing advanced algorithms},
  author = {DeepMind},
  year   = {2025},
  url    = {https://deepmind.google/discover/blog/alphaevolve-a-gemini-powered-coding-agent-for-designing-advanced-algorithms}
}

@inproceedings{10.5555/3305890.3305998,
author = {Shalev-Shwartz, Shai and Shamir, Ohad and Shammah, Shaked},
title = {Failures of Gradient-based Deep Learning},
year = {2017},
publisher = {JMLR.org},
booktitle = {Proceedings of the 34th International Conference on Machine Learning - Volume 70},
pages = {3067–3075},
numpages = {9},
location = {Sydney, NSW, Australia},
series = {ICML'17}
}

@article{PowerEtAl2022Grokking,
  title        = {Grokking: Generalization Beyond Overfitting on Small Algorithmic Datasets},
  author       = {Alethea Power and Yuri Burda and Harri Edwards and Igor Babuschkin and Vedant Misra},
  journal      = {arXiv preprint arXiv:2201.02177},
  year         = {2022},
  url          = {https://arxiv.org/abs/2201.02177}
}

@inproceedings{NEURIPS2019_5481b2f3,
 author = {Yehudai, Gilad and Shamir, Ohad},
 booktitle = {Advances in Neural Information Processing Systems},
 editor = {H. Wallach and H. Larochelle and A. Beygelzimer and F. d\textquotesingle Alch\'{e}-Buc and E. Fox and R. Garnett},
 pages = {},
 publisher = {Curran Associates, Inc.},
 title = {On the Power and Limitations of Random Features for Understanding Neural Networks},
 url = {https://proceedings.neurips.cc/paper_files/paper/2019/file/5481b2f34a74e427a2818014b8e103b0-Paper.pdf},
 volume = {32},
 year = {2019}
}

@misc{giapitzakis2025statisticalquerycomplexitylearning,
      title={On the Statistical Query Complexity of Learning Semiautomata: a Random Walk Approach}, 
      author={George Giapitzakis and Kimon Fountoulakis and Eshaan Nichani and Jason D. Lee},
      year={2025},
      eprint={2510.04115},
      archivePrefix={arXiv},
      primaryClass={cs.LG},
      url={https://arxiv.org/abs/2510.04115}, 
}

@inproceedings{
loshchilov2018decoupled,
title={Decoupled Weight Decay Regularization},
author={Ilya Loshchilov and Frank Hutter},
booktitle={International Conference on Learning Representations},
year={2019},
url={https://openreview.net/forum?id=Bkg6RiCqY7},
}

@article{10.1145/792538.792543,
author = {Blum, Avrim and Kalai, Adam and Wasserman, Hal},
title = {Noise-tolerant learning, the parity problem, and the statistical query model},
year = {2003},
issue_date = {July 2003},
publisher = {Association for Computing Machinery},
address = {New York, NY, USA},
volume = {50},
number = {4},
issn = {0004-5411},
url = {https://doi.org/10.1145/792538.792543},
doi = {10.1145/792538.792543},
journal = {J. ACM},
month = jul,
pages = {506–519},
numpages = {14},
keywords = {statistical queries, machine learning, Computational learning theory}
}

@inproceedings{Bottou2010LargeScale,
  author    = {L{\'e}on Bottou},
  title     = {Large‐Scale Machine Learning with Stochastic Gradient Descent},
  booktitle = {Proceedings of COMPSTAT'2010},
  year      = {2010},
  publisher = {Physica Verlag},
  pages     = {177--186},
}

@book{10.5555/534258,
author = {Rissanen, Jorma},
title = {Stochastic Complexity in Statistical Inquiry Theory},
year = {1989},
isbn = {981020311X},
publisher = {World Scientific Publishing Co., Inc.},
address = {USA}
}

@ARTICLE{720554,
  author={Barron, A. and Rissanen, J. and Bin Yu},
  journal={IEEE Transactions on Information Theory}, 
  title={The minimum description length principle in coding and modeling}, 
  year={1998},
  volume={44},
  number={6},
  pages={2743-2760},
  keywords={Probability distribution;Stochastic processes;Data compression;Context modeling;Statistical distributions;Maximum likelihood decoding;Source coding;Predictive coding;Linear regression;Maximum likelihood estimation},
  doi={10.1109/18.720554}}

@inproceedings{10.1145/279943.279989,
author = {McAllester, David A.},
title = {Some PAC-Bayesian theorems},
year = {1998},
isbn = {1581130570},
publisher = {Association for Computing Machinery},
address = {New York, NY, USA},
url = {https://doi.org/10.1145/279943.279989},
doi = {10.1145/279943.279989},
booktitle = {Proceedings of the Eleventh Annual Conference on Computational Learning Theory},
pages = {230–234},
numpages = {5},
location = {Madison, Wisconsin, USA},
series = {COLT' 98}
}

@ARTICLE{86996,
  author={Barron, A.R. and Cover, T.M.},
  journal={IEEE Transactions on Information Theory}, 
  title={Minimum complexity density estimation}, 
  year={1991},
  volume={37},
  number={4},
  pages={1034-1054},
  keywords={Data compression;Convergence;Probability distribution;Probability density function;Helium;Statistics;Source coding;Statistical distributions;Information systems;Laboratories},
  doi={10.1109/18.86996}}
\bibliographystyle{icml2026}

%%%%%%%%%%%%%%%%%%%%%%%%%%%%%%%%%%%%%%%%%%%%%%%%%%%%%%%%%%%%%%%%%%%%%%%%%%%%%%%
%%%%%%%%%%%%%%%%%%%%%%%%%%%%%%%%%%%%%%%%%%%%%%%%%%%%%%%%%%%%%%%%%%%%%%%%%%%%%%%
% APPENDIX
%%%%%%%%%%%%%%%%%%%%%%%%%%%%%%%%%%%%%%%%%%%%%%%%%%%%%%%%%%%%%%%%%%%%%%%%%%%%%%%
%%%%%%%%%%%%%%%%%%%%%%%%%%%%%%%%%%%%%%%%%%%%%%%%%%%%%%%%%%%%%%%%%%%%%%%%%%%%%%%
\newpage
\appendix
\onecolumn

\section{Additional Experimental Details And Results}\label{app:experiments}

\subsection{Evaluation Tasks}\label{app:tasks}

We evaluate across a suite of synthetic algorithmic tasks that isolate distinct forms of structure: sparse global rules (parity), position-invariant search (pattern matching), symmetry (palindromes), context-free constraints (Dyck-2), arithmetic (primality), and compositions of local nonlinear maps with global summaries (CAP, SHA).
Unless noted otherwise, for each task and input length $n$ we construct class-balanced datasets with equal numbers of positive and negative examples.

\footnotetext{For Dyck-2 at $n{=}20$, the number of valid strings is too small to sample the standard-sized balanced test set without excessive duplication; in this case we reduce the test set to 1{,}000 examples.}

\begin{itemize}[leftmargin=10pt]

\item {\bf Parity (FP/FHP/$k$-P).\enspace}
Inputs are $x\in\{0,1\}^n$ and labels are $\pm1$.
For a fixed selector $s\in\{0,1\}^n$, define $y(x)=(-1)^{\langle s,x\rangle}$. We consider: (i) \emph{full parity} (FP) with $s=\mathbf{1}_n$; (ii) \emph{first-half parity} (FHP) with
$s=(\mathbf{1}_{n/2}\|\mathbf{0}_{n/2})$ (for even $n$); and (iii) \emph{random $k$-parity} ($k$-P), where $s$ is sampled uniformly from $\{0,1\}^n$ subject to $\|s\|_0=k$ (fixed per run). Datasets are generated by sampling $x\sim\Unif(\{0,1\}^n)$ and labeling by $y(x)$.

\item {\bf Pattern Matching (Pattern1/Pattern2).\enspace}
Fix a binary pattern $p\in\{0,1\}^k$ with $k<n$ and input $x\in\{0,1\}^n$. The label is $y(x)=\mathbb{I}\big[\exists\, i\in\{1,\dots,n-k+1\} \text{ s.t. } (x_i,\dots,x_{i+k-1})=p\big]$. We use $p=\texttt{00111111}$ (Pattern1) and $p=\texttt{10101010}$ (Pattern2).
This task requires a position-invariant ``search'' rule rather than reliance on fixed coordinates.

\item {\bf Palindrome (IsPal).\enspace}
For $x\in\{0,1\}^n$, $y(x)=\mathbb{I}\big[\forall\, i\in\{1,\dots,\lfloor n/2\rfloor\}: x_i=x_{n-i+1}\big]$. Positives are constructed by sampling a random first half and mirroring it.
Negatives are generated by first constructing a palindrome and then flipping one bit in the first half; this produces ``near-miss'' negatives that share most local statistics with positives.

\item {\bf Dyck-2.\enspace}
Let $\mathcal{M}:\{0,1\}^2\to\{\texttt{(},\texttt{)},\texttt{[},\texttt{]}\}$ map bit-pairs to bracket symbols, and let $S(x)$ be the resulting bracket string for $x\in\{0,1\}^n$ (so $|S(x)|=n/2$ when $n$ is even). The label is $y(x)=\mathbb{I}[S(x)\in D_2]$, where $D_2$ is the Dyck-2 language of balanced parentheses with two bracket types. This probes recognition of a context-free constraint that naturally admits stack-like reasoning.

\item {\bf Primality with restricted last digit (IsPrime).\enspace}
Inputs are digit strings $x=(x_1,\dots,x_n)\in\{0,\dots,9\}^n$ interpreted as an $n$-digit base-10 integer. The label is
$y(x)=\mathbb{I}\big[\mathrm{IsPrime}(\mathrm{int}(x))\big]$.
We construct balanced datasets by sampling uniformly from the set of $n$-digit primes (positives) and the set of $n$-digit composite numbers (negatives), while excluding leading zeros.\footnote{We exclude leading zeros when sampling $n$-digit integers.} We constrain the data distribution to integers whose last digit lies in $\{1,3,7,9\}$, i.e., $x_n\in\{1,3,7,9\}$, removing the most common last-digit shortcut and forcing dependence on the full input. 

\item {\bf Cellular Automaton Parity (CAP).\enspace}
Inputs are $x\in\{0,1\}^n$.
Define a one-step local update $x\mapsto x'$ with boundary conditions $x_0=x_{n+1}=0$: $x'_i = x_{i-1} \oplus (x_i \lor x_{i+1})$ ($i=1,\dots,n$). The label is the parity of the updated string, $y(x)=(\sum_{i=1}^n x'_i)\bmod 2$. This composes a local nonlinear transformation with a global summary statistic.

\item {\bf Graph Cycle Density (CountCycles, fn\_aa).\enspace}
Inputs are graphs encoded as strings of edges like \texttt{u12v23u45v67}\,$\dots$, where each edge is \texttt{u} + two-digit source + \texttt{v} + two-digit destination. All instances satisfy $V=E=n/2$, so the cycle rank $E-V+C(G)$ equals the number of connected components $C(G)$. We label by $y(G)=\mathbb{I}\!\left[C(G) > E/6\right]$. Positives have many small components (e.g., triangles and disjoint edges), while negatives have one or a few large components (spanning tree plus extra edges). Node IDs are permuted and edges are shuffled to avoid positional shortcuts. Whenever necessary (running \texttt{TabPFN}), we tokenize the characters as follows: digits $0$--$9\mapsto 0$--$9$, \texttt{u}$\mapsto 10$, \texttt{v}$\mapsto 11$. 

\item {\bf SHA-256 Parity (SHA).\enspace}
For $x\in\{0,1\}^n$, compute the 256-bit digest $(h_1,\dots,h_{256})=\mathrm{SHA}\text{-}256(x)$ and output its parity: $y(x)=(\sum_{i=1}^{256} h_i)\bmod 2$. Because cryptographic hashes behave pseudorandomly, this is intended as a stringent stress test.

\end{itemize}

\subsection{Baselines}\label{app:baselines}

We compare \textsc{LLM-PV} to four families of baselines: (i) classic tabular classifiers, (ii) LMs trained from scratch with SGD, (iii) fine-tuning pre-trained LMs, and (iv) in-context prediction with large instruction-tuned LMs. Unless stated otherwise, experiments use $m{=}200$ labeled examples, split into $|S_{\tr}|{=}100$ and $|S_{\val}|{=}100$, and are evaluated on an independent test set of 10{,}000 examples.

{\bf Classic ML baselines.\enspace}
We train XGBoost, Random Forest, SVM, and a Genetic Algorithm (GA) baseline on $S_{\tr}$ and select hyperparameters by validation performance on $S_{\val}$; reported numbers use the selected configuration and are evaluated on the test set.
Hyperparameter grids are:
(i) \textbf{SVM}: $C \in \{0.1,1,10,100\}$, $\gamma \in \{\texttt{scale},\texttt{auto},0.001,0.01,0.1\}$, $\texttt{kernel}\in\{\texttt{rbf},\texttt{poly},\texttt{sigmoid}\}$;
(ii) \textbf{Random Forest}: $\texttt{n\_estimators}\in\{64,128,256\}$, $\texttt{max\_depth}\in\{5,10,15,\texttt{None}\}$, $\texttt{min\_samples\_split}\in\{2,5,10\}$;
(iii) \textbf{XGBoost}: $\texttt{n\_estimators}\in\{100,128,256\}$, $\texttt{max\_depth}\in\{5,6,7\}$, $\texttt{learning\_rate}\in\{0.1,0.3\}$, $\texttt{subsample}\in\{0.8,1.0\}$;
(iv) \textbf{Genetic Algorithm (GA)}: genetic programming over Boolean expression trees with leaves given by input variables or Boolean constants and internal nodes drawn from $\{\texttt{NOT},\texttt{XOR},\texttt{AND},\texttt{OR}\}$; prediction is obtained by evaluating the tree on the input bits. Fitness is training accuracy (optionally on a random subsample for speed) minus a small size penalty to discourage large expressions. We evolve the population using tournament selection, subtree crossover, subtree mutation, and elitism, and return the best individual (selected by training fitness; reported with validation/test accuracy). GA hyperparameters are: population size $\texttt{pop\_size}\in\{300,500\}$, number of generations $\texttt{generations}{=}80$, initial program depth $\texttt{max\_depth\_init}\in\{6,10\}$, mutation depth cap $\texttt{max\_depth\_mut}\in\{4,6\}$, crossover probability $\texttt{cx\_prob}{=}0.7$, and mutation probability $\texttt{mut\_prob}{=}0.25$.

{\bf Training from scratch (SGD).\enspace}
We train decoder-only language-model backbones from scratch as binary classifiers on sequence-encoded pairs $(x,y)$, minimizing binary cross-entropy with AdamW~\citep{loshchilov2018decoupled}. Our SGD baselines are \texttt{Qwen3-1.7B}~\citep{yang2025qwen3technicalreport}, \texttt{Llama3.2-1B}~\citep{meta2024llama32}, and \texttt{Deepseek-Coder-1.3B}~\citep{journals/corr/abs-2401-14196}. We adapt each LM to classification by (i) restricting the tokenizer vocabulary to three tokens (\verb|vocab_size=3|) and (ii) replacing the LM head with a single-logit linear classifier (hidden$\to$1).
Unless stated otherwise, we train for 200 epochs with batch size 20 using cosine-annealed learning rates with $\eta_{\max}{=}10^{-5}$ and $\eta_{\min}{=}10^{-6}$.
To probe data scaling with a lighter model, we additionally train \texttt{BLOOM-75M}, a scaled-down BLOOM-style architecture~\citep{workshop2023bloom176bparameteropenaccessmultilingual} (hidden size 512, 8 heads, 24 layers), on $m{=}100$k examples per (task, $n$) for 1{,}000 epochs with batch size 256 and constant learning rate $\eta{=}10^{-5}$.

%The model has 28 transformer blocks, 16 attention heads, and hidden size 2048 (about 1.4B parameters). 
%, and evaluate at $n\in\{20,25,30,50,100\}$ on 10{,}000 test examples per $n$.

%These runs use \texttt{bfloat16} on a single node with two 94\,GB NVIDIA H100 GPUs.

{\bf Fine-tuning pre-trained LMs (FT).\enspace} We fine-tune \texttt{Qwen3-1.7B}~\citep{yang2025qwen3technicalreport}, \texttt{Llama3.2-1B}~\citep{meta2024llama32}, and \texttt{Deepseek-Coder-1.3B}~\citep{journals/corr/abs-2401-14196}. For each model and input length $n\in\{20,25,30,50,100\}$, we train with AdamW for 1{,}000 epochs using a cosine-annealing learning-rate schedule, and report accuracy on 10k held-out test examples. We evaluate both full fine-tuning and partial fine-tuning of only the top $\{2,4,8\}$ transformer blocks. Hyperparameters are selected via grid search over the number of tuned layers, batch size $\{20,50,100\}$, and learning rate $\{5\times 10^{-3}, 10^{-3}, 5\times 10^{-4}, 10^{-4}, 5\times 10^{-5}, 10^{-5}\}$, choosing the configuration with the best test performance. The selected batch size is 20 for all models, with peak learning rates of $10^{-3}$ for \texttt{Llama3.2-1B} and $5\times 10^{-3}$ for \texttt{Qwen3-1.7B} and \texttt{Deepseek-Coder-1.3B}; the scheduler anneals to a minimum learning rate of $10^{-3}$.
All runs use \texttt{bfloat16}, space-separated integer tokenization with EOS padding, and a single-logit classifier head applied to the final position.

{\bf In-context learning (ICL).\enspace} We evaluate whether a large instruction-tuned LLM can recover the latent rule from labeled examples in its context window and apply it to a new input. Concretely, given a training set $S_{\tr}$ of 200 labeled pairs, we ask whether the induced predictor $h(x_{\mathrm{test}}\mid S_{\tr})$ matches the ground-truth label $y(x_{\mathrm{test}})$. We use \texttt{Qwen3-30B-A3B-Instruct-2507} and \texttt{Qwen3-Coder-30B-A3B-Instruct}~\citep{yang2025qwen3technicalreport}, and \texttt{Deepseek-Coder-33B-Instruct}~\citep{journals/corr/abs-2401-14196}. For each of 100 test inputs, we construct a prompt (Fig.~\ref{fig:prompt_incontext}) containing the task description, the full set of 200 training examples, and the held-out test input, and then query the model for the corresponding label. We decode with temperature 0.2, top-$p$ 0.95, and a maximum of 1024 generated tokens.

\begin{figure}[t]
  \centering
  % Keep the box on one page and suppress any break-line artifacts.
  \begin{minipage}{0.94\linewidth}
    \begin{promptbox}[breakable=false,left=4pt,right=4pt,top=4pt,bottom=4pt,boxrule=0.4pt,arc=1mm]
      \footnotesize
      \textbf{Problem Statement:}  
      Given a sequence of input vectors (binary, length \{sequence\_dimension\}) and their corresponding scalar binary outputs (‘0’ or ‘1’), find a concise Python function \verb|f(x)| that accurately approximates the underlying relationship. The function should not be a trainable model, but a direct logical or mathematical representation of the target function.

      \textbf{Data Examples:}
\begin{verbatim}
000111101011110010100101001100 -> 1
... 011011010111000010010101001000 -> 1
\end{verbatim}

\textbf{You must output ONLY a single JSON object:}
\begin{verbatim}
{"code": "<python function>"}
\end{verbatim}
    \end{promptbox}
  \end{minipage}

  \vspace{-0.4em}
  \caption{Prompt used in our LLM-PV procedure. We run \texttt{GPT-5} with this prompt for up to $k$ independent iterations, each returning only Python code for a candidate target function.}
  \label{fig:prompt}
\end{figure}

\begin{figure}[t]
  \centering
  % Keep the box on one page and suppress any break-line artifacts.
  \begin{minipage}{0.94\linewidth}
    \begin{promptbox}[breakable=false,left=4pt,right=4pt,top=4pt,bottom=4pt,boxrule=0.4pt,arc=1mm]
      \footnotesize
      \textbf{Problem Statement:}  
      Given a sequence of input vectors (binary, length \{sequence\_dimension\}) and their corresponding scalar binary outputs ('0' or '1'), you have to learn a hypothesis that approximates the underlying relationship. Given the data below, determine what is the label for the given string and output ONLY the label.
      \textbf{Data Examples:}
\begin{verbatim}
000111101011110010100101001100 -> 1
... 011011010111000010010101001000 -> 1
\end{verbatim}
      \textbf{Test Input:}
\begin{verbatim}
010100110111001001010101001000
\end{verbatim}

      \textbf{You must output ONLY a single JSON object: \{"lable": "<your predicted label>"\}}
    \end{promptbox}
  \end{minipage}

  \vspace{-0.4em}
  \caption{Prompt used in in-context learning procedure. We run three models \texttt{Qwen3-30B-A3B-Instruct-2507}, \texttt{Qwen3-Coder-30B\allowbreak-A3B-Instruct}, and \texttt{Deepseek-Coder-33B\allowbreak-Instruct} with this prompt. For each prompt, the model outputs only the predicted label for the test input.}
  \label{fig:prompt_incontext}
\end{figure}

{\bf LLM-PV.\enspace}
LLM-PV (Alg.~\ref{alg:llm-ktry}, Fig.~\ref{fig:experiment-flow-validation}) uses a pretrained code LLM (\texttt{GPT-5-Thinking}) as a proposal distribution over candidate programs.
We split $S$ into $S_{\tr}$ and $S_{\val}$ of equal size, build a prompt from $S_{\tr}$ (Fig.~\ref{fig:prompt}), sample up to $k{=}5$ candidate programs, compile and filter invalid/duplicate outputs, and select the candidate with minimum validation error on $S_{\val}$.
We run the API with \texttt{reasoning\_effort=High}, \texttt{text\_verbosity=Low}, \texttt{max\_tokens=20k}, and a per-call timeout of 20 minutes; sampling parameters such as temperature/top-$p$ are platform-managed.

\subsection{Additional Results}\label{app:sgd_llm}

\subsubsection{Data Scaling}\label{app:data_scaling}
To investigate the impact of training set size on generalization, we trained the \texttt{BLOOM-75M} baseline using SGD across multiple data scales, using \(m_{\text{train}} \in\)\{10, 100, 200, 1000, 10k, 100k\} for $n{=}100$. This evaluation was conducted on Random 10-Parity, Cellular Automata Parity, and IsPrime (with restricted negatives). The \texttt{BLOOM-75M} model was trained for 1000 epochs and a constant learning rate of $\eta{=}10^{-5}$. The batch size was set to 256 for training sets with 1000 or more samples, and to 10 for smaller datasets. The model trained on this largest dataset was also evaluated across multiple input lengths $n \in \{20, 25, 30, 50, 100\}$.

The results highlight a failure of the SGD baseline to generalize, even with extensive data. As detailed in Fig.~\ref{fig:size_ablation} and Tab.~\ref{tab:size_ablation}, increasing the training set size did not yield better test performance; the model consistently overfits, achieving perfect training accuracy while its test accuracy on Random 10-Parity and Cellular Automata Parity remained at the $\approx$50\% chance level. Furthermore, Tab.~\ref{tab:bloom_ablation} show that even with 100k training samples, the model still overfits when the input dimension is too high, with performance again collapsing to chance. In contrast, LLM-PV seems to be sample-efficient, synthesizing correct programs for all three tasks with a training set size of just 100--200.

\begin{table}[t]
\centering
\setlength{\tabcolsep}{3.8pt}
\renewcommand{\arraystretch}{1.08}
\resizebox{\textwidth}{!}{%
\begin{tabular}{@{}ll|ccc|ccc|ccc|ccc|ccc|ccc@{}}
\toprule
& & \multicolumn{18}{c}{\textbf{Accuracy vs.\ training set size}} \\
\cmidrule(lr){3-20}
& & \multicolumn{3}{c|}{\textbf{$m{=}10$}}
& \multicolumn{3}{c|}{\textbf{$m{=}100$}}
& \multicolumn{3}{c|}{\textbf{$m{=}200$}}
& \multicolumn{3}{c|}{\textbf{$m{=}1{,}000$}}
& \multicolumn{3}{c|}{\textbf{$m{=}10{,}000$}}
& \multicolumn{3}{c}{\textbf{$m{=}100{,}000$}} \\
\cmidrule(lr){3-5}\cmidrule(lr){6-8}\cmidrule(lr){9-11}\cmidrule(lr){12-14}\cmidrule(lr){15-17}\cmidrule(lr){18-20}
\textbf{Task} & \textbf{Model}
& \textbf{Train} & \textbf{Test} & \textbf{Bound (A/T)}
& \textbf{Train} & \textbf{Test} & \textbf{Bound (A/T)}
& \textbf{Train} & \textbf{Test} & \textbf{Bound (A/T)}
& \textbf{Train} & \textbf{Test} & \textbf{Bound (A/T)}
& \textbf{Train} & \textbf{Test} & \textbf{Bound (A/T)}
& \textbf{Train} & \textbf{Test} & \textbf{Bound (A/T)} \\
\midrule

\multirow{2}{*}{\textbf{10-P}} 
& \texttt{BLOOM-75M} 
& 100\% & 49.9\% & 0.0/0.0\%
& 100\% & 49.1\% & 0.0/0.0\%
& 100\% & 49.7\% & 0.0/0.0\%
& 100\% & 49.9\% & 46.7/76.5\%
&  99.9\% & 50.1\% & 94.7/97.6\%
&  99.8\% & 50.7\% & 99.5/99.8\% \\
& \textsc{LLM-PV}
& 100\% & 49.3\% & 0.0/0.0\%
& 100\% & 100\% & 0.0/0.0\%
& 100\% & 100\% & 0.0/0.0\%
& -- & -- & 46.7/76.5\%
& -- & -- & 94.7/97.6\%
& -- & -- & 99.5/99.8\% \\
\midrule

\multirow{2}{*}{\textbf{\shortstack[l]{IsPrime2}}}
& \texttt{BLOOM-75M}
& 100\% & 50.9\% & 0.0/0.0\%
& 100\% & 50.2\% & 0.0/0.0\%
& 100\% & 51.3\% & 0.0/0.0\%
& 100\% & 56.4\% & 22.9/71.9\%
& 100\% & 59.7\% & 92.3/97.2\%
& 100\% & 59.9\% & 99.2/99.7\% \\
& \textsc{LLM-PV}
& 100\% & 50.5\% & 0.0/0.0\%
& 100\% & 100\% & 0.0/0.0\%
& 100\% & 100\% & 0.0/0.0\%
& -- & -- & 22.9/71.9\%
& -- & -- & 92.3/97.2\%
& -- & -- & 99.2/99.7\% \\
\midrule

\multirow{2}{*}{\textbf{\shortstack[l]{CAP}}}
& \texttt{BLOOM-75M}
& 100\% & 50.1\% & 0.0/0.0\%
& 100\% & 50.4\% & 0.0/0.0\%
& 100\% & 50.4\% & 0.0/0.0\%
& 100\% & 50.0\% & 0.0/42.2\%
&  99.6\% & 50.0\% & 85.5/94.2\%
&  99.9\% & 50.4\% & 98.5/99.4\% \\
& \textsc{LLM-PV}
& 100\% & 50.3\% & 0.0/0.0\%
& 100\% & 50.1\% & 0.0/0.0\%
& 100\% & 100\% & 0.0/0.0\%
& -- & -- & 0.0/42.2\%
& -- & -- & 85.5/94.2\%
& -- & -- & 98.5/99.4\% \\

\bottomrule
\end{tabular}%
}
\caption{\textbf{Scaling labeled data does not reliably fix SGD generalization, even on tasks with short programs.}
We report train/test accuracy (\%) at $n{=}100$ as a function of the number of labeled training examples $m$ (test set size is 10k in all cases).
\texttt{BLOOM-75M} trained with SGD attains (near-)perfect training accuracy throughout, yet test accuracy remains near chance on \textbf{10-P} and \textbf{CAP}, and improves only modestly on \textbf{IsPrime2}.
\textsc{LLM-PV} is far more sample-efficient in this setup, solving \textbf{10-P} and \textbf{IsPrime2} with $m\le 100$ and all three tasks with $m{=}200$.
The \textbf{Bound (A/T)} columns give the length-based lower bound
$1-m^{-1}[L\log|\Sigma|+\log(2L^2/\delta)]$
(with $\delta{=}10^{-10}$), evaluated using compact reference implementations with length $L$ measured in \textbf{ASCII bytes} (A, $|\Sigma|{=}128$) or \textbf{Python lexical tokens} (T, $|\Sigma|{=}64$ token types); values are clipped to $[0,1]$ and reported as \textbf{A/T}.
This bound depends only on $(m,L,|\Sigma|,\delta)$ and is included to emphasize that these targets admit short descriptions, even though SGD does not exploit this bias here.}
\label{tab:size_ablation}
\end{table}

\subsubsection{Hyperparameter Ablation}\label{app:ablation}

To assess whether SGD's failures are driven by particular optimization settings, we ran learning-rate and batch-size sweeps in the 200-sample regime, holding all other choices fixed. We trained \texttt{Qwen3-1.7B} for 200 epochs on 200 training samples and evaluated on 10k random test samples, across three representative tasks (10-P, Pattern2, IsPal) and lengths up to $n{=}100$.
For the learning-rate sweep, we used batch size $B{=}20$ and varied $\eta \in \{8\times 10^{0},8\times 10^{-1},\ldots,8\times 10^{-7}\}$.
For the batch-size sweep, we fixed $\eta$ and varied $B \in \{10,20,50,100,200\}$.
Tab.~\ref{tab:hparam_ablation_summary} reports, for each $(\text{task},n)$, the best test accuracy achieved over the corresponding sweep.

Across both sweeps, the qualitative picture is unchanged.
Random 10-Parity (10-P) stays at chance for all $n$, even after selecting the best $\eta$ or the best $B$ (max $\leq 51.1\%$), indicating that no setting in these ranges yields meaningful generalization.
Pattern Matching (Pattern2) can reach near-perfect accuracy at short lengths (e.g., $100\%$ at $n{=}20$ when sweeping $\eta$), but performance drops sharply with increasing $n$ and falls to $56.9\%$ at $n{=}100$ even after optimizing over $\eta$ (and to $65.5\%$ after optimizing over $B$).
Similarly, IsPalindrome (IsPal) improves at short lengths under the best settings (up to $72.4\%$ at $n{=}20$), yet collapses to chance by $n{=}100$ (max $50.0\%$ over $\eta$ and $49.6\%$ over $B$).

Overall, tuning learning rate or batch size mainly affects short-length performance and does not prevent the systematic degradation with length. This supports the interpretation that the dominant failure mode is out-of-distribution length generalization, rather than a fixable optimization detail.

\subsubsection{Prompt Ablation}\label{app:prompt_ablation}

We evaluate the sensitivity of \textsc{LLM-PV} to the exact wording of the prompt
given to \texttt{GPT-5-Thinking}. To construct the ablation, we use five prompt
templates that express the same high-level instruction: infer the relationship
from the training samples and return a concise Python function \texttt{f(x)} that
implements the rule. The prompts differ only in phrasing, not in the information
provided to the model.

Tab.~\ref{tab:prompt_variations_llmpv} reports the performance of
\textsc{LLM-PV} under each prompt. The results are stable across prompt
variations. In particular, all prompts achieve perfect performance on
\textsc{Pattern2}, \textsc{10-P}, \textsc{isPrime}, and \textsc{CAP}. On the
hardest task, \textsc{isPrime+47}, performance varies mildly across prompts, with
an average accuracy of \(82.5 \pm 3.1\). These results suggest that the gains of
\textsc{LLM-PV} are not driven by a carefully engineered prompt.

\begin{table}[t]
\centering

\begin{promptbox}[title={\textbf{Prompt Templates for Prompt Ablation}}]

\noindent\textbf{P1.}
Given a sequence of input vectors (\texttt{\{data\_mode\}}, length
\texttt{\{seq\_len\}}) mapped to scalar binary outputs, extract the underlying
relationship as a concise Python function \texttt{f(x)}. The solution must be a
direct logical or mathematical expression, not a machine learning model.

\medskip

\noindent\textbf{P2.}
Analyze the provided input vectors (\texttt{\{data\_mode\}}, length
\texttt{\{seq\_len\}}) and their corresponding binary outputs to determine the
governing logic. Express this logic as a short, deterministic Python function
\texttt{f(x)} using mathematical or logical operations, avoiding trainable
parameters.

\medskip

\noindent\textbf{P3.}
Identify the mapping between the input vectors (\texttt{\{data\_mode\}}, length
\texttt{\{seq\_len\}}) and binary scalar outputs. Represent this mapping through
a concise, stateless Python function \texttt{f(x)} that relies only on explicit
mathematical or logical rules rather than learned weights.

\medskip

\noindent\textbf{P4.}
Discover the strict mathematical or logical rule that maps the input vectors
(\texttt{\{data\_mode\}}, length \texttt{\{seq\_len\}}) to their binary outputs.
Output a concise Python function \texttt{f(x)} that formally encodes this rule
without relying on any trainable architecture.

\medskip

\noindent\textbf{P5.}
Infer the rule that best explains the observed input-output pairs, where each
input is a vector in \texttt{\{data\_mode\}} of length \texttt{\{seq\_len\}} and
each output is binary. Return the rule as a compact Python function
\texttt{f(x)} using explicit logical or mathematical operations.

\end{promptbox}

\caption{\textbf{Prompt templates used in the prompt ablation experiments.}
Here, \texttt{\{data\_mode\}} denotes the input representation and
\texttt{\{seq\_len\}} denotes the sequence length, with \(n=100\).}
\label{tab:prompt_templates}
\end{table}

\begin{table}[t]
\centering
\renewcommand{\arraystretch}{1.25}
\setlength{\tabcolsep}{10pt}

\begin{tabular}{|l|c|c|c|c|c|}
\hline
\textbf{Prompt} &
\textbf{Pattern2} &
\textbf{10-P} &
\textbf{isPrime} &
\textbf{CAP} &
\textbf{isPrime+47} \\
\hline
\textbf{P1} & 100 & 100 & 100 & 100 & 81.8 \\
\hline
\textbf{P2} & 100 & 100 & 100 & 100 & 77.9 \\
\hline
\textbf{P3} & 100 & 100 & 100 & 100 & 86.7 \\
\hline
\textbf{P4} & 100 & 100 & 100 & 100 & 83.0 \\
\hline
\textbf{P5} & 100 & 100 & 100 & 100 & 83.0 \\
\hline
\textbf{Average} &
$\mathbf{100 \pm 0}$ &
$\mathbf{100 \pm 0}$ &
$\mathbf{100 \pm 0}$ &
$\mathbf{100 \pm 0}$ &
$\mathbf{82.5 \pm 3.1}$ \\
\hline
\end{tabular}

\caption{\textbf{Prompt variation results for \textsc{LLM-PV}.}
We report test accuracy across five prompt templates. Performance is stable across
prompt rephrasings, indicating that the method is not sensitive to a single
carefully engineered prompt.}
\label{tab:prompt_variations_llmpv}
\end{table}

\subsubsection{Wall-Clock Times}\label{app:wall_clock_times}

We report the wall-clock runtime of the main baselines and \textsc{LLM-PV} in
Tab.~\ref{tab:wall_clock_times}. The comparison includes SGD training from
scratch and fine-tuning on \texttt{Qwen3-1.7B}, \textsc{LLM-PV} with
\texttt{GPT-5}, and in-context learning with \texttt{GPT-5-Thinking} and
\texttt{Qwen3-30B-Instruct}. All times are normalized per sample, computed as
the total runtime divided by the number of samples. For SGD training from
scratch and fine-tuning, we report the runtime using the minimum number of
epochs required to reach \(100\%\) training accuracy.

The results highlight the main computational tradeoff. \textsc{LLM-PV} incurs the
largest training-time cost, primarily because program proposals are generated by
an LLM. However, once a program has been selected, inference is fast: its test-time
cost is substantially lower than \texttt{GPT-5-Thinking} in-context learning and
competitive with the other non-API baselines. Gradient descent and fine-tuning are
fast at test time, but their predictive performance is poor. In-context learning
avoids training altogether, but its test-time cost can be large because every test
example requires a fresh model call. Thus, \textsc{LLM-PV} shifts computation from
test time to training time and obtains the best performance while retaining
efficient deployment.

\begin{table}[t]
\centering
\renewcommand{\arraystretch}{1.25}
\setlength{\tabcolsep}{10pt}

\begin{tabular}{|l|c|c|}
\hline
\textbf{Method} &
\textbf{\shortstack{Train time\\(seconds per sample)}} &
\textbf{\shortstack{Test time\\(seconds per sample)}} \\
\hline
\textbf{SGD: Qwen3-1.7B} & 0.671 & 0.016 \\
\hline
\textbf{FT: Qwen3-1.7B} & 0.345 & 0.017 \\
\hline
\textbf{LLM-PV: GPT-5} & 2.091 & 0.224 \\
\hline
\textbf{ICL: GPT-5-Thinking} & 0.001 & 53.961 \\
\hline
\textbf{ICL: Qwen3-30B-Instruct} & 0.001 & 0.139 \\
\hline
\end{tabular}

\caption{\textbf{Wall-clock runtime.} Train and test times are reported in seconds per sample. \textsc{LLM-PV} has higher training cost due to LLM-based proposal generation, but its selected program can be evaluated efficiently at test time.}
\label{tab:wall_clock_times}
\end{table}

% \begin{figure}[t]
%   \centering
%   \setlength{\tabcolsep}{6pt}
%   \renewcommand{\arraystretch}{1.0}
%   \begin{tabular}{ccc}
%     \includegraphics[width=0.31\linewidth]{\detokenize{figures/ablation_plots_batch_updated_Random_10-Parity.pdf}} &
%     \includegraphics[width=0.31\linewidth]{\detokenize{figures/ablation_plots_batch_updated_Pattern_Matching_00111111.pdf}} &
%     \includegraphics[width=0.31\linewidth]{\detokenize{figures/ablation_plots_batch_updated_IsPalindrome.pdf}} \\
%     % \footnotesize Random 10-Parity &
%     % \footnotesize Pattern Matching (00111111) &
%     % \footnotesize IsPalindrome
%   \end{tabular}
%   \caption{\textbf{Varying the batch size does not resolve overfitting.} We train instances of \texttt{Qwen3-1.7B} on Random 10-Parity {\bf (left)}, Pattern Matching (00111111) {\bf (middle)}, and IsPalindrome {\bf (right)}, reporting train and test accuracy. Changing the batch size does not materially alter the models’ tendency to overfit.}
%   \label{fig:batch_size}
% \end{figure}
\begin{table}[t]
\centering
\small
\setlength{\tabcolsep}{4.2pt}
\renewcommand{\arraystretch}{1.08}

\begin{adjustbox}{max width=\linewidth,center}
\begin{tabular}{@{}llccccc@{}}
\toprule
\textbf{Task} & \textbf{Summary} & \textbf{$n{=}20$} & \textbf{$n{=}25$} & \textbf{$n{=}30$} & \textbf{$n{=}50$} & \textbf{$n{=}100$} \\
\midrule

\multirow{2}{*}{\textbf{10-P}}
& max over $\eta$   & 50.4 & 50.3 & 50.4 & 50.2 & 51.1 \\
& max over $B$    & 50.8 & 50.4 & 50.4 & 50.9 & 50.4 \\
\addlinespace[2pt]\hline\addlinespace[2pt]

\multirow{2}{*}{\textbf{Pattern2}}
& max over $\eta$   & 100.0 & 94.8 & 93.9 & 87.9 & 56.9 \\
& max over$B$   & 97.7  & 96.4 & 96.0 & 89.5 & 65.5 \\
\addlinespace[2pt]\hline\addlinespace[2pt]

\multirow{2}{*}{\textbf{IsPal}}
& max over $\eta$   & 72.4 & 60.5 & 54.5 & 50.3 & 50.0 \\
& max over $B$    & 68.6 & 57.5 & 53.9 & 50.8 & 49.6 \\
\bottomrule
\end{tabular}
\end{adjustbox}

\caption{\textbf{Hyperparameter ablations (test accuracy, \%).}
For each task and sequence length $n$, we report the maximum test accuracy obtained by sweeping learning rates $\eta \in \{8\times 10^0,\dots,8\times 10^{-7}\}$ and batch sweep over $B \in \{10,20,50,100,200\}$. 10-P remains near chance across all $n$, while Pattern2 and IsPal achieve high accuracy at short lengths but degrade substantially as $n$ increases.}
\label{tab:hparam_ablation_summary}
\end{table}

\subsection{LLM Reasoning Traces}\label{app:traces}

We extend (Fig.~\ref{fig:trace_suite}) the reasoning-trace analysis from Fig.~\ref{fig:trace_prime} to Cellular Automata Parity and Full Parity to highlight how the model adapts its search strategy across tasks.

For Cellular Automata Parity, the model starts with simple dataset checks and parity-style baselines, then attempts a linear rule over $\mathbb{F}_2$. When this fails, it escalates to a structured search over non-linear, hand-designed features (for example edge bits, transition counts such as \#01/\#10, and their parities), and identifies an XOR rule that fits perfectly. It then rewrites the rule using parity identities and verifies equivalent forms.

For Full Parity, the trace is much shorter: after a quick sanity check, the model proposes global parity and immediately verifies a perfect match. This contrast shows that the model expands its hypothesis class only when simpler families fail, but locks onto the correct rule quickly when it is obvious.

\begin{figure*}[t]
\centering

% -------- Common TikZ style (used by all three panels) --------
\tikzset{
  trace/.style={
    font=\normalsize,
    >=Latex,
    coltitle/.style={font=\bfseries, align=center},
    tag/.style={draw, rounded corners=3pt, inner sep=3pt, font=\scriptsize,
                text width=2.05cm, align=center, minimum height=10mm},
    tagAnaly/.style={tag, fill=blue!4,   draw=blue!70!black},
    tagCand/.style={tag, fill=cyan!10,   draw=cyan!60!black},
    tagRefn/.style={tag, fill=orange!12, draw=orange!70!black},
    tagHypo/.style={tag, fill=purple!10, draw=purple!60!black},
    tagVeri/.style={tag, fill=teal!10,   draw=teal!70!black},
    stepnode/.style={circle, draw, thick, fill=black!5, minimum size=6.8mm, inner sep=0pt},
    rule/.style={font=\scriptsize, draw, rounded corners=6pt, fill=gray!08, align=left,
                 text width=4.9cm, inner sep=4pt, minimum height=10mm},
    why/.style={font=\scriptsize, draw, rounded corners=6pt, fill=blue!06, align=left,
                text width=4.9cm, inner sep=4pt, minimum height=10mm},
    link/.style={-Latex, line width=0.9pt},
    flow/.style={-Latex, dashed, gray!70, line width=0.7pt}
  }
}

% -------- Layout: left big panel, right column with two stacked panels --------
\begin{minipage}[t]{0.49\textwidth}\vspace{0pt}
\centering
\begin{subfigure}[t]{\linewidth}\vspace{0pt}
\centering
\begin{adjustbox}{max width=\linewidth,center}
\begin{tikzpicture}[trace]
\matrix (M) [
  row sep={11.5mm, between origins},
  column sep=5.0mm,
  row 1/.style={nodes={yshift=-3mm}},
]{
  \node[coltitle] {Step};  &
  \node[coltitle] {Type};  &
  \node[coltitle] {Candidate function}; &
  \node[coltitle] {Rationale}; \\

  \node[stepnode] (s1) {1}; &
  \node[tagAnaly] (t1) {Analysis}; &
  \node[rule] (r1) {{\bf Dataset check}\\ Class balance and basic counts}; &
  \node[why] (w1) {Start with simple baselines}; \\

  \node[stepnode] (s2) {2}; &
  \node (t2) {%
    \tikz[baseline=(topAB.base)]{
      \node[tagCand, minimum height=5.4mm, inner sep=2.6pt] (topAB) {Candidate A};
      \node[tagCand, below=1.6pt of topAB, minimum height=5.4mm, inner sep=2.6pt] (botAB) {Candidate B};
    }%
  }; &
  \node[rule] (r2) {{\bf Population parity} {\color{blue} (Train acc: 58\%)}\\
                    {\bf Mod-$3$ baseline} {\color{blue} (Train acc: 36\%)}}; &
  \node[why] (w2) {Quick arithmetic heuristics}; \\

  \node[stepnode] (s3) {3}; &
  \node[tagAnaly] (t3) {Analysis}; &
  \node[rule] (r3) {{\bf Linear fit over $\mathbb{F}_2$}\\
                    Solve $Xw=y  \pmod2$; inconsistent.}; &
  \node[why] (w3) {Try structured features beyond a single parity.}; \\

  \node[stepnode] (s4) {4}; &
  \node[tagCand] (t4) {Candidate C}; &
  \node[rule] (r4) {{\bf Threshold on $\#1$}\\
                    Best over $t$: $\hat y=\mathbf{1}\{\#1\ge t\}$\\ {\color{blue} (Train acc: 55\%)}}; &
  \node[why] (w4) {Count-based separators are weak; try transitions}; \\

  \node[stepnode] (s5) {5}; &
  \node[tagCand] (t5) {Candidate D}; &
  \node[rule] (r5) {{\bf Threshold on transitions}\\
                    Best over $t$: $\hat y=\mathbf{1}\{\#01+\#10\ge t\}$ \\ {\color{blue} (Train acc: 61\%)}}; &
  \node[why] (w5) {Move to parity/XOR combinations of features}; \\

  \node[stepnode] (s6) {6}; &
  \node (t6) {%
    \tikz[baseline=(topAH.base)]{
      \node[tagAnaly, minimum height=5.4mm, inner sep=2.6pt] (topAH) {Analysis};
      \node[tagHypo, below=1.6pt of topAH, minimum height=5.4mm, inner sep=2.6pt] (botAH) {Hypothesis 1};
    }%
  }; &
  \node[rule] (r6) {{\bf Feature-mask XOR search}\\
                    Best over XOR masks:  $f(x)=x_{\text{last}}\oplus(\#01\bmod 2)$ \\ {\color{blue} (Train acc: 100\%)}}; &
  \node[why] (w6) {Search XORs over a small hand-designed feature set}; \\

  \node[stepnode] (s7) {7}; &
  \node[tagRefn] (t7) {Refinement A}; &
  \node[rule] (r7) {{\bf Edge-bit only}\\
                    Best edge-bit classifier {\color{blue} (Train acc: 44\%)}}; &
  \node[why] (w7) {Check if a simpler rule suffices}; \\

  \node[stepnode] (s8) {8}; &
  \node[tagRefn] (t8) {Refinement B}; &
  \node[rule] (r8) {{\bf Transition parity only}\\
                    Best $\hat y=(\#01\bmod 2)$ \\ {\color{blue} (Train acc: 58\%)}}; &
  \node[why] (w8) {Isolate which feature drives the XOR rule}; \\

  \node[stepnode] (s9) {9}; &
  \node[tagAnaly] (t9) {Analysis}; &
  \node[rule] (r9) {{\bf Parity identity}\\
                    $\#01-\#10=x_{\text{last}}-x_{\text{first}}$.}; &
  \node[why] (w9) {Use identities to rewrite the hypothesis}; \\

  \node[stepnode] (s10) {10}; &
  \node[tagVeri] (t10) {Verification 1}; &
  \node[rule] (r10) {{\bf Equivalent form}\\
                    $f(x)=x_{\text{first}}\oplus(\#10\bmod 2)$ \\{\color{blue} (Train acc: 100\%)}}; &
  \node[why] (w10) {Verify an algebraically equivalent rule}; \\

  \node[stepnode] (s11) {11}; &
  \node[tagVeri] (t11) {Verification 2}; &
  \node[rule] (r11) {{\bf Final form}\\
                    $f(x)=x_{\text{last}}\oplus(\#01\bmod 2)$  \\ {\color{blue} (Train acc: 100\%)}}; &
  \node[why] (w11) {Same rule; most concise statement}; \\
};

\foreach \i in {1,...,11}{
  \draw[link] (w\i.west) -- (r\i.east);
}
\foreach \i [evaluate=\i as \j using int(\i+1)] in {1,...,10}{
  \draw[flow] (s\i.south) -- (s\j.north);
}
\end{tikzpicture}
\end{adjustbox}
\caption{\textbf{Cellular Automata Parity}}
\end{subfigure}
\end{minipage}
\hfill
\begin{minipage}[t]{0.49\textwidth}\vspace{0pt}
\centering

\begin{subfigure}[t]{\linewidth}\vspace{0pt}
\centering
\begin{adjustbox}{max width=\linewidth,center}
\begin{tikzpicture}[trace]
\matrix (M) [
  row sep={11.5mm, between origins},
  column sep=5.0mm,
  row 1/.style={nodes={yshift=-3mm}},
]{
  \node[coltitle] {Step};  &
  \node[coltitle] {Type};  &
  \node[coltitle] {Candidate function}; &
  \node[coltitle] {Rationale}; \\

  \node[stepnode] (s1) {1}; &
  \node[tagAnaly] (t1) {Analysis}; &
  \node[rule] (r1) {{\bf Ones-count scan}\\ Check $\#1(x)$ statistics; sweep thresholds}; &
  \node[why] (w1) {Start with simple dataset-level features}; \\

  \node[stepnode] (s2) {2}; &
  \node[tagCand] (t2) {Candidate A}; &
  \node[rule] (r2) {{\bf Parity of ones}\\
                    $\hat y=\left(\sum_{i=0}^{49} x_i\right)\bmod 2$
                    {\color{blue} (Train acc: 51\%)}}; &
  \node[why] (w2) {Classic baseline on binary strings}; \\

  \node[stepnode] (s3) {3}; &
  \node (t3) {%
    \tikz[baseline=(topBC.base)]{
      \node[tagCand, minimum height=5.4mm, inner sep=2.6pt] (topBC) {Candidate B};
      \node[tagCand, below=1.6pt of topBC, minimum height=5.4mm, inner sep=2.6pt] (botBC) {Candidate C};
    }%
  }; &
  \node[rule] (r3) {{\bf Edge bits}\\
                    $\hat y=x_0$ {\color{blue} (Train acc: 50\%)}\\
                    $\hat y=x_{49}$ {\color{blue} (Train acc: 48\%)}}; &
  \node[why] (w3) {Test whether boundary bits drive the label}; \\

  \node[stepnode] (s4) {4}; &
  \node[tagAnaly] (t4) {Analysis}; &
  \node[rule] (r4) {{\bf Linear solve over $\mathbb{F}_2$}\\
                    Solve $Xw=y  \pmod2$ for $X\in\mathbb{F}_2^{100\times 50}$.}; &
  \node[why] (w4) {Recover a sparse XOR by Gaussian elimination}; \\

  \node[stepnode] (s5) {5}; &
  \node[tagHypo] (t5) {Hypothesis}; &
  \node[rule] (r5) {{\bf Support set}\\
                    Identify $S=\{0,8,17,18,20,25,30,34,36,42\}$ (0-based).}; &
  \node[why] (w5) {Back-substitution reveals active indices}; \\

  \node[stepnode] (s6) {6}; &
  \node[tagVeri] (t6) {Verification}; &
  \node[rule] (r6) {{\bf Final XOR rule}\\
                    $f(x)=\bigoplus_{i\in S} x_i$ 
                    {\color{blue} (Train acc: 100\%)}}; &
  \node[why] (w6) {Exact match on all training examples}; \\
};

\foreach \i in {1,...,6}{
  \draw[link] (w\i.west) -- (r\i.east);
}
\foreach \i [evaluate=\i as \j using int(\i+1)] in {1,...,5}{
  \draw[flow] (s\i.south) -- (s\j.north);
}
\end{tikzpicture}
\end{adjustbox}
\caption{\textbf{Random 10-Parity}}
\end{subfigure}

\vspace{0.8em}

\begin{subfigure}[t]{\linewidth}
\centering
\begin{adjustbox}{max width=\linewidth,center}
\begin{tikzpicture}[trace]
\matrix (M) [
  row sep={11.5mm, between origins},
  column sep=5.0mm,
  row 1/.style={nodes={yshift=-3mm}},
]{
  \node[coltitle] {Step};  &
  \node[coltitle] {Type};  &
  \node[coltitle] {Candidate function}; &
  \node[coltitle] {Rationale}; \\

  \node[stepnode] (s1) {1}; &
  \node[tagAnaly] (t1) {Analysis}; &
  \node[rule] (r1) {{\bf Ones-count check}\\ Summarize $\#1(x)$; threshold baselines}; &
  \node[why] (w1) {Start with simple statistics at length 50.}; \\

  \node[stepnode] (s2) {2}; &
  \node[tagCand] (t2) {Candidate A}; &
  \node[rule] (r2) {{\bf Full parity}\\
                    $f(x)=\left(\sum_{i=0}^{49} x_i\right)\bmod 2$ \\
                    {\color{blue} (Train acc: 100\%)}}; &
  \node[why] (w2) {If labels encode global XOR, parity should match}; \\

  \node[stepnode] (s3) {3}; &
  \node (t3) {%
    \tikz[baseline=(topHV.base)]{
      \node[tagHypo, minimum height=5.4mm, inner sep=2.6pt] (topHV) {Hypothesis};
      \node[tagVeri, below=1.6pt of topHV, minimum height=5.4mm, inner sep=2.6pt] (botHV) {Verification};
    }%
  }; &
  \node[rule] (r3) {{\bf Verify on all samples}\\
                    Parity matches every label
                    {\color{blue} (Train acc: 100\%)}}; &
  \node[why] (w3) {Hypothesis confirmed; exact rule found}; \\
};

\foreach \i in {1,...,3}{
  \draw[link] (w\i.west) -- (r\i.east);
}
\foreach \i [evaluate=\i as \j using int(\i+1)] in {1,...,2}{
  \draw[flow] (s\i.south) -- (s\j.north);
}
\end{tikzpicture}
\end{adjustbox}
\caption{\textbf{Full Parity}}
\end{subfigure}

\end{minipage}

\caption{\textbf{Reasoning traces for discrete synthetic tasks.}
Left: Cellular Automata Parity, where the search escalates from simple heuristics to an XOR rule over hand-designed features and then simplifies it via parity identities.
Right (top): Random 10-Parity, solved by a linear system over $\mathbb{F}_2$ that recovers the active XOR support.
Right (bottom): Full Parity, identified directly by the global parity hypothesis and verified on all samples.}
\label{fig:trace_suite}
\end{figure*}

% \begin{table}[t]
% \footnotesize
% \setlength{\tabcolsep}{4pt}
% \renewcommand{\arraystretch}{1.12}
% \resizebox{\columnwidth}{!}{%
% \begin{tabularx}{\columnwidth}{@{}c l X c@{}}
% \toprule
% \textbf{Step} & \textbf{Name} & \textbf{Function $f_k(x)$ (returns 0/1)} & \textbf{Measured accuracy?} \\
% \midrule
% 1 & Ones count / threshold
%   & $\displaystyle f_1(x)=\mathbf{1}\big[\#1(x)>\tau\big]\ \text{(scan $\tau$; }\rho(\#1,y)\approx -0.081\text{)}$
%   & No \\
% 2 & Parity hypothesis
%   & $\displaystyle f_2(x)=\left(\sum_{i=0}^{49} x_i\right)\bmod 2$
%   & No \\
% 3 & Verify on dataset
%   & Apply $f_2$ to 100 samples; predictions match labels exactly
%   & Yes (100\%) \\
% 4 & Final (full parity)
%   & $\displaystyle f_4(x)=\bigoplus_{i=0}^{49} x_i\;=\;\left(\sum_{i=0}^{49} x_i\right)\bmod 2$
%   & Yes (100\%) \\
% \bottomrule
% \end{tabularx}%
% }
% \caption{\textbf{Full-parity reasoning trace: candidate functions and outcomes.}
% For inputs $x\in\{0,1\}^{50}$ with 0-based bits $x_i$, let $\#1(x)=\sum_i x_i$ denote the number of ones, $\oplus$ XOR, and $\mathbf{1}[\cdot]$ the indicator.
% The last column reports whether the candidate was evaluated on a 100-example set (and, if so, its accuracy); diagnostic/derivational steps may omit accuracy.}
% \label{tab:candidate-functions-parity}
% \end{table}

\section{Proofs}

\subsection{Proof of Eq.~\ref{eq:pac}}\label{app:proof_pac}

\begin{algorithm}[t]
\caption{Length-First Program Search (LFPS)}
\label{alg:program-enum}
\begin{algorithmic}[1]
\REQUIRE Sample \(S=\{(x_i,y_i)\}_{i=1}^m\), language \(\mathcal L\subseteq\Sigma^\ast\), per-run timeout \(T\in\mathbb N\), optional max length \(L_{\max}\in\mathbb N\cup\{\infty\}\)
\ENSURE A program \(u^\star\in\mathcal L\) whose total semantics \(\llbracket u^\star\rrbracket:\mathcal X\to\{\pm 1\}\) satisfies \(\llbracket u^\star\rrbracket(x_i)=y_i\) for all \((x_i,y_i)\in S\); or \(\bot\) if none is found up to \(L_{\max}\)
\FOR{\(\ell = 1,2,\dots,L_{\max}\)}
  \FORALL{strings \(u\in \mathcal L\) with \(|u|=\ell\) in lexicographic order}
    \STATE \textbf{if} \(u\) fails to compile \textbf{then continue}
    \STATE \(consistent \leftarrow \text{true}\)
    \FOR{each \((x_i,y_i)\in S\)}
      \STATE Run \(u\) on input \(x_i\) for at most \(T\) steps; let \(o_i\in\{\pm 1,\bot\}\) be the output (\(\bot\) if no halt)
      \IF{\(o_i=\bot\) \textbf{or} \(o_i\neq y_i\)}
        \STATE \(consistent \leftarrow \text{false}\); \textbf{break}
      \ENDIF
    \ENDFOR
    \IF{\(consistent\)}
      \STATE \textbf{return} \(u^\star \leftarrow u\) \hfill\COMMENT{minimal-length consistent program}
    \ENDIF
  \ENDFOR
\ENDFOR
\STATE \textbf{return} \(\bot\) \hfill\COMMENT{no consistent total program found up to \(L_{\max}\)}
\end{algorithmic}
\end{algorithm}

\begin{theorem}[\citet{10.1145/1968.1972}; see also Cor.~2.3 of \citet{10.5555/2621980}]\label{thm:finite}
Let $y:\mathcal{X}\to\{\pm 1\}$ be an unknown target function and let $\mathcal{H}\subset\{\pm 1\}^{\mathcal{X}}$ be a finite hypothesis class. Suppose we are in the realizable setting (i.e., $y\in\mathcal{H}$). Let $S=\{(x_i,y(x_i))\}_{i=1}^m$ be $m$ training examples drawn i.i.d.\ from a distribution $D$ over $\mathcal{X}\times\{\pm 1\}$. Then, with probability at least $1-\delta$ over the draw of $S$, every hypothesis $h\in\mathcal{H}$ that is consistent with $S$ satisfies
\[
\err_D(h) ~\le~ \frac{\log(|\mathcal{H}|)+\log(1/\delta)}{m}.
\]
\end{theorem}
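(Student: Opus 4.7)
The plan is to establish this classical realizable PAC bound by a one-shot union-bound argument over hypotheses whose population error is above a chosen threshold. Specifically, I will set $\epsilon := (\log|\mathcal{H}| + \log(1/\delta))/m$, define the ``bad'' subclass $\mathcal{B} := \{h \in \mathcal{H} : \textnormal{err}_D(h) > \epsilon\}$, and show that with probability at least $1-\delta$ over the draw of $S$, no hypothesis in $\mathcal{B}$ is consistent with $S$. This immediately yields the conclusion, since any $h$ that \emph{is} consistent with $S$ must lie in $\mathcal{H}\setminus\mathcal{B}$, and hence satisfies $\textnormal{err}_D(h)\le\epsilon$.

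The key steps are as follows. First, fix any $h\in\mathcal{B}$. Because the pairs $(x_i,y(x_i))$ are i.i.d.\ from $D$, the probability that $h$ agrees with the target $y$ on \emph{all} $m$ examples equals $(1-\textnormal{err}_D(h))^m < (1-\epsilon)^m \le e^{-\epsilon m}$, where the last inequality uses the elementary bound $1-x\le e^{-x}$. Second, applying a union bound over the at most $|\mathcal{H}|$ hypotheses in $\mathcal{B}$ yields
\[
\Pr_{S}\!\left[\exists\,h\in\mathcal{B} \text{ consistent with } S\right]\;\le\;|\mathcal{H}|\cdot e^{-\epsilon m}.
\]
Third, by the choice of $\epsilon$, the right-hand side equals exactly $\delta$, which gives the result.

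I expect no substantive obstacle; the argument is entirely routine and is the textbook proof of the realizable finite-class ERM bound. The realizable assumption $y\in\mathcal{H}$ is actually not used in the inequality itself — it only guarantees that the set of consistent hypotheses is nonempty (indeed, $y$ is always consistent), so the statement is nonvacuous. One minor care point is that the bound is asserted for \emph{every} consistent hypothesis simultaneously, but this is automatic from the ``exists'' formulation of the union bound: under the complementary high-probability event, \emph{no} element of $\mathcal{B}$ is consistent with $S$, so the inequality $\textnormal{err}_D(h)\le\epsilon$ holds uniformly over all $h$ that match $S$.
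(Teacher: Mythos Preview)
Your proof is correct and is exactly the standard union-bound argument for the realizable finite-class PAC bound. The paper does not give its own proof of this theorem; it is stated as a cited classical result (Valiant, 1984; Shalev-Shwartz and Ben-David, Cor.~2.3), and your argument is precisely the textbook derivation found in those references.
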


\begin{restatable}{corollary}{infinitecase}\label{cor:infinite}
Let $y:\mathcal{X}\to\{\pm 1\}$ be an unknown target function and let $\mathcal{H}=\bigcup_{\ell\ge1}\mathcal{H}_{\ell}\subset\{\pm 1\}^{\mathcal{X}}$ be a union of finite sets. Suppose we are in the realizable setting (i.e., $y \in \mathcal{H}$). Let $S=\{(x_i,y(x_i))\}_{i=1}^m$ be $m$ training examples drawn i.i.d.\ from a distribution $D$ over $\mathcal{X}\times\{\pm 1\}$. Then, with probability at least $1-\delta$ over the draw of $S$, for any $\ell\in\mathbb{N}$ and any hypothesis $h\in\mathcal{H}_\ell$ that is consistent with $S$, it holds that
\[
\err_D(h) ~\le~ \frac{\log(|\mathcal{H}_\ell|)+\log \bigl((\pi^2/6)\,\ell^2/\delta\bigr)}{m}.
\]
\end{restatable}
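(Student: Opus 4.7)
The plan is a standard stratified union bound over the countable decomposition $\mathcal{H}=\bigcup_{\ell\ge 1}\mathcal{H}_\ell$, weighted by a prior that sums to one. Concretely, I would apply Theorem~\ref{thm:finite} to each finite class $\mathcal{H}_\ell$ separately with a shrunken confidence parameter $\delta_\ell := \tfrac{6\delta}{\pi^2\ell^2}$, and then combine these events by a union bound over $\ell\in\mathbb N$.

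First, since $y\in\mathcal H$, there exists some $\ell^\star$ with $y\in\mathcal H_{\ell^\star}$, so each $\mathcal H_\ell$ that we analyze is realizable only in the sense that we examine \emph{consistent} hypotheses; Theorem~\ref{thm:finite}'s guarantee is in fact a uniform-convergence statement for consistent hypotheses in a finite class, and its conclusion holds even when $y\notin\mathcal H_\ell$ (it merely becomes vacuous if no consistent $h$ exists). Thus, for each fixed $\ell$, with probability at least $1-\delta_\ell$ over the draw of $S$, every $h\in\mathcal H_\ell$ consistent with $S$ satisfies
\[
\textnormal{err}_D(h)\ \le\ \frac{\log|\mathcal H_\ell|+\log(1/\delta_\ell)}{m}.
\]
Next, I would take a union bound over $\ell\ge 1$: the total failure probability is at most $\sum_{\ell\ge 1}\delta_\ell = \delta\cdot\tfrac{6}{\pi^2}\sum_{\ell\ge 1}\ell^{-2} = \delta$, using the Basel identity $\sum_{\ell\ge 1}\ell^{-2}=\pi^2/6$. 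On the good event, substituting $1/\delta_\ell = (\pi^2/6)\ell^2/\delta$ yields the claimed bound simultaneously for all $\ell$ and all consistent $h\in\mathcal H_\ell$.

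There is no real obstacle here; the only design choice is the prior $p_\ell\propto 1/\ell^2$, which is essentially forced by the requirement that $\{p_\ell\}$ sum to one while being expressible in closed form (any other summable weighting, e.g.\ $p_\ell\propto 2^{-\ell}$, would give a qualitatively similar bound with $\log(2^\ell/\delta)=\ell\log 2+\log(1/\delta)$ replacing the $\log((\pi^2/6)\ell^2/\delta)$ term). The $\pi^2/6$ normalization is simply what appears when one picks the polynomial weighting $1/\ell^2$.
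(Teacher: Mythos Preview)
Your proposal is correct and follows essentially the same approach as the paper: apply Theorem~\ref{thm:finite} to each finite $\mathcal H_\ell$ with confidence $\delta_\ell=\tfrac{6\delta}{\pi^2\ell^2}$, then take a union bound over $\ell$ using the Basel identity. Your remark that the per-$\ell$ bound is vacuous when no consistent hypothesis exists in $\mathcal H_\ell$ matches the paper's handling of that case.
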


\begin{proof}
Assume that $y\in\mathcal{H}=\bigcup_{\ell\ge1}\mathcal{H}_\ell$; hence, there exists some $\ell^\ast$ for which $\mathcal{H}_{\ell^\ast}$ is realizable. For each fixed $\ell$ with at least one hypothesis consistent with $S$, Thm.~\ref{thm:finite} implies that for any $\delta_\ell>0$, with probability at least $1-\delta_\ell$, every $h\in\mathcal{H}_\ell$ consistent with $S$ satisfies
\[
\err_D(h) ~\le~ \frac{\log(|\mathcal{H}_\ell|)+\log(1/\delta_\ell)}{m}.
\]
Choose $\delta_\ell=\frac{6}{\pi^2}\,\frac{\delta}{\ell^2}$, so that $\sum_{\ell\ge1}\delta_\ell=\delta$. Then, for each such $\ell$, with probability at least $1-\delta_\ell$,
\[
\err_D(h) ~\le~ \frac{\log(|\mathcal{H}_\ell|)+\log \bigl((\pi^2/6)\,\ell^2/\delta\bigr)}{m}.
\]
Applying a union bound over all $\ell\ge1$ yields the claim (for each $\ell$ with no consistent hypothesis, the inequality is vacuous).
\end{proof}

\begin{restatable}{proposition}{length}\label{prop:program-appendix}
Suppose we wish to learn a target function $y:\mathcal{X}\to\{\pm 1\}$ that can be implemented as a program of length $L$ in a programming language $\mathcal{L}$. Let $\mathcal{L}_{\ell}$ denote the set of programs of length~$\ell$ in $\mathcal{L}$, and let $S=\{(x_i,y(x_i))\}_{i=1}^m$ be $m$ training examples drawn i.i.d.\ from a distribution $D$ over $\mathcal{X}\times\{\pm 1\}$. Then, with probability at least $1-\delta$ over the draw of $S$, Alg.~\ref{alg:program-enum} outputs a program $h\in\mathcal{L}$ that is consistent with $S$ and satisfies
\[
\err_D(h) ~\le~ \frac{L\log|\Sigma|+\log(2L^2/\delta)}{m}.
\]
\end{restatable}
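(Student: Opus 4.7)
The plan is to reduce Proposition 1 directly to Corollary 1 by instantiating the stratified class $\mathcal{H}=\bigcup_{\ell\ge 1}\mathcal{H}_\ell$ with the total semantics of length-$\ell$ programs. First I would observe that every program $u\in\mathcal{L}_\ell$ is a string of length $\ell$ over $\Sigma$, so the set of distinct total semantics satisfies $|\mathcal{H}_\ell|\le|\mathcal{L}_\ell|\le|\Sigma|^\ell$ (candidates that fail to compile, fail to halt, or only partially define a function simply do not enlarge $\mathcal{H}_\ell$). Because the target $y$ admits a program of length $L$, realizability holds: $y\in\mathcal{H}_L\subseteq\mathcal{H}$.

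Next I would invoke Corollary 1 with this bound. With probability at least $1-\delta$ over the draw of $S$, for every $\ell\ge 1$ and every $h\in\mathcal{H}_\ell$ consistent with $S$,
\[
\textnormal{err}_D(h)\;\le\;\frac{\log|\mathcal{H}_\ell|+\log((\pi^2/6)\ell^2/\delta)}{m}\;\le\;\frac{\ell\log|\Sigma|+\log((\pi^2/6)\ell^2/\delta)}{m}.
\]

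I would then analyze the output of Algorithm 2. Since a length-$L$ program witnessing $y$ is consistent with $S$ by construction, the length-first enumeration must succeed at some stratum $\ell^\star\le L$ and return a program $u^\star\in\mathcal{L}_{\ell^\star}$ whose (necessarily total, by the algorithm's rejection of $\bot$) semantics agrees with every labeled example. Applying the above bound at $\ell=\ell^\star$ and noting that both $\ell\log|\Sigma|$ and $\log(\ell^2)$ are nondecreasing in $\ell$, we obtain
\[
\textnormal{err}_D(\llbracket u^\star\rrbracket)\;\le\;\frac{L\log|\Sigma|+\log((\pi^2/6)L^2/\delta)}{m}.
\]
Finally, since $\pi^2/6<2$, the log term is bounded by $\log(2L^2/\delta)$, yielding the stated inequality.

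There is no real obstacle here beyond careful bookkeeping. The only subtleties to flag are: (i) that $|\mathcal{H}_\ell|\le|\Sigma|^\ell$ regardless of how many length-$\ell$ strings fail to compile or halt, so Corollary 1 applies unchanged; (ii) that the algorithm's rejection of non-halting or non-compiling candidates (returning $\bot$) guarantees the output represents a \emph{total} function, which is what the hypothesis class requires; and (iii) that the implicit assumption $L_{\max}\ge L$ (or $L_{\max}=\infty$) is needed so that the search reaches the stratum containing the witness—this is consistent with the PAC statement, which presumes the learner can search up to length $L$.
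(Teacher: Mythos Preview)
Your proposal is correct and follows essentially the same route as the paper's proof: instantiate Corollary~1 with the length-stratified program classes, use $|\mathcal{H}_\ell|\le|\Sigma|^\ell$, observe that length-first search returns a consistent program at some stratum $\ell^\star\le L$, and then upper-bound both the $\ell\log|\Sigma|$ and $\log(\ell^2)$ terms by their values at $L$ together with $\pi^2/6<2$. If anything, your write-up is slightly more careful in distinguishing programs from their total semantics and in flagging the implicit $L_{\max}\ge L$ assumption.
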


\begin{proof}
Since $y\in\mathcal{L}$, there exists a minimal length $L$ such that $y\in\mathcal{L}_L$. Therefore, there is at least one program of length $L$ consistent with $S$. Alg.~\ref{alg:program-enum} enumerates programs in order of increasing length, so it eventually returns a program $h$ of some length $\ell\le L$ that is consistent with $S$. Every program in $\mathcal{L}_\ell$ is described over the alphabet $\Sigma$, hence $|\mathcal{L}_\ell|\le|\Sigma|^\ell$ and $\log|\mathcal{L}_\ell|\le \ell\log|\Sigma|\le L\log|\Sigma|$. Applying Cor.~\ref{cor:infinite} with $\mathcal{H}=\mathcal{L}$ and $\mathcal{H}_\ell=\mathcal{L}_\ell$ and then upper-bounding by $L$ gives
\[
\err_D(h) ~\le~ \frac{\log|\mathcal{L}_\ell|+\log(2\ell^2/\delta)}{m}
~\le~ \frac{L\log|\Sigma|+\log(2L^2/\delta)}{m}.
\]
\end{proof}

\pv*

\begin{proof}
Fix $\epsilon\ge 0$ and $\delta\in(0,1)$. Throughout the proof we condition on the (random) training set
$S_{\mathrm{tr}}$ and use that, by assumption, all $k$ trials produce accepted hypotheses using only
$S_{\mathrm{tr}}$. Hence, conditioned on $S_{\mathrm{tr}}$, the hypotheses $(h_t)_{t=1}^k$ are i.i.d.\ draws from
the (accepted) proposal distribution $q(\cdot\mid S_{\mathrm{tr}})$ and are independent of $S_{\mathrm{val}}$.
(Here $|S_{\mathrm{tr}}|=m_{\mathrm{tr}}$ and $|S_{\mathrm{val}}|=m_{\mathrm{val}}$.)

\paragraph{Step 1: a near-$\epsilon$ candidate appears with high probability.}
Define the set of $\epsilon$-good hypotheses
\[
\mathcal G_\epsilon \;:=\;\{h:\mathcal X\to\{\pm1\}\ :\ \err_D(h)\le \epsilon\}.
\]
By definition,
\[
p_\epsilon(S_{\mathrm{tr}})\;=\;\Pr_{h\sim q(\cdot\mid S_{\mathrm{tr}})}[\,h\in\mathcal G_\epsilon\,].
\]
Let $E_{\mathrm{hit}}$ be the event that at least one of the $k$ candidates is $\epsilon$-good:
\[
E_{\mathrm{hit}} \;:=\;\Big\{\exists\, t\in[k]\ \text{s.t.}\ h_t\in\mathcal G_\epsilon\Big\}.
\]
Conditioned on $S_{\mathrm{tr}}$, since the $h_t$ are i.i.d.\ from $q(\cdot\mid S_{\mathrm{tr}})$,
\[
\Pr\!\big(E_{\mathrm{hit}}^c \mid S_{\mathrm{tr}}\big)
=\Pr\!\Big(\forall t\in[k],\ h_t\notin\mathcal G_\epsilon \,\Big|\, S_{\mathrm{tr}}\Big)
=(1-p_\epsilon(S_{\mathrm{tr}}))^k.
\]
Assume $k\ge \left\lceil \frac{\log(\delta/2)}{\log(1-p_\epsilon(S_{\mathrm{tr}}))}\right\rceil$.
Since $\log(1-p_\epsilon(S_{\mathrm{tr}}))\le 0$, this implies
$(1-p_\epsilon(S_{\mathrm{tr}}))^k \le \delta/2$, and therefore
\begin{equation}
\Pr\!\big(E_{\mathrm{hit}} \mid S_{\mathrm{tr}}\big)\;\ge\;1-\delta/2.
\label{eq:hit}
\end{equation}

\paragraph{Step 2: uniform validation-to-population deviation over $k$ hypotheses.}
For each fixed hypothesis $h$, by Hoeffding's inequality applied to the Bernoulli losses
$\mathbf{1}\{h(x'_j)\neq y(x'_j)\}$ on $S_{\mathrm{val}}$ with $|S_{\mathrm{val}}|=m_{\mathrm{val}}$,
\[
\Pr\!\Big(\big|\err_{S_{\mathrm{val}}}(h)-\err_D(h)\big|> \alpha\Big)\;\le\;2e^{-2m_{\mathrm{val}}\alpha^2}.
\]
Conditioned on $S_{\mathrm{tr}}$, the hypotheses $h_1,\dots,h_k$ are independent of $S_{\mathrm{val}}$,
so we can apply the above bound to each $h_t$ and union bound over $t\in[k]$:
\[
\Pr\!\Big(\max_{t\in[k]}\big|\err_{S_{\mathrm{val}}}(h_t)-\err_D(h_t)\big|>\alpha\ \Big|\ S_{\mathrm{tr}}\Big)
\;\le\;\sum_{t=1}^k 2e^{-2m_{\mathrm{val}}\alpha^2}
\;=\;2k\,e^{-2m_{\mathrm{val}}\alpha^2}.
\]
Choose
\[
\alpha \;:=\;\sqrt{\frac{\log(4k/\delta)}{2m_{\mathrm{val}}}}.
\]
Then $2k e^{-2m_{\mathrm{val}}\alpha^2}=2k e^{-\log(4k/\delta)}=\delta/2$, hence
\begin{equation}
\Pr\!\Big(\forall t\in[k],\ \big|\err_{S_{\mathrm{val}}}(h_t)-\err_D(h_t)\big|\le \alpha\ \Big|\ S_{\mathrm{tr}}\Big)
\;\ge\;1-\delta/2.
\label{eq:uniform}
\end{equation}
Let $E_{\mathrm{unif}}$ denote the event inside \eqref{eq:uniform}.

\paragraph{Step 3: conclude the bound on $\err_D(h^\star)$.}
On the intersection $E_{\mathrm{hit}}\cap E_{\mathrm{unif}}$, pick an index
$t_{\mathrm{good}}\in[k]$ such that $h_{t_{\mathrm{good}}}\in\mathcal G_\epsilon$, so
$\err_D(h_{t_{\mathrm{good}}})\le \epsilon$. Also, on $E_{\mathrm{unif}}$ we have for all $t$,
\[
\err_{S_{\mathrm{val}}}(h_t)\;\le\;\err_D(h_t)+\alpha
\qquad\text{and}\qquad
\err_D(h_t)\;\le\;\err_{S_{\mathrm{val}}}(h_t)+\alpha.
\]
Since $h^\star\in\arg\min_{t\in[k]} \err_{S_{\mathrm{val}}}(h_t)$,
\[
\err_{S_{\mathrm{val}}}(h^\star)\;\le\;\err_{S_{\mathrm{val}}}(h_{t_{\mathrm{good}}})
\;\le\;\err_D(h_{t_{\mathrm{good}}})+\alpha
\;\le\;\epsilon+\alpha.
\]
Applying the other side of the uniform deviation bound to $h^\star$ yields
\[
\err_D(h^\star)\;\le\;\err_{S_{\mathrm{val}}}(h^\star)+\alpha
\;\le\;\epsilon+2\alpha
\;=\;\epsilon + 2\sqrt{\frac{\log(4k/\delta)}{2m_{\mathrm{val}}}}.
\]

\paragraph{Step 4: probability of the good event.}
By \eqref{eq:hit} and \eqref{eq:uniform} and a union bound (still conditioned on $S_{\mathrm{tr}}$),
\[
\Pr\!\big(E_{\mathrm{hit}}\cap E_{\mathrm{unif}} \mid S_{\mathrm{tr}}\big)
\;\ge\;1-\delta/2-\delta/2 \;=\;1-\delta.
\]
Removing the conditioning (i.e., averaging over $S_{\mathrm{tr}}$) preserves the same lower bound, so
with probability at least $1-\delta$ over $(S_{\mathrm{tr}},S_{\mathrm{val}})$ and all algorithmic randomness,
\[
\err_D(h^\star)\;\le\;\epsilon + 2\sqrt{\frac{\log(4k/\delta)}{2m_{\mathrm{val}}}}.
\]
This is exactly the claimed guarantee.
\end{proof}

\section{Finite-Precision Mini-Batch SGD as a $1$-STAT$(b)$ Statistical Algorithm}
\label{sec:sgd-1statb}

Consider optimizing a parameter vector $\theta\in\Theta\subseteq\mathbb R^d$ from i.i.d.\ data $z\sim D$ via a loss $\ell(\theta,z)$. To model $b$-bit access to gradients in the statistical-query framework, we assume each
per-example coordinate gradient is uniformly bounded:
there exists $G>0$ such that for all $\theta\in\Theta$, $j\in[d]$, and $z\in\mathcal Z$, $|\partial_j \ell(\theta,z)|\le G$.

{\bf Coordinate mini-batch SGD.\enspace} We consider a simple coordinate version of mini-batch SGD. Fix a batch size $B$ and step sizes
$(\eta_t)_{t\ge 1}$. Starting from $\theta_1$, at each iteration $t$ the algorithm selects a coordinate $j_t\in[d]$ to update (the choice may depend on the past randomness and observations). It then draws a fresh
mini-batch $z_{t,1},\dots,z_{t,B}\stackrel{\text{i.i.d.}}{\sim} D$ and uses these samples to estimate the $j_t$-th partial derivative by the empirical average
\[
\widehat g_{t,j_t}:=\frac{1}{B}\sum_{i=1}^B \partial_{j_t}\ell(\theta_t,z_{t,i}).
\]
Finally, it takes a step along that coordinate:
\[
\theta_{t+1}=\theta_t-\eta_t\,\widehat g_{t,j_t}\,e_{j_t}.
\]

{\bf Finite-precision model.\enspace} To obtain sample/iteration lower bounds, we analyze a finite-precision abstraction of coordinate mini-batch SGD. The point is that, in many implementations, each per-example coordinate gradient is effectively represented with only a small number of bits.  This finite-precision view is also convenient for connecting SGD to the statistical-algorithm framework~\cite{reyzin2020statisticalqueriesstatisticalalgorithms}.

Fix an integer $b\ge 1$ and set $M:=2^b$.  Partition the interval $[-G,G]$ into $M$ equal bins of width $\Delta := \frac{2G}{M}$. For any $u\in[-G,G]$, define the (clipped) bin index
\[
\mathrm{idx}(u)
\;:=\;
\min\Big\{M-1,\ \big\lfloor (u+G)/\Delta \big\rfloor\Big\}
\in\{0,1,\dots,M-1\},
\]
and map $u$ to the midpoint of its bin:
\[
Q_b(u)
\;:=\;
-G+\Big(\mathrm{idx}(u)+\tfrac12\Big)\Delta,
\qquad u\in[-G,G].
\]
This is a deterministic $b$-bit quantizer with worst-case error at most half a bin:
for all $u\in[-G,G]$,
\begin{equation}
\label{eq:quant-error}
|Q_b(u)-u|\le \Delta/2 = G/2^b.
\end{equation}

\begin{definition}[$b$-bit coordinate mini-batch SGD]
\label{def:sgd-bbit}
The $b$-bit variant of coordinate mini-batch SGD replaces each per-example coordinate gradient $\partial_{j_t}\ell(\theta_t^{(b)},z_{t,i})$ by its quantized value $Q_b(\partial_{j_t}\ell(\theta_t^{(b)},z_{t,i}))$.  At iteration $t$ it forms the quantized mini-batch estimate
\[
\widehat g^{(b)}_{t,j_t}
:=\frac{1}{B}\sum_{i=1}^B Q_b\big(\partial_{j_t}\ell(\theta_t^{(b)},z_{t,i})\big),
\]
and updates only coordinate $j_t$:
\[
\theta_{t+1}^{(b)}=\theta_t^{(b)}-\eta_t\,\widehat g^{(b)}_{t,j_t}\,e_{j_t}.
\]
\end{definition}

Before turning to lower bounds, we introduce a simple stability estimate for finite-precision updates. Intuitively, the $b$-bit run differs from the full-precision run in two ways: each per-example coordinate gradient is quantized, and the two runs may evaluate gradients at slightly different iterates.
Under a Lipschitz assumption on per-example coordinate gradients, the accumulated deviation between the two coupled trajectories scales with the rate of precision $O(2^{-b})$.

\begin{lemma}[Quantization error telescoping under Lipschitz gradients]
\label{lem:quant-telescope}
Fix $b\in\mathbb N$ and a quantizer $Q_b:[-G,G]\to[-G,G]$ satisfying
$|Q_b(x)-x|\le \frac{G}{2^b}$ for all $x \in [-G,G]$.
Assume per-example coordinate gradients are bounded and Lipschitz in parameters: there exists $L > 0$ such that
for all $\theta,\theta'\in\Theta$, all $z\in\mathcal Z$, and all $j\in[d]$,
\begin{equation}
\label{eq:coord-lip}
|\partial_j \ell(\theta,z)| \le G,
\qquad
|\partial_j \ell(\theta,z)-\partial_j \ell(\theta',z)| \le L\|\theta-\theta'\|_2 .
\end{equation}

At iteration $t$, define the (unquantized) mini-batch coordinate gradient
\[
\widehat g_{t,j_t}:=\frac{1}{B}\sum_{i=1}^B \partial_{j_t}\ell(\theta_t,z_{t,i}) \in [-G,G],
\]
and the $b$-bit mini-batch coordinate gradient
\[
\widehat g^{(b)}_{t,j_t}:=\frac{1}{B}\sum_{i=1}^B Q_b \big(\partial_{j_t}\ell(\theta^{(b)}_t,z_{t,i})\big)\in[-G,G].
\]
Consider two coupled runs that use the same initialization $\theta_1^{(b)}=\theta_1$,
the same coordinates $(j_t)_{t\ge 1}$, and the same mini-batches $(z_{t,i})$, updated by
\[
\theta_{t+1}~=~\theta_t-\eta_t\,\widehat g_{t,j_t}\,e_{j_t},
\qquad
\theta^{(b)}_{t+1}~=~\theta^{(b)}_t-\eta_t\,\widehat g^{(b)}_{t,j_t}\,e_{j_t}.
\]
Then, for all $T\ge 1$,
\begin{equation}
\label{eq:iterate-telescope-lip}
\|\theta^{(b)}_{T+1}-\theta_{T+1}\|_2
~\le~
\frac{G}{2^b}\sum_{t=1}^T \eta_t \prod_{r=t+1}^{T}\bigl(1+L\eta_r\bigr)
~\le~
\frac{G}{2^b L}\Bigl(\exp\bigl(L\sum_{t=1}^T \eta_t\bigr)-1\Bigr),
\end{equation}
with the convention that the empty product equals $1$ (so the $t=T$ term is $\eta_T$).
\end{lemma}

\begin{proof}
Fix $t$. Write
\[
a_{t,i}:=\partial_{j_t}\ell(\theta^{(b)}_t,z_{t,i}),
\qquad
b_{t,i}:=\partial_{j_t}\ell(\theta_t,z_{t,i}).
\]
By the triangle inequality,
\begin{align*}
\big|\widehat g^{(b)}_{t,j_t}-\widehat g_{t,j_t}\big|
&=
\left|\frac{1}{B}\sum_{i=1}^B Q_b(a_{t,i})-\frac{1}{B}\sum_{i=1}^B b_{t,i}\right|\\
&\le
\left|\frac{1}{B}\sum_{i=1}^B \big(Q_b(a_{t,i})-a_{t,i}\big)\right|
+
\left|\frac{1}{B}\sum_{i=1}^B (a_{t,i}-b_{t,i})\right|\\
&\le
\frac{1}{B}\sum_{i=1}^B |Q_b(a_{t,i})-a_{t,i}|
+
\frac{1}{B}\sum_{i=1}^B |a_{t,i}-b_{t,i}|.
\end{align*}
By the quantization property, $|Q_b(a_{t,i})-a_{t,i}|\le G/2^b$.
By the coordinate Lipschitz condition \eqref{eq:coord-lip},
\[
|a_{t,i}-b_{t,i}|
=
\big|\partial_{j_t}\ell(\theta^{(b)}_t,z_{t,i})-\partial_{j_t}\ell(\theta_t,z_{t,i})\big|
\le
L\|\theta^{(b)}_t-\theta_t\|_2
\quad\text{for each }i.
\]
Therefore,
\begin{equation}
\label{eq:grad-diff-bound}
\big|\widehat g^{(b)}_{t,j_t}-\widehat g_{t,j_t}\big|
\le
\frac{G}{2^b} + L\|\theta^{(b)}_t-\theta_t\|_2.
\end{equation}

Now subtract the updates:
\[
\theta^{(b)}_{t+1}-\theta_{t+1}
=
(\theta^{(b)}_t-\theta_t)-\eta_t(\widehat g^{(b)}_{t,j_t}-\widehat g_{t,j_t})e_{j_t}.
\]
Taking $\ell_2$ norms and using $\|e_{j_t}\|_2=1$ gives
\[
\|\theta^{(b)}_{t+1}-\theta_{t+1}\|_2
\le
\|\theta^{(b)}_t-\theta_t\|_2
+
\eta_t\big|\widehat g^{(b)}_{t,j_t}-\widehat g_{t,j_t}\big|.
\]
Plugging \eqref{eq:grad-diff-bound} yields the recursion
\[
\Delta_{t+1}
\le
(1+L\eta_t)\Delta_t + \eta_t\frac{G}{2^b},
\qquad\text{where }\Delta_t:=\|\theta^{(b)}_t-\theta_t\|_2.
\]
Assuming $\theta^{(b)}_1=\theta_1$ so that $\Delta_1=0$, unrolling gives
\[
\Delta_{T+1}
\le
\frac{G}{2^b}\sum_{t=1}^T \eta_t \prod_{r=t+1}^{T}(1+L\eta_r),
\]
which is the first bound in \eqref{eq:iterate-telescope-lip}.

For the second bound, use $1+u\le e^u$ to obtain
\[
\prod_{r=t+1}^{T}(1+L\eta_r)\le \exp \Bigl(L\sum_{r=t+1}^{T}\eta_r\Bigr).
\]
Define $S_t:=\sum_{r=t}^{T}\eta_r$ and $A_t:=e^{L S_t}$, so $A_{t}=e^{L\eta_t}A_{t+1}$ and $A_{T+1}=1$.
Then
\[
A_t-A_{t+1}=(e^{L\eta_t}-1)A_{t+1}\ge L\eta_t\,A_{t+1}
\quad\Longrightarrow\quad
\eta_t A_{t+1}\le \frac{1}{L}(A_t-A_{t+1}).
\]
Summing this inequality over $t=1,\dots,T$ yields
\[
\sum_{t=1}^T \eta_t \exp \Bigl(L\sum_{r=t+1}^{T}\eta_r\Bigr)
=
\sum_{t=1}^T \eta_t A_{t+1}
\le
\frac{1}{L}\sum_{t=1}^T (A_t-A_{t+1})
=
\frac{1}{L}(A_1-A_{T+1})
=
\frac{1}{L}\Bigl(e^{L\sum_{t=1}^T\eta_t}-1\Bigr).
\]
Therefore,
\[
\Delta_{T+1}\le \frac{G}{2^b L}\Bigl(\exp\bigl(L\sum_{t=1}^T \eta_t\bigr)-1\Bigr),
\]
establishing \eqref{eq:iterate-telescope-lip}.
\end{proof}

\subsection{$b$-Bit Coordinate Mini-Batch SGD as a $1$-STAT$(b)$ Algorithm}

A $1$-STAT$(b)$ query is a vector of $b$ Boolean functions
$g=(g_1,\dots,g_b)$ with $g_k:\mathcal Z\to\{0,1\}$.
The oracle draws a fresh $z\sim D$ and returns $(g_1(z),\dots,g_b(z))\in\{0,1\}^b$.

\paragraph{Encoding the quantized gradient as a $1$-STAT$(b)$ answer.}
Fix $(\theta,j)$. Recall that the quantizer $Q_b$ defined above maps $[-G,G]$ to exactly $M:=2^b$ midpoints,
indexed by $\{0,\dots,M-1\}$.  For a sample $z\in\mathcal Z$, define
\[
\mathrm{idx}_{\theta,j}(z)\;:=\;\mathrm{idx}\big(\partial_j\ell(\theta,z)\big)\in\{0,1,\dots,M-1\}.
\]
For each $r\in\{1,\dots,b\}$, define the Boolean function $g^{\theta,j}_r:\mathcal Z\to\{0,1\}$ as the $r$-th bit
(under a fixed convention) of the binary representation of $\mathrm{idx}_{\theta,j}(z)$:
\[
g^{\theta,j}_r(z)\;:=\;\mathrm{bit}_r \left(\mathrm{idx}_{\theta,j}(z)\right).
\]
Then one call to $1$-STAT$(b)$ on $(g^{\theta,j}_1,\dots,g^{\theta,j}_b)$ returns the $b$ bits encoding
$\mathrm{idx}_{\theta,j}(z)$ for a fresh sample $z\sim D$, from which the algorithm reconstructs
\[
Q_b \big(\partial_j\ell(\theta,z)\big)\;=\;-G+\Big(\mathrm{idx}_{\theta,j}(z)+\tfrac12\Big)\Delta.
\]

\begin{lemma}[$b$-bit mini-batch SGD is a $1$-STAT$(b)$ statistical algorithm]
\label{lem:sgd-is-1statb}
A $T$-iteration run of $b$-bit coordinate mini-batch SGD (Definition~\ref{def:sgd-bbit}) with batch size $B$
can be implemented using exactly $q=TB$ calls to $1$-STAT$(b)$, plus internal computation.
\end{lemma}

\begin{proof}
At iteration $t$, for each $i=1,\dots,B$ the algorithm needs the quantized value
$Q_b(\partial_{j_t}\ell(\theta^{(b)}_t,z_{t,i}))$ for a fresh $z_{t,i}\sim D$.
Fix $\theta=\theta^{(b)}_t$ and $j=j_t$. Using the encoding above, one call to $1$-STAT$(b)$ on $(g^{\theta,j}_1,\dots,g^{\theta,j}_b)$ returns the $b$ bits encoding $\mathrm{idx}_{\theta,j}(z_{t,i})$, hence allows reconstruction of
$Q_b(\partial_j\ell(\theta,z_{t,i}))$. Repeating this for $i=1,\dots,B$ yields the $B$ quantized per-example gradients, which can be averaged to obtain
$\widehat g^{(b)}_{t,j_t}$ and then used to update $\theta^{(b)}_{t+1}$.
Over $T$ iterations this uses exactly $TB$ oracle calls.
\end{proof}

\begin{lemma}[SDA for planted $k$-parity testing]
\label{lem:sda-parity}
Let $n\ge 1$, $k\in\{1,\dots,n\}$, and let
\[
\mathcal X=\{0,1\}^n,
\qquad
\mathcal Y=\{\pm 1\},
\qquad
\mathcal S_k:=\{s\in\{0,1\}^n:\|s\|_0=k\},
\qquad
N:=|\mathcal S_k|=\binom{n}{k}.
\]
For each $s\in\mathcal S_k$, let $D_s$ be the realizable distribution on $\mathcal X\times\mathcal Y$ given by
\[
x\sim\Unif(\mathcal X),
\qquad
y=f_s(x):=(-1)^{\langle s,x\rangle},
\qquad
\text{where }\langle s,x\rangle := \sum_{i=1}^n s_i x_i \pmod 2.
\]
Let $D_0$ be the null distribution where $x\sim\Unif(\mathcal X)$ and $y\sim\Unif(\mathcal Y)$ independently.
Let $\chi_{D_0}(\cdot,\cdot)$ and $\rho(\cdot,D_0)$ be as in Definitions~27--28 of
\citet{reyzin2020statisticalqueriesstatisticalalgorithms}
(equivalently, \citet{doi:10.1137/16M1078471}).

Then for every nonempty $\mathcal D'\subseteq\{D_s:s\in\mathcal S_k\}$ with $|\mathcal D'|=m$, we have
\[
\rho(\mathcal D',D_0)=\frac{1}{m}.
\]
\noindent
(Here $\rho$ includes diagonal terms, i.e., $D_1,D_2$ are drawn independently and uniformly from $\mathcal D'$, so
$\Pr[D_1=D_2]=1/|\mathcal D'|$.)

Consequently, for the promise testing problem $Z$ (distinguish $D_0$ from some $D_s$), define
$\SDA_{\mathbb Z}(Z,\bar\gamma)$ to be the \emph{integer-valued} statistical dimension with average correlation,
namely the largest integer $d\ge 1$ such that for every $\mathcal D'\subseteq\{D_s:s\in\mathcal S_k\}$ with
$|\mathcal D'|\ge N/d$, we have $\rho(\mathcal D',D_0)\le \bar\gamma$;
if no such integer exists, define $\SDA_{\mathbb Z}(Z,\bar\gamma):=0$.
Then for every $\bar\gamma\in(0,1)$,
\[
\SDA_{\mathbb Z}(Z,\bar\gamma)=
\begin{cases}
\Theta(\bar\gamma N), & \text{if }\bar\gamma \ge 1/N,\\
0, & \text{if }\bar\gamma < 1/N.
\end{cases}
\]
In particular, when $\bar\gamma<1/N$, Theorem~30 is vacuous in this regime (and in particular, it does not imply any $\omega(1)$ query lower bound).
\end{lemma}

\begin{remark}[Real-valued SDA and the $\bar\gamma<1/N$ regime]
Definition~29 in \citet{reyzin2020statisticalqueriesstatisticalalgorithms} defines
$\SDA(Z,\bar\gamma)$ as the \emph{largest real} $d>0$ such that
for every $\mathcal D'\subseteq\{D_s:s\in\mathcal S_k\}$ with $|\mathcal D'|\ge N/d$ we have
$\rho(\mathcal D',D_0)\le \bar\gamma$.
When a maximum need not be attained, we use the standard extension
\[
\SDA^{\sup}(Z,\bar\gamma)
\;:=\;
\sup\Big\{ d>0:\ \forall\,\mathcal D'\subseteq\{D_s\}\ \text{with }|\mathcal D'|\ge N/d,\ 
\rho(\mathcal D',D_0)\le \bar\gamma \Big\}.
\]
In our setting, if $\bar\gamma<1/N$ then the feasible set equals $(0,1)$ (all $d\in(0,1)$ make the condition
vacuous, while no $d\ge 1$ is feasible), and hence $\SDA^{\sup}(Z,\bar\gamma)=1$.
Therefore Theorem~30 yields at most an $O(1)$ (in particular, not $\omega(1)$) query lower bound in this regime.
\end{remark}

\begin{proof}
Fix $s,s'\in\mathcal S_k$.
For $(x,y)\in\mathcal X\times\mathcal Y$, under the null distribution $D_0$ we have
$x\sim\Unif(\mathcal X)$ and $y\sim\Unif(\mathcal Y)$ independent, hence
\[
D_0(x,y)=2^{-(n+1)}.
\]
Under $D_s$, we have $x\sim\Unif(\mathcal X)$ and $y=f_s(x)$ deterministically, so
\[
D_s(x,y)=2^{-n}\cdot \mathbf 1\{y=f_s(x)\}.
\]
Therefore
\[
\frac{D_s(x,y)}{D_0(x,y)} = 2\cdot \mathbf 1\{y=f_s(x)\},
\qquad\text{and hence}\qquad
\frac{D_s}{D_0}-1
=
\begin{cases}
1, & y=f_s(x),\\
-1, & y=-f_s(x),
\end{cases}
= y\,f_s(x).
\]

By Definition~27 (pairwise correlation),
\begin{align*}
\chi_{D_0}(D_s,D_{s'})
&=
\E_{(x,y)\sim D_0} \left[\left(\frac{D_s}{D_0}-1\right)\left(\frac{D_{s'}}{D_0}-1\right)\right] \\
&=
\E_{(x,y)\sim D_0} \big[yf_s(x)\cdot y f_{s'}(x)\big]
=
\E_{x\sim\Unif(\mathcal X)} \big[f_s(x)f_{s'}(x)\big].
\end{align*}
Moreover,
\[
f_s(x)f_{s'}(x)=(-1)^{\langle s,x\rangle+\langle s',x\rangle}=(-1)^{\langle s\oplus s',x\rangle},
\]
where $\oplus$ is bitwise XOR. Since $x$ is uniform on $\{0,1\}^n$,
\[
\E_x[(-1)^{\langle t,x\rangle}] =
\begin{cases}
1, & t=0,\\
0, & t\neq 0,
\end{cases}
\]
so $\chi_{D_0}(D_s,D_{s'})=1$ if $s=s'$ and $0$ otherwise.

Now let $\mathcal D'\subseteq\{D_s:s\in\mathcal S_k\}$ be any nonempty subset of size $m$.
Using Definition~28 (average correlation) and the above orthogonality, only the $m$ diagonal terms contribute:
\[
\rho(\mathcal D',D_0)
=
\frac{1}{m^2}\sum_{D_1,D_2\in\mathcal D'} \chi_{D_0}(D_1,D_2)
=
\frac{1}{m^2}\cdot m
=
\frac{1}{m}.
\]

We now compute $\SDA_{\mathbb Z}(Z,\bar\gamma)$.
Fix an integer $d\ge 1$ and write
\[
m_0:=\left\lceil \frac{N}{d}\right\rceil.
\]
Since $\rho(\mathcal D',D_0)=1/|\mathcal D'|$ is decreasing in $|\mathcal D'|$, the worst case among
$|\mathcal D'|\ge N/d$ occurs at the smallest allowed size, namely $|\mathcal D'|=m_0$.
Thus the defining condition for $\SDA_{\mathbb Z}(Z,\bar\gamma)$ is equivalent to
\[
\frac{1}{m_0}\le \bar\gamma.
\]

\noindent{\bf Regime 1: $\bar\gamma<1/N$.}
For every integer $d\ge 1$ we have $m_0=\lceil N/d\rceil \le N$, hence $1/m_0\ge 1/N>\bar\gamma$.
So no integer $d\ge 1$ is feasible and $\SDA_{\mathbb Z}(Z,\bar\gamma)=0$.

\noindent{\bf Regime 2: $\bar\gamma\ge 1/N$.}
We show $\SDA_{\mathbb Z}(Z,\bar\gamma)=\Theta(\bar\gamma N)$.

{\bf Lower bound.\enspace}
Let $d:=\max\{1,\lfloor \bar\gamma N/2\rfloor\}$.
If $d=1$, then $m_0=\lceil N\rceil=N$ and $1/m_0=1/N\le \bar\gamma$, so $d$ is feasible.
Otherwise $d=\lfloor \bar\gamma N/2\rfloor\ge 1$, so $d\le \bar\gamma N/2$ and hence $N/d\ge 2/\bar\gamma$.
Therefore
\[
m_0=\left\lceil \frac{N}{d}\right\rceil \ge \left\lceil \frac{2}{\bar\gamma}\right\rceil \ge \frac{1}{\bar\gamma},
\]
which implies $1/m_0\le \bar\gamma$. Thus $d$ is feasible and
$\SDA_{\mathbb Z}(Z,\bar\gamma)\ge \Omega(\bar\gamma N)$.

{\bf Upper bound.\enspace}
If $\bar\gamma\ge 1/2$, then $\bar\gamma N=\Theta(N)$ and trivially $\SDA_{\mathbb Z}(Z,\bar\gamma)\le N$,
so $\SDA_{\mathbb Z}(Z,\bar\gamma)=\Theta(\bar\gamma N)$.

Assume $\bar\gamma<1/2$ and let $d>4\bar\gamma N$. Then $N/d<1/(4\bar\gamma)$ and hence
\[
m_0=\left\lceil \frac{N}{d}\right\rceil \le \left\lceil \frac{1}{4\bar\gamma}\right\rceil < \frac{1}{\bar\gamma},
\]
where the last inequality uses $\bar\gamma<1/2$.
Therefore $1/m_0>\bar\gamma$, so the condition fails and no such $d$ is feasible.
Thus $\SDA_{\mathbb Z}(Z,\bar\gamma)\le O(\bar\gamma N)$ for $\bar\gamma<1/2$.

Combining the lower and upper bounds yields $\SDA_{\mathbb Z}(Z,\bar\gamma)=\Theta(\bar\gamma N)$ for $\bar\gamma\ge 1/N$.
\end{proof}

\begin{theorem}[Iteration/sample lower bound for $b$-bit mini-batch coordinate SGD on $k$-parity learning]
\label{thm:sgd-parity-lb}
Let $n\ge 1$ and $k\in\{1,\dots,n\}$, and let $N=\binom{n}{k}$.
For each $k$-sparse $s\in\mathcal S_k\subseteq\{0,1\}^n$, let $D_s$ be the realizable distribution over
$\mathcal X\times\{\pm1\}$ given by
\[
x\sim\Unif(\{0,1\}^n),\qquad y=f_s(x)=(-1)^{\langle s,x\rangle}.
\]
Consider a procedure that runs $T$ iterations of \emph{$b$-bit} coordinate mini-batch SGD with batch size $B$
(Definition~\ref{def:sgd-bbit}), and whose interaction with fresh examples $(x,y)\sim D_s$ is only through
the quantized per-example coordinate gradients (equivalently, through $1$-STAT$(b)$ access as in
Lemma~\ref{lem:sgd-is-1statb}). Let the procedure output a hypothesis $h:\{0,1\}^n\to\{\pm1\}$.

Assume that for every $s\in\mathcal S_k$ the procedure achieves nontrivial population error with probability
at least $\beta\ge 5/6$, namely
\[
\Pr  \left[\err_{D_s}(h)\le \frac14\right]\ \ge\ \beta,
\qquad\text{where}\qquad
\err_{D_s}(h):=\Pr_{(x,y)\sim D_s}[h(x)\neq y].
\]
Let $q:=TB$ be the total number of fresh examples used (equivalently, the number of $1$-STAT$(b)$ queries).
Then necessarily
\[
q\,2^{b}=\Omega\left(\sqrt{N}\right),
\]
and hence
\[
q=\Omega  \left(\frac{\sqrt{N}}{2^b}\right),
\qquad\text{and consequently}\qquad
T=\Omega  \left(\frac{\sqrt{N}}{B\,2^b}\right).
\]
\end{theorem}

\begin{proof}
{\bf Step 1: $b$-bit SGD is a $1$-STAT$(b)$ algorithm.\enspace}
By Lemma~\ref{lem:sgd-is-1statb}, a $T$-iteration run with batch size $B$ uses exactly
\[
q:=TB
\]
calls to $1$-STAT$(b)$.

{\bf Step 2: Simulate $1$-STAT$(b)$ by $\VSTAT$.\enspace}
Apply Theorem~B.4 in~\cite{pmlr-v65-feldman17c} with a fixed constant $\delta:=1/100$.
This yields an algorithm $\mathcal A'$ that succeeds with probability at least
$\beta-\delta> 2/3$ and uses at most
\[
Q = O(q\,2^b)
\]
queries to $\VSTAT(t)$ with
\[
t=\Theta  \left(\frac{q\,2^b}{\delta^2}\right)=\Theta(q\,2^b),
\]
where the hidden constants may depend on $\delta$ but $\delta$ is fixed.
In particular, by choosing $\delta$ sufficiently small as an absolute constant, we may assume that
whenever $q2^b\ge 1$ the corresponding parameter $t$ satisfies $t\ge 64$.

{\bf Step 3: Learning implies testing using one additional $\VSTAT(t)$ query.\enspace}
Define the promise testing problem $Z$: given oracle access to an unknown distribution $D$ that is
either $D_0$ (where $x\sim\Unif(\{0,1\}^n)$ and $y\sim\Unif(\{\pm1\})$ independent) or $D_s$ for some
unknown $s\in\mathcal S_k$, output $\mathrm{PLANTED}$ if $D=D_s$ and $\mathrm{NULL}$ if $D=D_0$.

Fix any $s\in\mathcal S_k$. Condition on the event that the learner outputs $h$ with
$\err_{D_s}(h)\le 1/4$, and consider the bounded query
\[
\phi_h(x,y) := \frac{1+h(x)y}{2}\in[0,1].
\]
Under $D_s$,
\[
\E_{(x,y)\sim D_s}[\phi_h(x,y)]
= \Pr[h(x)=y]
= 1-\err_{D_s}(h)
\ge \frac{3}{4}.
\]
Under $D_0$, since $y$ is independent uniform and $h(x)\in\{\pm1\}$,
\[
\E_{(x,y)\sim D_0}[h(x)y]=0
\qquad\Rightarrow\qquad
\E_{(x,y)\sim D_0}[\phi_h(x,y)] = \frac{1}{2}.
\]

Now make one additional $\VSTAT(t)$ query with $\phi_h$ and let $v$ be the oracle's response.
Output $\mathrm{PLANTED}$ iff $v\ge 5/8$, else output $\mathrm{NULL}$.

For the standard $\VSTAT(t)$ oracle \citep{doi:10.1137/16M1078471}, for any $[0,1]$-valued query
with mean $p$ the returned value satisfies
\[
|v-p|\le \max\Big\{\frac{1}{t},\,\sqrt{\frac{p(1-p)}{t}}\Big\}.
\]
Since $t\ge 64$, this error is at most $1/16$ both at $p=1/2$ and at $p=3/4$,
so the threshold $5/8$ separates the two cases:
\[
D=D_0:\ v \le \frac{1}{2}+\frac{1}{16}=\frac{9}{16}<\frac{5}{8},
\qquad
D=D_s:\ v \ge \frac{3}{4}-\frac{1}{16}=\frac{11}{16}>\frac{5}{8}.
\]
Thus, whenever $\err_{D_s}(h)\le 1/4$, this extra query yields a correct test.

Since $\mathcal A'$ outputs such an $h$ with probability at least $2/3$ on $D_s$,
the composed tester succeeds with probability at least $2/3$ as well.
It uses at most $Q+1=O(q2^b)$ queries to the \emph{same} oracle $\VSTAT(t)$.

{\bf Step 4: Apply the $\VSTAT$ lower bound.\enspace}
Apply Theorem~30 of \citet{reyzin2020statisticalqueriesstatisticalalgorithms} to the promise testing problem $Z$.
That theorem is parameterized by an accuracy $\bar\gamma>0$ and assumes oracle access to
$\VSTAT(\lceil c_1/\bar\gamma\rceil)$ for a universal constant $c_1>0$.

Fix another universal constant $c_2>0$ and set
\[
\bar\gamma := \frac{c_2}{t}.
\]
Then
\[
\left\lceil \frac{c_1}{\bar\gamma}\right\rceil
=
\left\lceil \frac{c_1}{c_2}\,t\right\rceil
=
\Theta(t).
\]
Since the $\VSTAT(t)$ accuracy guarantee is monotone in $t$ (larger $t$ means smaller allowed error), any response admissible for $\VSTAT(t)$ is also admissible for $\VSTAT(t')$ for every $t'\le t$. Therefore, by choosing the constant $c_2>0$ (in the definition $\bar\gamma=c_2/t$) sufficiently small so that
\[
\left\lceil \frac{c_1}{\bar\gamma}\right\rceil
=
\left\lceil \frac{c_1}{c_2}\,t\right\rceil
\le t,
\]
we ensure that oracle access to $\VSTAT(t)$ suffices to instantiate Theorem~30 at accuracy $\bar\gamma$ (up to constant factors in the parameterization).

Let $d:=\SDA(Z,\bar\gamma)$. By Lemma~\ref{lem:sda-parity},
\[
d=
\begin{cases}
\Theta(\bar\gamma N)=\Theta(N/t), & \text{if }\bar\gamma N\ge 1 \ \ (\text{i.e. } t\le c_2 N),\\
\Theta(1), & \text{if }\bar\gamma N<1 \ \ (\text{i.e. } t> c_2 N).
\end{cases}
\]
Theorem~30 implies that any algorithm that solves $Z$ with probability at least $2/3$
using only $\VSTAT(t)$ access must make at least $\Omega(d)$ oracle queries.

We have such a tester using $O(q2^b)$ queries. Therefore:
\begin{itemize}
\item If $t\le c_2 N$, then $d=\Theta(N/t)$ and hence
\[
q2^b=\Omega(N/t).
\]
With $t=\Theta(q2^b)$ from Step~2 this gives
\[
q2^b=\Omega  \left(\frac{N}{q2^b}\right)
\quad\Longrightarrow\quad
(q2^b)^2=\Omega(N)
\quad\Longrightarrow\quad
q=\Omega  \left(\frac{\sqrt N}{2^b}\right).
\]
\item If $t> c_2 N$, then $t=\Theta(q2^b)$ implies $q2^b=\Omega(N)$, which in particular implies
$q2^b=\Omega(\sqrt N)$ and hence $q=\Omega(\sqrt N/2^b)$ as well.
\end{itemize}
Finally, since $q=TB$, the lower bound on $q$ implies $T=\Omega  \left(\frac{\sqrt N}{B\,2^b}\right)$.
\end{proof}

\end{document}